 \let\mathscr\relax
\newtheorem{theorem}{\bf {Theorem}}
\newtheorem{remark}{{\bf{Remark}}}
\newtheorem{definition}{\bf {Definition}}
\newtheorem{lemma}{\bf {Lemma}}
\newtheorem{assumption}{\bf {Assumption}}
\newtheorem{proposition}{{\bf Proposition}}
\newcommand{\SNR}{{\mathsf{SNR}}}
\newcommand{\st}{{\mathrm{s.t.}}}
\newcommand\argmin{\operatornamewithlimits{argmin}}
\renewcommand{\algorithmicrequire}{\textbf{Input:}}
\newcommand{\ds}{\displaystyle}
    \titlespacing{\section}{0pt}{2ex}{1ex}
    \titlespacing{\subsection}{0pt}{1ex}{0ex}
    \titlespacing{\subsubsection}{0pt}{0.5ex}{0ex}
\newcommand{\ul}{\mathtt{ul}}
\newcommand{\dl}{\mathtt{dl}}
\newcommand{\FedAvg}{\mathsf{FedAvg}}
\newcommand{\FedFog}{\mathsf{FedFog}}
\g@addto@macro\normalsize{%
 \setlength\abovedisplayskip{2pt}
 \setlength\belowdisplayskip{2pt}
 \setlength\abovedisplayshortskip{2pt}
 \setlength\belowdisplayshortskip{2pt}
}
\titlespacing{\section}{0pt}{2pt}{0pt}
\begin{document}
\bstctlcite{IEEEexample:BSTcontrol}

\title{\huge{FedFog: Network-Aware Optimization of  Federated Learning over Wireless Fog-Cloud Systems}}
\author{
	\IEEEauthorblockN{Van-Dinh Nguyen,  Symeon Chatzinotas,  Bj$\ddot{\text{o}}$rn Ottersten, and Trung Q. Duong
 	\vspace{-50pt}
}
\\
\thanks{V.-D. Nguyen, S. Chatzinotas, and B. Ottersten are with the Interdisciplinary Centre for Security, Reliability and Trust (SnT), University of Luxembourg, L-1855 Luxembourg City,
Luxembourg (e-mail: \{dinh.nguyen, symeon.chatzinotas, bjorn.ottersten\}@uni.lu). }
\thanks{T. Q. Duong  is with the School of Electronics, Electrical Engineering
and Computer Science, Queen's University Belfast, Belfast BT7 1NN,
United Kingdom (e-mail: trung.q.duong@qub.ac.uk).}
\thanks{This work was supported in part by the ERC AGNOSTIC project, ref. H2020/ERC2020POC/957570.}
	}

\maketitle

\begin{abstract}

Federated learning (FL) is capable of performing large distributed machine learning tasks across multiple edge users by periodically aggregating trained local parameters. {\color{black}To address key challenges of enabling FL over a wireless fog-cloud system (e.g., non-i.i.d. data, users' heterogeneity), we first propose an efficient FL algorithm based on  Federated Averaging (called $\FedFog$) to perform the local aggregation of gradient parameters at fog servers and global training update at the cloud}. Next, we employ $\FedFog$ in wireless fog-cloud systems by investigating a novel network-aware FL optimization problem that strikes the balance between the global loss and completion time. An iterative algorithm is then developed to obtain a precise measurement of the system performance, which helps design an efficient stopping criteria to output an appropriate number of global rounds. To mitigate the straggler effect, we propose a flexible user aggregation strategy that trains fast users first to obtain a certain level of accuracy before allowing slow users to join the global training updates. Extensive numerical results using several real-world FL tasks are provided to verify the theoretical convergence of $\FedFog$. We also show that the proposed co-design of FL and communication is essential to substantially improve resource utilization while achieving comparable accuracy of the learning model.
\end{abstract}
\begin{IEEEkeywords}
Distributed learning, edge intelligence, fog computing, federated learning, hierarchical fog/cloud,  inner approximation, resource allocation.
\end{IEEEkeywords}

\section{Introduction} \label{Introduction} 
Nowadays, Internet-connected devices are often equipped with advanced sensors that allow them to collect and store large amounts of data locally. This combined with the high computing capability of edge devices promotes the fog-cloud computing paradigm which brings data processing, storage, and intelligent control to the proximity of the network edge \cite{ChiangIoT16}. Besides, emerging technologies (e.g., autonomous driving, industry automation) are  relying heavily on data-driven machine learning (ML) approaches to enable near real-time applications \cite{Cisco2019Whitepaper}. However, traditional ML models, which require that all local data is sent to a centralized server for model training, may not be practical due to high  \textit{round-trip}   delays, energy constraints and privacy-sensitive concerns of edge devices. Fortunately, distributed ML is practically  suited for the fog-cloud computing, which aims at leveraging the advantages of the increasing storage and computing capabilities of edge devices to train ML models  while keeping device datasets local.

Federated learning (FL) is an emerging distributed ML framework that can address many challenges in implementing ML over networks \cite{ParkProIEEE19,park2020communicationefficient}. The most widely used and effective FL algorithm  is  Federated Averaging ($\FedAvg$) \cite{McMahan2017}. The FL optimization is commonly solved by an iterative procedure, where each iteration includes local training update and global aggregation. In particular, in each global round, edge devices compute local updates based on their available datasets, typically using gradient descent methods, which are sent back to the sever for global aggregation. Then, the server updates the new global model and broadcasts it to all devices to start the next global round of training. The main advantage of $\FedAvg$ over traditional distributed ML algorithms is that each device runs a series of local updates before communicating with the server. This process results  in less global updates and reduced communication costs \cite{McMahan2017,konevcny2016federated,wang2020local}. 

\subsection{Related Works}
In this section, we focus on  the literature review  of  FL algorithm and wireless FL performance optimization. 

\noindent\textbf{FL and challenges}. Inspired by $\FedAvg$, FL has attracted considerable attention in recent years in the ML community (see a comprehensive survey in \cite{kairouz2019advances}). Several works have attempted  to address the main challenges of FL (e.g., non independently and identically distribute (non-i.i.d.) data among devices and resource constraints) for improving communication-efficiency \cite{StichICLR2019,ZhangJMLR13,WangJSAC2019}, incentive  mechanism \cite{KangIoT19}, privacy-preserving \cite{DuchiACM18,mcmahan2018learning}, and guaranteeing fairness \cite{LiICLRFair2020} and robustness \cite{xie2019slsgd}. However, these works mainly focused on characterizing and optimizing the FL performance on over-simplified and/or unrealistic communication models, and the impact of wireless factors on FL is often not taken into account.

\noindent\textbf{Wireless FL}. Recently, there has been an increasing effort in designing communications protocols and computational aspects of the FL implementation in wireless networks. To improve the convergence speed of FL algorithms,   scheduling policies were proposed in \cite{YangTCOM2020}, where only a portion of users (UEs) are  scheduled for updates at each global round. The authors in \cite{ZhengJSAC20} investigated the impact of  various quantization and transmission methods for wireless FL. Various join FL model and radio resource allocation schemes have been proposed in \cite{MChenFL2019,DinhFL2019,KYang2018} to minimize either the global loss function or the training time. While many works have focused on  efficient wireless communications between the server and UEs to support FL \cite{VuCellfreeML2019,NguyenIoTFL2020,ZYang2019}, it is still challenging to employ them in distributed environments due to the causal setting (i.e., the loss value and associated  costs of future rounds are not available in advance). Notably, Mahmoudi \textit{et al. }\cite{MahmoudiICC20} developed an iterative distributed algorithm which characterizes the end-to-end delay as the per-iteration cost. {\color{black} Liu \textit{et al. }\cite{LuminICC2020}  proposed a client-edge-cloud hierarchical FL algorithm, called HierFAVG, which allows  performing partial model aggregation at edge servers to  improve communication-efficiency. However, the realistic cost function taking into account completion time was not considered in these works, and also,  computation and wireless factors (i.e., the  transmit  power,  UEs’  CPU  clock  speed  and  bandwidth  coefficients) were not jointly optimized.}

\noindent\textbf{Hierarchical FL-supported  fog-cloud networks}. Despite its  potential, there have been only a few attempts to improve the resource utilization  of FL-supported  fog-cloud networks in the literature. Specifically, the authors in \cite{HosseinalipourComMag20} recently introduced a new architecture, called Fog Learning (FogL), which leverages the multi-layer network structure of fog computing to handle ML tasks. This work was extended in \cite{hosseinalipour2020multistage} to develop a multi-stage hybrid model training, which incorporates  multi-stage parameter relaying across networks layers. The work in \cite{SahaIoT20} proposed $\mathsf{FogFL}$ to reduce energy consumption and communication latency, where  fog nodes act as local aggregators to share location-based information for applications with the similar environment. By taking into account both the computational and communication costs, it was shown in \cite{TuINFOCOM20} that the network-aware optimization greatly reduces the cost of model training. {\color{black}Very recently, the authors in \cite{JietPDS21} and \cite{SiqiTWC20} studied a general cost optimization of energy consumption and delay minimization within one global iteration. However, the model training was not jointly optimized in the global cost minimization. Wen \textit{et al.} \cite{WanliTWC22} proposed a joint  design of the scheduling and resource allocation scheme in an hierarchical federated edge learning, allowing a subset of helpers to upload their updated gradients in each round of the model training. A privacy-preserving FL in fog computing was also proposed in \cite{ZhouIoTJ}, requiring each device to meet different privacy  to resist data attacks. Here our focus is the effects of communication and edge devices' computation capability  in  realistic causal settings.}

\noindent\textbf{Straggler effect in FL} is a major bottleneck in implementing FL over wireless networks, i.e., when a user has poor channel quality and significantly low computation capability, resulting in higher training time. The promising approaches to  mitigating the straggler effect include user sampling \cite{LiICLR2020,McMahan2017,NguyenIoTFL2020} and user selection \cite{ChenTWC2020,VuCellfreeML2020,XiaTWCFL2020}, which require only a subset  of users to participate in the training process. However, to the best of our knowledge, these works neither consider a co-design between model training and communication nor simultaneously minimize  associated training costs.

\subsection{Research Gap and Main Contributions}
{\color{black}Despite the potential benefits offered by FL, there are still several inherent challenges in implementing hierarchical FL over  wireless fog-cloud networks, including but not limited to high communication costs, heterogeneity of edge devices (both datasets and computational capabilities), limited wireless resources and  straggler effects. Though in-depth results of optimizing communication for FL were presented in \cite{YangTCOM2020,ZhengJSAC20,MChenFL2019, DinhFL2019,KYang2018,VuCellfreeML2019,NguyenIoTFL2020,ZYang2019,VuCellfreeML2020,JietPDS21,SiqiTWC20}, they are not very practical for the actual implementation because the unique characteristics of the federated environment have not been fully addressed in these works. Moreover, it is often considered that the communication and model training are optimized separately  \cite{VuCellfreeML2020,VuCellfreeML2019,DinhFL2019,NguyenIoTFL2020,LuminICC2020,JietPDS21,SiqiTWC20}. Joint learning and communication \cite{MChenFL2019} is done in the sense  that all  local losses are available at the server in advance, which violates the FL principle. We show in this paper that communication and FL model training should be optimized on different time scales in each global round. In \cite{VuCellfreeML2019,DinhFL2019,ZYang2019,MChenFL2019,NguyenIoTFL2020}, the upper bound on the convergence of FL algorithms is characterized by providing the trade-off between the convergence rate and  number of global aggregations, which is commonly known in  ML literature, but not taking into the cost of model training. Although the higher the number of global aggregations, the lower the training loss that can be obtained, the associated cost increases significantly. This phenomenon promotes  a co-design of hierarchical FL and communication that strikes a good balance between the accuracy of the learning model and the running cost. In particular, the co-design should provide an adequate number of global rounds with minimal completion time while still guaranteeing the comparable accuracy of the FL model.

In this paper, we propose a novel network-aware optimization framework to enable hierarchical FL over a cloud-fog system, taking into account all the issues mentioned previously. In the considered system, the cloud server (CS) and fog servers (FSs) do not have access to the UEs' local datasets, thus preserving data privacy. The main goal is to minimize the global loss function and  completion time in a single framework, two prime objectives in  FL algorithms, which are conflicting. A direct application of an offline algorithm to solve such a problem is inapplicable as it requires complete information at  the beginning of the training process, which is impractical in federated settings. Towards a  realistic causal setting, we decompose the network-aware optimization problem into two sub-problems, namely the hierarchical FL  and  resource allocation, which are executed in different time slots. The $\FedFog$ is capable of performing a flexible user aggregation which allows fewer UEs to participate in the training process in each round, resulting in low completion time. The convergence of $\FedAvg$ with partial user participation has been studied in \cite{LiICLR2020}. Our main contributions are summarized as follows:}
\begin{itemize}
    {\color{black}\item We first propose an efficient FL algorithm for a fog-cloud system (called $\FedFog$) based on  $\FedAvg$ framework \cite{McMahan2017}. Acting as a participant, each FS plays the role of a local aggregator that collects the  local gradient parameters trained at UEs and then forwards them to CS for global training update, reducing network traffic of  backhaul links between FSs and CS. Compared to \cite{LuminICC2020}, we explicitly provide the new convergence upper bound of $\FedFog$  with a learning rate decay, taking into account  non-i.i.d. data and  stochastic noise of the random sampling of mini-batchs. 
    \item We formulate a novel network-aware FL  problem for  wireless fog-cloud systems by jointly optimizing the transmit power, UEs' CPU clock speed and bandwidth coefficients
    whose goal is to minimize  the global loss and completion time in a single framework while meeting UEs' energy constraints. To solve the resource allocation sub-problem of of join computation and communication, we develop a simple yet efficient path-following procedure based on inner approximation (IA) framework \cite{Marks:78}, in which newly convex approximated functions are derived to tackle nonconvex constraints.
    \item We characterize the discrete convex property of the general cost function to design a stopping criteria to produce  a desirable number of global rounds without an additional cost. We then propose the network-aware  optimization  algorithm that solves the FL and communication problems in a distributed fashion.}
   \item To further mitigate the straggler effect, we relax the objective function to   favor strong UEs. Then, we determine a time threshold that allows collecting only local  gradient  parameters  of strong  UEs. Once a certain accuracy level is reached, the time threshold is increased to allow slower UEs to join the training process, thereby  reducing  completion time without compromising the learning accuracy.
   \item We empirically evaluate the performance of the  proposed algorithms using  real  datasets. The results show that the  $\FedFog$-based algorithms can improve network resource utilization while achieving good performance in terms of convergence rate and  accuracy of the learning model.
\end{itemize}

\textit{Paper Organization and Mathematical Notations:}
The rest of this paper is organized as follows. Preliminaries and definitions are described in Section \ref{PreliminariesDefinitions}. The proposed
$\FedFog$ and the expected convergence rate are given in Section \ref{sec:FedFog}. The network-aware optimization algorithms are presented in Section \ref{sec:FedFogOpti}. Numerical results are analyzed in Section \ref{sec:Numericalresults}, while  conclusions are draw in Section \ref{sec:Conclusion}.
The main notations and symbols are summarized in Table \ref{table:notation}.
\begin{table}[t]
\centering
	\captionof{table}{Summary of Main Notations and Symbols}
	\label{table:notation}
 	\fontsize{11}{10}\selectfont
	\vspace{-5pt}
	\scalebox{0.85}{
		\begin{tabular}{l|l}
			\hline
			$F(\mathbf{w})$ \& $F_{ij}(\mathbf{w})$ & Global loss function and local loss function of UE $(i,j)$ \\
			$\mathbf{w}^g$ \& $\mathbf{w}^g_{ij,\ell}$ &  Global model at round $g$ and local model of UE $(i,j)$ \\ 
			& at global round $g$ and local iteration $\ell$\\
			$\mathcal{D}_{ij}$ \& $D_{ij}$ & Set of data samples and number of samples of UE $(i,j)$ \\
			$\mathcal{I}$ \& $\mathcal{J}$ & Sets of fog nodes and UEs, respectively \\
			$\mathcal{J}_i$ \& $J_i$ & Set and number of UEs at fog node $i$\\
			$\mathcal{G}$ \& $\mathcal{L}$ & Sets of global rounds and local iterations, respectively\\
			$\mathcal{S}(g)$ \& $S(g)$ & Set and number of UEs selected at round $g$\\
			$g$ \& $\ell$ & Indexes of global round and local iteration \\
			$\mathcal{B}_{ij,\ell}^g$ & The mini-batch with size $B$ \\
			$\eta^g$& The step size (learning rate) at global round $g$\\
			$\mathbf{w}^*$ & Optimal global model\\
			$p_{ij}(g)$ & The transmit power coefficient of UE $(i,j)$ at  round $g$\\
			$\mathsf{f}_{ij}(g)$  & The CPU clock speed of UE $(i,j)$ at  round $g$\\
			$\beta_{ij}(g)$ & The UL bandwidth fraction allocated  UE $(i,j) $ at  round $g$\\
			\hline
			$\left<\mathbf{x},\mathbf{y}\right>$ & Inner product of  vectors $\mathbf{x}$ and $\mathbf{y}$ \\
			$\|\mathbf{x}\|$ \& $|z|$ &  Euclidean
			norm of vector $\mathbf{x}$ and absolute value of $z$, respectively \\
			$\mathbf{h}^{H}$  & Hermitian transpose of vector $\mathbf{h}$\\
			$\nabla F(\cdot)$ & Gradient of the function $F(\cdot)$\\
			$\mathbb{E}\{\cdot\}$ & Expectation of a random variable\\
			$\mathcal{CN}(0,\sigma^2)$ & Circularly-symmetric complex Gaussian random variable \\
			& with zero mean and variance  $\sigma^2$\\
			$\mathbb{C}$ \& $\mathbb{R}$ & Sets of complex and real numbers, respectively\\
			\hline		   				
		\end{tabular}
	}
\end{table}

\section{Preliminaries and Definitions} \label{PreliminariesDefinitions}
\subsection{Wireless Fog-Cloud Computing Model}
\begin{figure}[!ht]
	\centering
	\includegraphics[width=0.6\columnwidth,trim={0cm 0.0cm 0cm 0.0cm}]{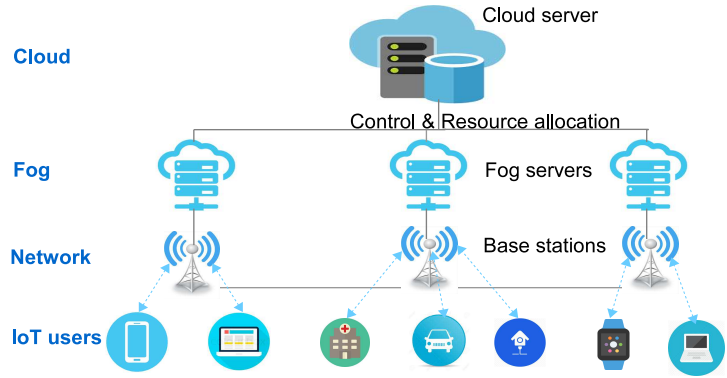}
		\vspace{-2pt}
	\caption{A generic architecture of wireless fog-cloud systems.}
	\label{fig:systemmodel}
\end{figure}

A generic fog-cloud computing architecture  consists of three layers \cite{ZhouIoTJ,LeeTWC2019}, as illustrated in Fig. \ref{fig:systemmodel}.
\begin{itemize}
    \item \textit{Cloud layer} contains large-scale cloud data centers (CDCs) equipped with powerful processing units,  providing off-premise computing services to IoT users (or UEs for short). In the context of FL, CS is mainly responsible to collect  local models generated by  IoT UEs to produce a new global model, which is then sent back to them through FSs to start a new training  round.
    \item \textit{Fog layer} comprises a set of FSs  deployed close to IoT UEs that can perform local data processing services. Each FS acts as a local aggregator to exchange the model between CS and UEs, which is connected to a base station (BS) via the wired backhaul link, while BS communicates to its UEs through wireless links.
    \item \textit{User layer} deploys a large number of IoT UEs  
 collected measurement data from the environment that is used to train ML algorithms. Since IoT UEs are placed in the vicinity of their FSs, allowing each UE to transmit the trained parameters to its FS for local aggregations that can obtain low-latency high quality-of-experience (QoE) of IoT UEs. 
\end{itemize}

We consider a fog-cloud network consisting of a set $\mathcal{I}\triangleq\{1,2,\cdots,I\}$ of $I$ fog servers (each fog server already associated with one BS) and a set $\mathcal{J}\triangleq\{1,2,\cdots,J\}$ of $J$ UEs. We assume that FS  $i\in\mathcal{I}$ serves a separate set $\mathcal{J}_i\triangleq\{1,2,\cdots, J_i\}$ of $J_i$ UEs with $J=\sum_{i\in\mathcal{I}}J_i$, and each UE is associated with one FS only. In addition,  \textit{one learning process} (i.e., the entire implementation of an $\FedFog$ algorithm until convergence) requires $G$ global rounds, each with the same number of local updates, $L$. Let us denote  $\mathcal{G}\triangleq\{0,1,\cdots,G-1\}$ and $\mathcal{L}\triangleq\{0,1,\cdots,L-1\}$ as the sets of $G$ global rounds and $L$ local iterations, respectively. The number of global and local iterations may depend on the specific ML application. 

 We denote  user $j\in\mathcal{J}_i$ associated with FS $i\in\mathcal{I}$ by UE ($i,j$), which collects a local input data set $\mathcal{D}_{ij}$ of $D_{ij} = |\mathcal{D}_{ij}|$ data samples. Considering non-i.i.d. distributed data across the network, we assume that $\mathcal{D}_{ij}\cap \mathcal{D}_{i'j'} =\emptyset,\ \forall (i,j)\neq (i',j')$. Each element $\mathbf{x}_d\in\mathcal{D}_{ij}$  is  an input sample vector with $q>1$ features. In a typical learning algorithm, we use $(\mathbf{x}_d,y_d)$ to express the data sample $d$, where $y_{d}\in\mathbb{R}$ is the output (label) for the sample $\mathbf{x}_{d}$. 

\begin{remark}
{\color{black}In general, it is possible for each UE to forward its local model to more than one FSs to reduce the UL communication delay. However, this will likely lead to a case where  some UEs may not be willing to share their proprietary information (i.e., computational capability and battery level) to strange FSs that will be used to optimize  system performance. In the proposed scheme, FS $i$ can be seen as a trusted fog server to a group of $J_i$ UEs based on their  \textit{prior} agreement, which further alleviates  privacy concerns of data sharing. In many FL applications (e.g., healthcare industry, FinTech, insurance sector and IoT), FSs are often deployed by private organizations to keep their privacy and data preserved. }
\end{remark}

\subsection{Federated Learning Model}\label{Sec:PreliminariesDefinitionsFL}
\begin{definition}
Throughout the paper, the model produced by FSs and UEs is referred to as ``local aggregation model'' and ``local model,'' respectively, while that averaged at CS is called ``global  model.''
\end{definition}
\noindent\textbf{Local loss function:} Following
the commonly used FL framework \cite{McMahan2017,park2020communicationefficient}, the main goal  is to jointly learn the \textit{global model parameter} $\mathbf{w}\in\mathbb{R}^{q}$ (e.g., a neural network or support vector machine \cite{ShwartzCUP2014,NguyenIoTFL2020}) that
produces  the output $y_d$ given the input sample $\mathbf{x}_d$ through the local loss function $f(\mathbf{w}, \mathbf{x}_d,y_d)$ \cite{park2020communicationefficient}. On the  local data set, the loss function at UE $(i,j)$  can be generally defined as
\begin{IEEEeqnarray}{rCl}\label{eq:locallossfuntions}
F_{ij}(\mathbf{w}|\mathcal{D}_{ij}) &\triangleq& \frac{1}{D_{ij}}\sum_{d\in\mathcal{D}_{ij}}f(\mathbf{w}, \mathbf{x}_d,y_d).
\end{IEEEeqnarray}

\noindent\textbf{The federated learning problem:} Since the overall data distributions on UEs are unknown, we consider the empirical loss function across the entire network data set $\mathcal{D} = \cup_{ i,j}\mathcal{D}_{ij}$, defined as
\begin{IEEEeqnarray}{rCl}\label{eq:totalloss1}
F(\mathbf{w}|\mathcal{D})\triangleq \frac{\sum_{i\in\mathcal{I}}\sum_{j\in\mathcal{J}_i}F_{ij}(\mathbf{w}|\mathcal{D}_{ij})}{J}.
\end{IEEEeqnarray}
The aim of the common FL algorithms is to find the optimal model $\mathbf{w}^*$ that minimizes the global loss value of the following  optimization problem:
\begin{equation}\label{eq:globalmini}
    \mathbf{w}^* = \arg\min_{\mathbf{w}\in\mathbb{R}^q} F(\mathbf{w}|\mathcal{D}).
\end{equation}
To achieve this in a distributed fashion, problem \eqref{eq:globalmini} is separately decomposed into $J$ independent sub-problems that can be solved locally at UEs. Here, we  directly adopt $\FedAvg$ \cite{McMahan2017}  which iteratively minimizes the local loss \eqref{eq:totalloss1} using gradient descent technique before communicating with CS for global update. The key steps of $\FedAvg$ training procedure used for the considered  fog-cloud system can be summarized as follows:
\begin{enumerate}
    \item \textbf{Global model downloading}: At the start of global round $g$, CS broadcasts the latest global model $\mathbf{w}^g$ to all FSs, and then  FS $i$ broadcasts $\mathbf{w}^g$ to $J_i$ UEs, $\forall i\in\mathcal{I}$.
    \item \textbf{Local training update}: UE $(i,j)$ sets  $\mathbf{w}^g_{ij,0}:=\mathbf{w}^g$ and then updates the local parameters for $L>1$ local iterations as
    \begin{equation}\label{eq:localtraiingupdates}
        \mathbf{w}^g_{ij,\ell+1} := \mathbf{w}^g_{ij,\ell} - \eta^g\nabla F_{ij}(\mathbf{w}^g_{ij,\ell}|\mathcal{D}_{ij}),\ \ell = 0,1,\cdots, L-1
    \end{equation}
    where $\eta^g > 0$ is the step size (learning rate), which is often decreased over time.
    \item \textbf{Local model uploading}: UE $(i,j)$ with $j\in\mathcal{J}_i$ sends $\mathbf{w}^g_{ij,L}$ back to FS $i, \forall i\in\mathcal{I}$, and then FS $i$ forwards  $\mathbf{w}^g_{ij,L}$ to CS for averaging.
    
    \item \textbf{Global training update at CS}: CS performs global training update as
    \begin{equation}
        \mathbf{w}^{g+1} := \frac{\sum_{i\in\mathcal{I}}\sum_{j\in\mathcal{J}_i}\mathbf{w}^g_{ij,L}}{J}.
    \end{equation}
    
    \item Set $g:=g+1$ and repeat Steps 1-4  until convergence.
\end{enumerate}

Compared to prior works \cite{DinhFL2019,MChenFL2019,ZYang2019,VuCellfreeML2019,KYang2018,NguyenIoTFL2020}, one of the key challenges here is how  FSs can efficiently convey trained local models  to CS while ensuring convergence. We note that  forwarding  all local models from FSs to CS (e.g., \cite{VuCellfreeML2019,VuCellfreeML2020} in the context of cell-free networks) will increase the backhaul overhead traffic , which becomes prohibitive in a large-scale system. The proposed approach for local aggregations will be discussed in the sequel. 

\section{$\FedFog$: Proposed  Federated Learning Algorithm Design}\label{sec:FedFog}
We first develop the $\FedFog$ algorithm (Section \ref{sub:FedFogAlg}) and then provide a detailed convergence analysis (Section \ref{sub:FedFogConver}). 

\subsection{Proposed $\FedFog$ Algorithm Design}\label{sub:FedFogAlg}
\begin{figure}[!ht]
	\centering
	\includegraphics[width=0.8\columnwidth,trim={0cm 0.0cm 0cm 0.0cm}]{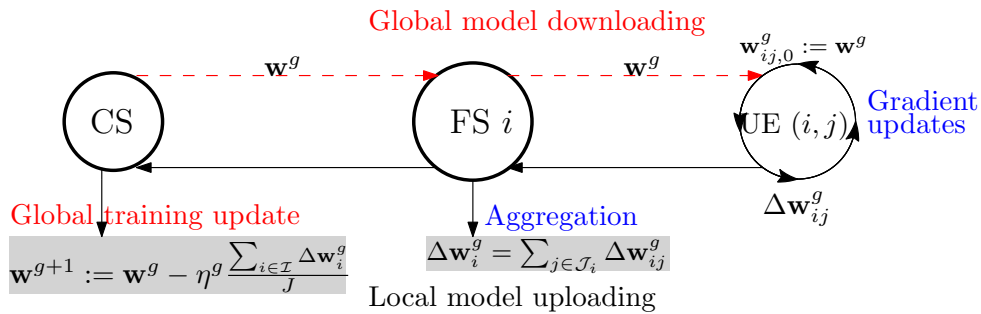}
	\vspace{-2pt}
	\caption{One $\FedFog$ update between global rounds $g$ and $g+1$. }
	\label{fig:FLprocess}
\end{figure}

Similar to  the extension of $\FedAvg$ to  the  fog-cloud system presented in Section \ref{Sec:PreliminariesDefinitionsFL}, the training procedure of the proposed  $\FedFog$ algorithm  is detailed in Fig. \ref{fig:FLprocess}.  With the latest global model $\mathbf{w}^g$ at round $g$, UE ($i,j$) computes $L$ gradient updates on the local data. FSs aggregate the received local gradient parameters, and then convey them to CS to update the new global model $\mathbf{w}^{g+1}$, which is then sent back to  UEs through FSs to begin a new global  round. The dimension of learning parameter vectors is the same for every layer.

\noindent\textbf{Local gradient update}: {\color{black}In $\FedFog$, UEs will send their gradient parameters to FSs and CS, instead of the trained local models in \eqref{eq:localtraiingupdates}. When the amount of (local) training data is large, it is often impractical for IoT UEs to compute local updates with the full batch gradient using deterministic  gradient  descent (DGD). Unlike DGD in \cite{LuminICC2020},  we use stochastic gradient descent (SGD)  method to  compute the gradient on mini-batches}. Let $\mathcal{B}_{ij,\ell}^{g}$ be the mini-batch with size $B = |\mathcal{B}_{ij,\ell}^{g}|$,  randomly sampled from $\mathcal{D}_{ij}$ of UE $(i,j)$ at the $\ell$-th local iteration of round $g$. The local parameter estimates in \eqref{eq:localtraiingupdates} for UE $(i,j)$ is revised accordingly as
 \begin{equation}\label{eq:localtraiingupdatesMinibatch}
        \mathbf{w}^g_{ij,\ell+1} := \mathbf{w}^g_{ij,\ell} - \eta^g\nabla F_{ij}(\mathbf{w}^g_{ij,\ell}|\mathcal{B}_{ij,\ell}^g),\ \ell = 0,1,\cdots, L-1
    \end{equation}
where the stochastic gradient is computed by
 \begin{equation}\label{eq:gradientupdatesMinibatch}
       \nabla F_{ij}(\mathbf{w}^g_{ij,\ell}|\mathcal{B}_{ij,\ell}^g) := \frac{1}{B}\sum_{d\in\mathcal{B}_{ij,\ell}^g}\nabla f(\mathbf{w}^g_{ij,\ell}, \mathbf{x}_d,y_d).
    \end{equation}
For unbiased estimate of gradient, the condition $\mathbb{E}\{\nabla F_{ij}(\mathbf{w}^g_{ij,\ell}|\mathcal{B}_{ij,\ell}^g)\} = \nabla F_{ij}(\mathbf{w}^g_{ij,\ell}|\mathcal{D}_{ij})$ should be satisfied. {\color{black}In particular, the mini-batch for each UE changes for every local iteration but its size is fixed during the whole training process.}  The total stochastic gradient updates of UE $(i,j)$ at round $g$ is
 \begin{IEEEeqnarray}{rCl}\label{eq:totalgradientupdatesMinibatch}
\Delta\mathbf{w}_{ij}^g \triangleq \sum_{\ell\in\mathcal{L}}\nabla F_{ij}(\mathbf{w}^g_{ij,\ell}|\mathcal{B}_{ij,\ell}^g)
\end{IEEEeqnarray}   
which  also implies that $\mathbf{w}^g_{ij,L} - \mathbf{w}^g_{ij,0} = -\eta^g\Delta\mathbf{w}_{ij}^g$.

\noindent\textbf{Local aggregation and global update}: After $L$ local updates, FS $i$ will periodically aggregate  gradient parameters  as:
\begin{equation}\label{eq:Aggregation}
    \Delta\mathbf{w}_i^{g} :=  \sum_{j\in\mathcal{J}_i} \sum_{\ell\in\mathcal{L}}\nabla F_{ij}(\mathbf{w}^g_{ij,\ell}|\mathcal{B}_{ij,\ell}^g) = \sum_{j\in\mathcal{J}_i}\Delta\mathbf{w}_{ij}^g. 
\end{equation}
 All FSs then send their aggregated gradient parameters to CS for averaging (i.e., global training update), as commonly done:  
\begin{IEEEeqnarray}{rCl}\label{eq:globaltraining}
    \mathbf{w}^{g+1}  := \mathbf{w}^{g} -\eta^g  \frac{\sum_{i\in\mathcal{I}}\sum_{j\in\mathcal{J}_i}\Delta\mathbf{w}_{ij}^g }{J}
    =  \mathbf{w}^{g} -  \eta^g\frac{\sum_{i\in\mathcal{I}}\Delta\mathbf{w}^{g}_{i} }{J}. 
\end{IEEEeqnarray}
 Once $\mathbf{w}^{g+1}$ is calculated, it will be sent back to  UEs to start  a new global round. The proposed $\FedFog$ algorithm  is summarized in Algorithm \ref{alg:FedFog}. We have the following remarks:

\begin{algorithm}[t]
	\begin{algorithmic}[1]
		\fontsize{9}{9}\selectfont
		\protect\caption{$\FedFog$: Proposed Federated
Learning for  Fog-Cloud Systems}
		\label{alg:FedFog}
		\STATE \textbf{Input:}  $L$, $G$, $I$, $J_i$, and  $\mathcal{D}_{ij}$,    $\forall i,j$
		\STATE \textbf{Initial parameters at CS:} Initialize the global model $\mathbf{w}^{0}$ and learning rate $\eta^{0}$
		\FOR{ $g=0,1,\cdots,G-1$}
		
			 \STATE CS broadcasts $\mathbf{w}^g$ to all FSs

	     \FOR{$i\in\mathcal{I}$ \textit{in parallel}}
	         \STATE  FS $i$ broadcasts $\mathbf{w}^g$ to $J_i$ UEs 
	         \FOR{$j\in\mathcal{J}_i$ \textit{in parallel}}
	         
	            \STATE  Overwrite $\mathbf{w}^g_{ij,0}:=\mathbf{w}^g$
	             
	             \FOR{$\ell=0,1,\cdots,L-1$}

			      \STATE UE $(i,j)$ randomly samples a new mini-batch $\mathcal{B}_{ij,\ell}^g$ with size $B$ and computes the gradient $\nabla F_{ij}(\mathbf{w}^g_{ij,\ell}|\mathcal{B}_{ij,\ell}^g)$
			      
			      \ENDFOR
			      \STATE UE $(i,j)$ sends $\Delta\mathbf{w}_{ij}^g \triangleq \sum_{\ell\in\mathcal{L}}\nabla F_{ij}(\mathbf{w}^g_{ij,\ell}|\mathcal{B}_{ij,\ell}^g)$ to FS $i$ 
			\ENDFOR
			   \STATE  FS $i$ aggregates all gradient parameters $\Delta\mathbf{w}_i^{g} :=   \sum_{j\in\mathcal{J}_i}\Delta\mathbf{w}_{ij}^g$ and then forwards it  to CS for averaging
			\ENDFOR
		   \STATE CS performs global training update $\mathbf{w}^{g+1} := \mathbf{w}^{g} -  \eta^g\frac{\sum_{i\in\mathcal{I}}\Delta\mathbf{w}^{g}_{i} }{J}$
		\ENDFOR
		
\STATE \textbf{Output:}	Final global model $\mathbf{w}^G$
		\end{algorithmic} 
\end{algorithm}

\begin{itemize}
    \item Similar to the literature on FL, our method does not require UEs  to transfer their raw data to FSs and CS,  improving privacy of training data and eliminating communication overhead.
    \item In the proposed  $\FedFog$, the main challenge is to compute the global training update at CS, as given in  \eqref{eq:globaltraining}. In FL with one server and multiple UEs \cite{NguyenIoTFL2020, MChenFL2019,DinhFL2019}, local models will be  uploaded directly from UEs to the server. This, however, is prohibitive in fog-cloud systems, where CS is often located  away from end UEs. The reasons are two-fold: $i)$ It may require extremely high energy consumption of UEs to transmit their local models via wireless links, even not possible to reach the main server due to UEs' limited-battery; $2)$  It may also cause heavy-communication burdens and high-latency communications due to a very large number of uploading parameters and long-distance transmission. In addition, if each FS naively forwards all  received gradient parameters to CS then it will induce very high network traffic, especially with a large number of UEs.   In this regard, the  local aggregation  at FSs given in \eqref{eq:Aggregation}  will result in a learning parameter vector with the same dimension for every layer while still guaranteeing the theoretical performance.
    \item In general, a very large value of $L$ will cause the local models  to converge only to an optimal solution of their local loss functions \cite{ZhangJMLR13}, while a very small value of $L$ will  result in  high communication costs. In this paper, an
appropriate value of $L$ will be numerically evaluated, and  it is assumed to be predefined in our design. 
  \item The optimal value $G^*$ of global rounds is generally unknown in federated settings. In Section \ref{sec:FedFogOpti}, we will consider a stopping criteria based on gradient parameters received at CS which will help  not only reduce  completion time but also save UEs' energy consumption.
\end{itemize}

\subsection{Convergence Analysis}\label{sub:FedFogConver}
To facilitate the analysis, we make the following common assumptions and additional definition to the loss function.
\begin{assumption}\label{assp:1}
  For  $F_{ij}(\mathbf{w})$, $\forall i\in\mathcal{I}$ and $j\in\mathcal{J}_i$, we assume that: $i)$ $F_{ij}(\mathbf{w})$  is $\lambda$-strongly convex, i.e. $F_{ij}(\mathbf{w}) \geq F_{ij}(\bar{\mathbf{w}}) + \langle \nabla F_{ij}(\bar{\mathbf{w}}),\mathbf{w}-\bar{\mathbf{w}}\rangle+\frac{\lambda}{2}\|\mathbf{w}-\bar{\mathbf{w}}\|^2, \forall \mathbf{w}, \bar{\mathbf{w}}$;  and $ii)$ $F_{ij}(\mathbf{w})$ is $\mu$-smooth ($\mu$-Lipschitz gradient), i.e. $\|\nabla F_{ij}(\mathbf{w}) - \nabla F_{ij}(\bar{\mathbf{w}})\| \leq \mu\|\mathbf{w} - \bar{\mathbf{w}}\|, \forall \mathbf{w}, \bar{\mathbf{w}}$.
\end{assumption}

\begin{assumption}[Bounded variance]\label{assp:2}
For any $\ell\in\mathcal{L}$ and $g\in\mathcal{G}$, the variance of the stochastic gradients at UE $(i,j)$ is bounded as: $\mathbb{E}\bigl\{\bigl\|\nabla F_{ij}(\mathbf{w}^g_{ij,\ell}|\mathcal{B}_{ij,\ell}^g)-\nabla F_{ij}(\mathbf{w}^g_{ij,\ell}|\mathcal{D}_{ij})\bigl\|^2\bigl\} \leq \gamma_{ij}^2, \forall i\in\mathcal{I}, j\in\mathcal{J}_i.$
\end{assumption}

\begin{assumption}\label{assp:3}
For any $\ell\in\mathcal{L}$ and $g\in\mathcal{G}$, let $\delta$ be an upper
bound of the expected squared norm of stochastic gradients, i.e. $\mathbb{E}\bigl\{\bigl\|\nabla F_{ij}(\mathbf{w}^g_{ij,\ell}|\mathcal{B}_{ij,\ell}^g)\bigl\|^2\bigl\} \leq \delta^2, \forall i\in\mathcal{I}, j\in\mathcal{J}_i.$
\end{assumption}
Assumption \ref{assp:1} is standard (see \cite{MChenFL2019,DinhFL2019,StichICLR2019,LiICLR2020,WangJSAC2019}) used for the squared-SVM, logistic regression,  and softmax classifier, while Assumptions \ref{assp:2} and \ref{assp:3} have been used in \cite{StichICLR2019,LiICLR2020,ruan2020flexible} to quantify the sampling noise.  
\begin{definition}\label{def:1} Let Assumption \ref{assp:1} hold. We quantify the heterogeneity of the data distribution between UE $(i,j)$ and other UEs by defining $\varepsilon_{ij} \triangleq F_{ij}(\mathbf{w}^*|\mathcal{D}_{ij}) - F_{ij}^*$, where $F_{ij}^*$ is the minimum local loss of UE $(i,j)$. It is clear that $\varepsilon_{ij}$ is finite for strongly convex loss function, and $\varepsilon_{ij}=0$ if data distribution of clients are i.i.d.
\end{definition}

 We first introduce additional notation and then provide key lemmas to support the proof of convergence. {\color{black}Motivated by \cite{ruan2020flexible,LiICLR2020}, let $\bar{\mathbf{w}}^g_{\ell+1}$ be the average of one local update from all UEs, i.e., $\bar{\mathbf{w}}^g_{\ell+1} \triangleq \frac{1}{J}\sum_{i\in\mathcal{I}}\sum_{j\in\mathcal{J}_i}\bigl(\mathbf{w}^g_{ij,\ell} - \eta^g\nabla F_{ij}(\mathbf{w}^g_{ij,\ell}|\mathcal{B}_{ij,\ell}^g)\bigr)$}. For $\nabla F(\bar{\mathbf{w}}^g_{\ell})\triangleq  \frac{1}{J}\sum_{i\in\mathcal{I}}\sum_{j\in\mathcal{J}_i}$ $\nabla F_{ij}(\mathbf{w}^g_{ij,\ell}|\mathcal{B}_{ij,\ell}^g)$, we have $\bar{\mathbf{w}}^g_{\ell+1}=\bar{\mathbf{w}}^g_{\ell} - \eta_g\nabla F(\bar{\mathbf{w}}^g_{\ell})$. It is clear that $\bar{\mathbf{w}}^{g+1} = \mathbf{w}^{g+1}, \forall g$  since it is accessible to all UEs, but not for $\bar{\mathbf{w}}^g_{\ell},\forall \ell$. In what follows, we use $\nabla F_{ij}(\mathbf{w}^g_{ij,\ell})$ and $\nabla  \bar{F}_{ij}(\mathbf{w}^g_{ij,\ell})$ to denote $\nabla F_{ij}(\mathbf{w}^g_{ij,\ell}|\mathcal{B}_{ij,\ell}^g)$ and $\nabla F_{ij}(\mathbf{w}^g_{ij,\ell}|\mathcal{D}_{ij})$, respectively, for simplicity. It is clear that $\nabla  \bar{F}_{ij}(\mathbf{w}^g_{ij,\ell}) = \mathbb{E}\{\nabla  {F}_{ij}(\mathbf{w}^g_{ij,\ell})\}$. We now provide some intermediate results, whose proofs are given in Appendix \ref{app: lemma123}.
\begin{lemma}[The expected upper bound of the variance  of the stochastic gradients]\label{lema:1}
Let Assumption \ref{assp:2} hold. We have
\begin{equation}
    \mathbb{E}\Bigr\{\bigr\|\sum_{i\in\mathcal{I}}\sum_{j\in\mathcal{J}_i} \frac{1}{J}\bigl(\nabla F_{ij}(\mathbf{w}^g_{ij,\ell})-\nabla  \bar{F}_{ij}(\mathbf{w}^g_{ij,\ell})\bigl) \bigl\|^2\Bigr\} \leq \frac{\sum_{i\in\mathcal{I}}\sum_{j\in\mathcal{J}_i}\gamma_{ij}^2}{J^2},\ \forall \ell, g.
\end{equation}
\end{lemma}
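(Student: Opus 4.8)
The plan is to recognize the vector inside the norm as a scaled sum of \emph{independent, zero-mean} random vectors and to apply the elementary fact that the second moment of such a sum equals the sum of the individual second moments. First I would set $X_{ij}\triangleq \nabla F_{ij}(\mathbf{w}^g_{ij,\ell})-\nabla\bar{F}_{ij}(\mathbf{w}^g_{ij,\ell})$, the per-UE stochastic-gradient error at local iteration $\ell$ of round $g$. The identity $\nabla\bar{F}_{ij}(\mathbf{w}^g_{ij,\ell})=\mathbb{E}\{\nabla F_{ij}(\mathbf{w}^g_{ij,\ell})\}$ stated just before the lemma makes each $X_{ij}$ zero-mean, while Assumption \ref{assp:2} supplies the per-term bound $\mathbb{E}\{\|X_{ij}\|^2\}\leq\gamma_{ij}^2$.

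Next I would pull out the factor $1/J$ and expand the squared Euclidean norm through the inner product, splitting the resulting double sum into its diagonal and off-diagonal contributions:
\begin{equation}
\mathbb{E}\Bigl\{\Bigl\|\tfrac{1}{J}\sum_{i\in\mathcal{I}}\sum_{j\in\mathcal{J}_i} X_{ij}\Bigr\|^2\Bigr\} = \frac{1}{J^2}\sum_{i\in\mathcal{I}}\sum_{j\in\mathcal{J}_i}\mathbb{E}\{\|X_{ij}\|^2\} + \frac{1}{J^2}\sum_{(i,j)\neq(i',j')}\mathbb{E}\{\langle X_{ij},X_{i'j'}\rangle\}.
\end{equation}
The first sum is bounded directly by $\sum_{i,j}\gamma_{ij}^2/J^2$ via Assumption \ref{assp:2}, which is exactly the target right-hand side, so it remains only to show that the off-diagonal sum vanishes.

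The crux of the argument --- and the step I expect to be the main obstacle --- is establishing that every cross term $\mathbb{E}\{\langle X_{ij},X_{i'j'}\rangle\}$ with $(i,j)\neq(i',j')$ equals zero. This rests on the independence of the mini-batch sampling across distinct UEs: because the local datasets are disjoint ($\mathcal{D}_{ij}\cap\mathcal{D}_{i'j'}=\emptyset$) and each UE draws $\mathcal{B}_{ij,\ell}^g$ locally and independently, the random vectors $X_{ij}$ and $X_{i'j'}$ are mutually independent. Factoring the expectation of the product (conditioned on the current iterates) together with $\mathbb{E}\{X_{ij}\}=\mathbf{0}$ then gives $\mathbb{E}\{\langle X_{ij},X_{i'j'}\rangle\}=\langle\mathbb{E}\{X_{ij}\},\mathbb{E}\{X_{i'j'}\}\rangle=0$. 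Substituting this back leaves only the diagonal term, which yields the claimed bound. I would be sure to state the independence-across-UEs property explicitly, since it is the only non-routine ingredient; the remainder is a standard bias--variance decomposition.
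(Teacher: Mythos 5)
Your proof is correct and follows essentially the same route as the paper's: expand the squared norm of the scaled sum into diagonal and cross terms, kill the cross terms via independence of the per-UE mini-batch sampling together with the zero-mean property $\mathbb{E}\{X_{ij}\}=\mathbf{0}$, and bound the diagonal terms by Assumption~\ref{assp:2}. If anything, your write-up is cleaner than the paper's, which states the vanishing cross terms with a slightly garbled norm-of-product notation (and a stray $1/J^4$ factor) where your inner-product formulation $\mathbb{E}\{\langle X_{ij},X_{i'j'}\rangle\}=\langle\mathbb{E}\{X_{ij}\},\mathbb{E}\{X_{i'j'}\}\rangle=0$ is the precise statement.
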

\begin{lemma}\label{lema:2}
Let Assumption \ref{assp:3} hold. The expected upper bound of the divergence of $\{\mathbf{w}^g_{ij,\ell}\}$ is given as
\begin{equation}
    \frac{1}{J}\mathbb{E}\Bigr\{\sum_{i\in\mathcal{I}}\sum_{j\in\mathcal{J}_i}\bigr\| \bar{\mathbf{w}}^g_{\ell} - \mathbf{w}^g_{ij,\ell} \bigl\|^2\Bigr\} \leq (L-1)L\eta_g^2\delta^2,\ \forall \ell, g.
\end{equation}
\end{lemma}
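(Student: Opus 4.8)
The plan is to exploit two facts: that $\bar{\mathbf{w}}^g_{\ell}$ is exactly the arithmetic mean of the local iterates $\{\mathbf{w}^g_{ij,\ell}\}$, and that at the start of every round all UEs are initialized at the common point $\mathbf{w}^g$. First I would verify by induction on $\ell$ that $\bar{\mathbf{w}}^g_{\ell} = \frac{1}{J}\sum_{i\in\mathcal{I}}\sum_{j\in\mathcal{J}_i}\mathbf{w}^g_{ij,\ell}$: the base case holds since $\mathbf{w}^g_{ij,0}=\mathbf{w}^g$ for every $(i,j)$, and the inductive step follows directly from the recursion \eqref{eq:localtraiingupdatesMinibatch} for $\mathbf{w}^g_{ij,\ell+1}$ together with the definition of $\bar{\mathbf{w}}^g_{\ell+1}$. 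Because $\bar{\mathbf{w}}^g_{\ell}$ is the empirical mean, the variance-minimization identity $\sum_{i,j}\|\mathbf{w}^g_{ij,\ell}-\bar{\mathbf{w}}^g_{\ell}\|^2 = \sum_{i,j}\|\mathbf{w}^g_{ij,\ell}-\mathbf{w}^g\|^2 - J\|\bar{\mathbf{w}}^g_{\ell}-\mathbf{w}^g\|^2$ yields the key inequality $\sum_{i,j}\|\bar{\mathbf{w}}^g_{\ell}-\mathbf{w}^g_{ij,\ell}\|^2 \le \sum_{i,j}\|\mathbf{w}^g_{ij,\ell}-\mathbf{w}^g\|^2$, which removes the coupling across UEs and reduces the task to bounding each iterate's drift from its own starting point $\mathbf{w}^g$.

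Next I would unroll the local recursion. Using $\mathbf{w}^g_{ij,0}=\mathbf{w}^g$ and telescoping \eqref{eq:localtraiingupdatesMinibatch}, we get $\mathbf{w}^g_{ij,\ell}-\mathbf{w}^g = -\eta^g\sum_{t=0}^{\ell-1}\nabla F_{ij}(\mathbf{w}^g_{ij,t})$. Applying the Cauchy--Schwarz/Jensen inequality $\bigl\|\sum_{t=0}^{\ell-1}\mathbf{a}_t\bigr\|^2 \le \ell\sum_{t=0}^{\ell-1}\|\mathbf{a}_t\|^2$ to the $\ell$ summands and then taking expectations, Assumption \ref{assp:3} bounds each stochastic-gradient norm by $\delta^2$, giving $\mathbb{E}\{\|\mathbf{w}^g_{ij,\ell}-\mathbf{w}^g\|^2\} \le (\eta^g)^2\,\ell\sum_{t=0}^{\ell-1}\delta^2 = (\eta^g)^2\ell^2\delta^2$. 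Averaging over all $J$ UEs and chaining through the variance-minimization inequality then produces $\frac{1}{J}\mathbb{E}\{\sum_{i,j}\|\bar{\mathbf{w}}^g_{\ell}-\mathbf{w}^g_{ij,\ell}\|^2\} \le (\eta^g)^2\ell^2\delta^2$.

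Finally, since $\ell$ ranges over $\{0,1,\dots,L-1\}$ we have $\ell^2 \le (L-1)^2 \le (L-1)L$, which upgrades the right-hand side to the stated $(L-1)L(\eta^g)^2\delta^2$ uniformly in $\ell$, completing the argument.

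I expect the main obstacle to be bookkeeping rather than genuine difficulty. One must be careful that the mini-batches are sampled independently across iterations so that Assumption \ref{assp:3} applies term-by-term once the expectation is brought inside the finite sum, and one must correctly recognize $\bar{\mathbf{w}}^g_{\ell}$ as the empirical mean so that the variance-minimization step (dropping the $-J\|\bar{\mathbf{w}}^g_{\ell}-\mathbf{w}^g\|^2$ term) is legitimate. A secondary point worth noting is the mild looseness $\ell^2\le (L-1)L$; a sharper constant $(L-1)^2$ is in fact available, but the stated form is what the subsequent convergence bound consumes, so I would retain it.
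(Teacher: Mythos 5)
Your proof is correct and takes essentially the same route as the paper's: the paper likewise bounds the deviation from the mean by the deviation from the last synchronization point $\bar{\mathbf{w}}^g_{\ell'}$ (which, within a round, is exactly your reference point $\mathbf{w}^g$, i.e., $\ell'=0$), unrolls the SGD recursion, and applies Cauchy--Schwarz together with Assumption \ref{assp:3}. The only difference is bookkeeping: your Cauchy--Schwarz factor $\ell$ gives the slightly sharper intermediate bound $\ell^2\eta_g^2\delta^2 \le (L-1)^2\eta_g^2\delta^2$, whereas the paper uses the factor $L$ to reach $(\ell-\ell')L\eta_g^2\delta^2 \le (L-1)L\eta_g^2\delta^2$ directly.
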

\begin{lemma}\label{lema:3} Let Assumptions \ref{assp:1}-\ref{assp:3} hold. From Definition \ref{def:1} and if $\eta_g\leq 1/4\mu$, the expected upper bound of $\mathbb{E}\{\|\bar{\mathbf{w}}^g_{\ell+1} - \mathbf{w}^*\|^2\}$ is given as
\begin{align}
  \mathbb{E}\{\|\bar{\mathbf{w}}^g_{\ell+1} - \mathbf{w}^*\|^2\} \leq  (1-0.5\lambda \eta_g )\mathbb{E}\{\|\bar{\mathbf{w}}^g_{\ell}  - \mathbf{w}^*\|^2\} 
   + \eta_g^2\bar{\Omega}^g_{\ell}  +\frac{2\eta_g}{J}\sum_{i\in\mathcal{I}}\sum_{j\in\mathcal{J}_i}\mathbb{E}\{\bigl(\bar{F}_{ij}(\mathbf{w}^*)-\bar{F}_{ij} (\bar{\mathbf{w}}^g_{\ell})  \bigr)\}
\end{align}
where $\bar{\Omega}^g_{\ell}\triangleq(2+\lambda/4\mu)(L-1)L\delta^2 +\frac{1}{J^2} \sum_{i\in\mathcal{I}}\sum_{j\in\mathcal{J}_i}\gamma_{ij}^2 + 6\mu \frac{1}{J}\sum_{i\in\mathcal{I}}\sum_{j\in\mathcal{J}_i}\varepsilon_{ij}$.
\end{lemma}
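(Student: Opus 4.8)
The plan is to run a standard one-step (perturbed) descent analysis on the virtual averaged iterate, starting from the relation $\bar{\mathbf{w}}^g_{\ell+1}=\bar{\mathbf{w}}^g_{\ell}-\eta_g\nabla F(\bar{\mathbf{w}}^g_{\ell})$ with $\nabla F(\bar{\mathbf{w}}^g_{\ell})=\frac{1}{J}\sum_{i,j}\nabla F_{ij}(\mathbf{w}^g_{ij,\ell})$. First I would split the stochastic gradient into its mean $\bar{\mathbf{g}}_\ell\triangleq\frac{1}{J}\sum_{i,j}\nabla\bar{F}_{ij}(\mathbf{w}^g_{ij,\ell})=\mathbb{E}\{\nabla F(\bar{\mathbf{w}}^g_{\ell})\}$ and a zero-mean fluctuation, expand $\|\bar{\mathbf{w}}^g_{\ell+1}-\mathbf{w}^*\|^2$, and take the conditional expectation. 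The cross term vanishes by unbiasedness of the mini-batch gradient, and the fluctuation term is exactly the quantity controlled by Lemma \ref{lema:1}, contributing $\eta_g^2\frac{1}{J^2}\sum_{i,j}\gamma_{ij}^2$, which is the $\gamma$-summand of $\bar{\Omega}^g_\ell$. What remains is the deterministic part $\|\bar{\mathbf{w}}^g_{\ell}-\mathbf{w}^*-\eta_g\bar{\mathbf{g}}_\ell\|^2=\|\bar{\mathbf{w}}^g_{\ell}-\mathbf{w}^*\|^2-2\eta_g\langle\bar{\mathbf{w}}^g_{\ell}-\mathbf{w}^*,\bar{\mathbf{g}}_\ell\rangle+\eta_g^2\|\bar{\mathbf{g}}_\ell\|^2$.

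Next I would bound the two nontrivial pieces. For the quadratic term I would apply Jensen's inequality followed by $\mu$-smoothness in the form $\|\nabla\bar{F}_{ij}(\mathbf{w}^g_{ij,\ell})\|^2\le 2\mu(\bar{F}_{ij}(\mathbf{w}^g_{ij,\ell})-F_{ij}^*)$. For the inner-product term I would insert the local iterates by writing $\bar{\mathbf{w}}^g_{\ell}-\mathbf{w}^*=(\bar{\mathbf{w}}^g_{\ell}-\mathbf{w}^g_{ij,\ell})+(\mathbf{w}^g_{ij,\ell}-\mathbf{w}^*)$. On the first summand I would use Young's inequality, balanced so that the divergence $\|\bar{\mathbf{w}}^g_{\ell}-\mathbf{w}^g_{ij,\ell}\|^2$ enters with an $O(1)$ coefficient (this is the reason Lemma \ref{lema:2} is stated at order $\eta_g^2$), the companion gradient-norm term again being absorbed by smoothness. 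On the second summand I would apply $\lambda$-strong convexity (Assumption \ref{assp:1}), which yields the function gap $\bar{F}_{ij}(\mathbf{w}^*)-\bar{F}_{ij}(\mathbf{w}^g_{ij,\ell})$ together with a term $-\lambda\eta_g\|\mathbf{w}^g_{ij,\ell}-\mathbf{w}^*\|^2$.

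To recover the advertised contraction factor I would lower-bound $\|\mathbf{w}^g_{ij,\ell}-\mathbf{w}^*\|^2\ge\tfrac12\|\bar{\mathbf{w}}^g_{\ell}-\mathbf{w}^*\|^2-\|\bar{\mathbf{w}}^g_{\ell}-\mathbf{w}^g_{ij,\ell}\|^2$, which converts the strong-convexity term into the coefficient $-0.5\lambda\eta_g$ on $\|\bar{\mathbf{w}}^g_{\ell}-\mathbf{w}^*\|^2$ at the cost of one more divergence term. Collecting every divergence contribution and invoking Lemma \ref{lema:2}, while using $\eta_g\le 1/4\mu$ to dominate the $O(\eta_g^3)$ remainders by $O(\eta_g^2)$, should assemble the $(2+\lambda/4\mu)(L-1)L\delta^2$ part of $\bar{\Omega}^g_\ell$. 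For the function-value terms I would invoke Definition \ref{def:1} to replace $\bar{F}_{ij}(\mathbf{w}^g_{ij,\ell})-F_{ij}^*$ by $(\bar{F}_{ij}(\mathbf{w}^g_{ij,\ell})-\bar{F}_{ij}(\mathbf{w}^*))+\varepsilon_{ij}$, and then convert the argument $\mathbf{w}^g_{ij,\ell}$ to the averaged point $\bar{\mathbf{w}}^g_{\ell}$ through the smoothness descent inequality, so that the surviving gap is precisely $\frac{2\eta_g}{J}\sum_{i,j}(\bar{F}_{ij}(\mathbf{w}^*)-\bar{F}_{ij}(\bar{\mathbf{w}}^g_{\ell}))$ (whose average over $(i,j)$ is the nonnegative suboptimality $F(\bar{\mathbf{w}}^g_\ell)-F(\mathbf{w}^*)$ and is kept for the outer recursion), while the residual multiples of $\varepsilon_{ij}$ sum, under $\eta_g\le 1/4\mu$, to the $6\mu\,\frac{1}{J}\sum_{i,j}\varepsilon_{ij}$ heterogeneity term.

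The hard part will be exactly this last consolidation: tracking the several $O(\eta_g^2)$ and $O(\eta_g^3)$ pieces generated by the repeated use of Young's inequality and by the smoothness-based transfer of the loss from the local points $\mathbf{w}^g_{ij,\ell}$ to the averaged point $\bar{\mathbf{w}}^g_{\ell}$, and verifying that $\eta_g\le 1/4\mu$ is the precise threshold keeping the nonpositive function-gap multiplier of the form $(1-2\mu\eta_g)$ nonnegative and collapsing all remainders into the clean constants $(2+\lambda/4\mu)$ and $6\mu$ of $\bar{\Omega}^g_\ell$. Taking the total expectation at the end removes the conditioning and delivers the claimed one-step inequality.
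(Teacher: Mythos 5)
Your proposal is correct and follows essentially the same route as the paper's proof in Appendix~\ref{app: lemma123}: the identical mean-plus-fluctuation split of the SGD step (with Lemma~\ref{lema:1} absorbing the variance term), the same three-term expansion of the deterministic part handled by Young's inequality, $\lambda$-strong convexity, and the smoothness bound $\|\nabla \bar{F}_{ij}(\mathbf{w}^g_{ij,\ell})\|^2 \leq 2\mu\bigl(\bar{F}_{ij}(\mathbf{w}^g_{ij,\ell}) - F_{ij}^*\bigr)$, the same half-norm lower bound on $\|\mathbf{w}^g_{ij,\ell}-\mathbf{w}^*\|^2$ to extract the $(1-0.5\lambda\eta_g)$ contraction, and the same use of Definition~\ref{def:1} together with the threshold $\eta_g \leq 1/4\mu$ (the paper's $\bar{\eta}_g = 2\eta_g(1-2\mu\eta_g)$ device) to consolidate the heterogeneity residuals into $6\mu\frac{1}{J}\sum_{i\in\mathcal{I}}\sum_{j\in\mathcal{J}_i}\varepsilon_{ij}$ and the divergence terms, via Lemma~\ref{lema:2}, into $(2+\lambda/4\mu)(L-1)L\delta^2$. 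No gaps beyond bookkeeping that the paper itself carries out.
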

Combining with the results from Lemmas \ref{lema:1}-\ref{lema:3}, the convergence of $\FedFog$ is stated in the following theorem.
\begin{theorem}\label{theo:1}
Let Assumptions \ref{assp:1}-\ref{assp:3} hold. Given the optimal global model $\mathbf{w}^*$, $\bar{Q}^0=\mathbb{E}\{\|\mathbf{w}^0 - \mathbf{w}^*\|^2\}$, and the diminishing learning rate $\eta_g = \frac{16}{\lambda(g+1+\psi)}$, we can obtain the expected  upper bound
of $\FedFog$ after G global rounds as
\begin{align}
    \mathbb{E}\{\|\mathbf{w}^G- \mathbf{w}^*\|^2\} \leq \frac{\max\bigl\{\psi^2\bar{Q}^0, \frac{16^2}{\lambda^2}G\Theta \bigr\}}{(G+\psi)^2}
\end{align}
where   $\Theta\triangleq 2L^2\delta^2+(2+\lambda/4\mu)(L-1)L\delta^2 + \frac{L\sum_{i\in\mathcal{I}}\sum_{j\in\mathcal{J}_i}\gamma_{ij}^2}{J^2} + 6\mu L \frac{1}{J}\sum_{i\in\mathcal{I}}\sum_{j\in\mathcal{J}_i}\varepsilon_{ij}$ and
$\psi = \max\bigl\{\frac{64\mu}{\lambda},\ 4L \bigr\}$.
\end{theorem}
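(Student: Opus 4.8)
The plan is to reduce the $G$-round behaviour of $\FedFog$ to a single one-round contraction inequality for $Q^g\triangleq\mathbb{E}\{\|\mathbf{w}^g-\mathbf{w}^*\|^2\}$ and then to resolve that recursion by induction using the prescribed diminishing step size. The key structural fact I would exploit is that the global update coincides with the averaged trajectory after a full round, i.e. $\mathbf{w}^{g+1}=\bar{\mathbf{w}}^g_L=\mathbf{w}^g-\eta_g\sum_{\ell=0}^{L-1}\nabla F(\bar{\mathbf{w}}^g_\ell)$, together with $\bar{\mathbf{w}}^g_0=\mathbf{w}^g$ and $\bar{Q}^0=Q^0$. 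The target recursion is
\begin{equation}
Q^{g+1}\leq(1-0.5\lambda\eta_g)\,Q^g+\eta_g^2\,\Theta,
\end{equation}
after which the theorem follows from a standard diminishing-step-size induction.

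To establish the recursion I would expand $\|\mathbf{w}^{g+1}-\mathbf{w}^*\|^2$ around $\mathbf{w}^g$, producing a cross term $-2\eta_g\langle\mathbf{w}^g-\mathbf{w}^*,\sum_\ell\nabla F(\bar{\mathbf{w}}^g_\ell)\rangle$ and a squared term $\eta_g^2\|\sum_\ell\nabla F(\bar{\mathbf{w}}^g_\ell)\|^2$, and take expectations. In the cross term the stochastic part vanishes by the unbiasedness $\nabla\bar{F}_{ij}=\mathbb{E}\{\nabla F_{ij}\}$, and I would apply $\lambda$-strong convexity and $\mu$-smoothness (Assumption \ref{assp:1}) exactly as in Lemma \ref{lema:3} to extract the contraction factor $(1-0.5\lambda\eta_g)$ together with the suboptimality term $\tfrac{2\eta_g}{J}\sum_{i,j}\mathbb{E}\{\bar{F}_{ij}(\mathbf{w}^*)-\bar{F}_{ij}(\bar{\mathbf{w}}^g_\ell)\}=2\eta_g\,\mathbb{E}\{F(\mathbf{w}^*)-F(\bar{\mathbf{w}}^g_\ell)\}$. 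Because $\mathbf{w}^*$ minimizes $F=\tfrac{1}{J}\sum_{i,j}\bar{F}_{ij}$ over $\mathbb{R}^q$, this term is nonpositive and can be discarded. The residual constants are then supplied termwise by the earlier results: the client-drift Lemma \ref{lema:2} yields the $(2+\lambda/4\mu)(L-1)L\delta^2$ contribution, the variance Lemma \ref{lema:1} yields $\tfrac{L}{J^2}\sum_{i,j}\gamma_{ij}^2$, Definition \ref{def:1} yields $6\mu L\tfrac{1}{J}\sum_{i,j}\varepsilon_{ij}$, while Assumption \ref{assp:3} bounds the aggregated squared stochastic gradient and contributes the remaining $2L^2\delta^2$; collecting these is precisely $\Theta$.

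With the recursion in hand, the diminishing rate $\eta_g=\frac{16}{\lambda(g+1+\psi)}$ gives $0.5\lambda\eta_g=\frac{8}{g+1+\psi}$ and $\eta_g^2\Theta=\frac{16^2\Theta}{\lambda^2(g+1+\psi)^2}$, and I would prove by induction on $g$ that $Q^g\leq\frac{\max\{\psi^2\bar{Q}^0,\,\tfrac{16^2}{\lambda^2}G\Theta\}}{(g+\psi)^2}$. The base case $g=0$ holds with equality since $\bar{Q}^0=Q^0$. For the inductive step, substituting the hypothesis into the recursion and clearing denominators reduces the claim to the elementary inequality $(n-7)(n+1)\leq n^2$ with $n=g+\psi$ (so that the contraction coefficient $8\geq 2$ makes the $(g+\psi)^{-2}$ ansatz self-consistent), with the maximum absorbing the two regimes in which the initial error, respectively the gradient noise $\Theta$, dominates. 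The choice $\psi=\max\{64\mu/\lambda,\,4L\}$ is what guarantees $\eta_g\leq 1/4\mu$ for every $g$, which is the hypothesis under which Lemma \ref{lema:3} and hence the one-round recursion are valid, and it keeps the induction numerator monotone.

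I expect the main obstacle to be the derivation of the one-round recursion rather than the induction: the difficulty is to aggregate the $L$ inner SGD steps into a single global round so that the drift enters only once (through Lemma \ref{lema:2}) and the dependence on $L$ stays at order $L^2$ in $\Theta$; a naive $L$-fold iteration of Lemma \ref{lema:3} over-counts the drift and inflates this to order $L^3$. Handling the indefinite suboptimality term via global optimality of $\mathbf{w}^*$, and tracking every constant so that they coincide exactly with $\Theta$, are the delicate points, while the remaining algebra is routine.
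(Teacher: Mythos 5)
There is a genuine gap, and it sits exactly where you flagged it: your one-round recursion $Q^{g+1}\leq(1-0.5\lambda\eta_g)Q^g+\eta_g^2\Theta$ with $\Theta=O(L^2)$ is not derivable by the route you sketch. Whether you iterate Lemma \ref{lema:3} over the $L$ local steps or expand the whole round in one shot, the cross term involves gradients evaluated at all $L$ local iterates $\mathbf{w}^g_{ij,\ell}$, so the drift bound of Lemma \ref{lema:2} has to be invoked once per local step; the aggregated drift contribution is then $\sum_{\ell\in\mathcal{L}}\eta_g^2(2+\lambda/4\mu)(L-1)L\delta^2=O(L^3)\,\eta_g^2\delta^2$, not the $O(L^2)$ term that appears in $\Theta$. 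Because your contraction coefficient is only $8$ (from $0.5\lambda\eta_g=8/(g+1+\psi)$), the induction transfers the per-round noise coefficient essentially one-for-one into the numerator of the final bound, so your argument would prove the theorem only with an extra factor of $L$ on the drift term. Your proposed remedy (``aggregate the $L$ inner steps so that the drift enters only once'') is asserted but never given a mechanism, and the natural executions (Young or Cauchy--Schwarz applied per step) do not deliver it.

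The paper closes this hole with precisely the step your plan discards. After telescoping Lemma \ref{lema:3} into \eqref{eq:B2}, it uses the lower bound $Q^g_{\ell}\geq 0.5Q^{g+1}-\eta_g^2\bigl(\sum_{k=0}^{L-1}\|\nabla F(\bar{\mathbf{w}}^g_{k})\|\bigr)^2$, which converts $-0.5\lambda\eta_g\sum_{\ell}Q^g_{\ell}$ into the $L$-fold stronger contraction $(1-\eta_g\lambda L/8)$ of \eqref{eq:B4} and simultaneously generates the term $2\bar{\Phi}^g\leq 2L^2\delta^2$ (this, rather than Assumption \ref{assp:3} applied to the aggregated gradient, is where the $2L^2\delta^2$ in $\Theta$ originates). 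It is this $L$-proportional contraction that lets the induction absorb an $L$-fold larger per-round noise and leave only $O(L^2)$ in the stated bound. A tell-tale symptom of the omission: in your plan the component $4L$ of $\psi=\max\{64\mu/\lambda,\,4L\}$ is never used (you invoke only $\psi\geq 64\mu/\lambda$ to get $\eta_g\leq 1/4\mu$), whereas in the paper $\psi\geq 4L$ is indispensable, since it enforces $\eta_g\leq 4/(\lambda L)$, exactly what is needed to pass from \eqref{eq:B3} to \eqref{eq:B4}. (Admittedly, the paper's own bookkeeping at \eqref{eq:B4} is loose, as $\mathbb{E}\{\sum_{\ell}\Omega^g_{\ell}\}$ should carry an additional factor $L$ on its $\delta^2$ term, but the structural device that makes the stated $\Theta$ attainable is the $L$-fold contraction, and it is absent from your argument.) Your remaining ingredients do mirror the paper: discarding the suboptimality term by global optimality of $\mathbf{w}^*$ is the same idea as the minimizer device in \eqref{eq:B1}, and your closing induction (contraction coefficient $\geq 2$ suffices for the $(g+\psi)^{-2}$ ansatz) is the same arithmetic as the paper's final display.
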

\begin{proof}
Please see Appendix \ref{app: theo1}.
\end{proof}
\noindent {\color{black}It can be seen that with an appropriate diminishing learning rate, the optimal global model is obtained after a sufficient large number of global rounds, i.e., $\underset{G\rightarrow\infty}{\mathrm{lim}}\mathbb{E}\{\|\mathbf{w}^G- \mathbf{w}^*\|^2\}\propto \underset{G\rightarrow\infty}{\mathrm{lim}}\frac{1}{G}\rightarrow0$.} 

\noindent \textbf{Flexible user aggregation:} To reduce  completion time, it is also of practical interest to perform a flexible user aggregation  which allows fewer UEs to participate in the training process in each round. The convergence of $\FedAvg$ with partial user participation has been
studied in \cite{LiICLR2020}. The large variance due to non-i.i.d. data can be controlled by fine-tuned learning rates. We will detail the flexible aggregation  strategy over wireless fog-cloud systems in Section \ref{sec:FedFogOptiD}.

\section{Network-Aware Optimization of $\FedFog$ over Wireless Fog-Cloud Systems}\label{sec:FedFogOpti}
\begin{figure}[!ht]
	\centering
		\includegraphics[width=0.8\columnwidth,trim={0cm 0.0cm 0cm 0.0cm}]{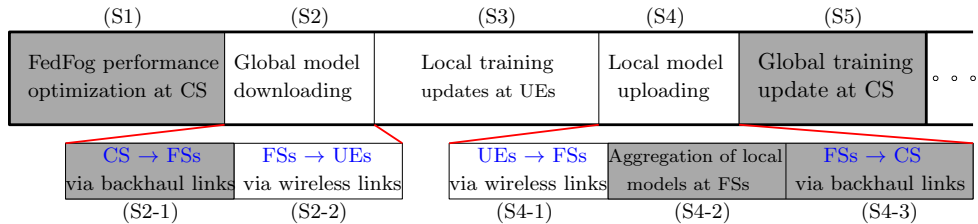}
		\vspace{-5pt}
	\caption{Illustration of the proposed scheme to support $\FedFog$ in one global round over wireless fog-cloud systems.}
	\label{fig:SchemesupportedFL}
\end{figure}
The proposed wireless fog-cloud scheme to support $\FedFog$ consists of five main steps, as illustrated in Fig. \ref{fig:SchemesupportedFL}. Compared to traditional FL algorithms over wireless communications systems \cite{DinhFL2019,MChenFL2019,ZYang2019,NguyenIoTFL2020}, the additional steps of $\FedFog$'s performance optimization (Step 1), model downloading/uploading between CS and FSs (mini-steps S2-1 and S4-3), and aggregation of local models at FSs (mini-step S4-2) are added to enable $\FedFog$ over wireless fog-cloud systems. Unlike \cite{DinhFL2019,MChenFL2019,ZYang2019}, the resource allocation algorithm (S1) is done in each round since the future training loss has not been revealed.

\subsection{Network and Computation Costs Model}
{\color{black}In this section, we focus on minimizing the completion time of $\FedFog$ and the global loss in \eqref{eq:globalmini}. The communication delay between FSs (or BSs) and CS (i.e., S2-1 and S4-3) to exchange a single model update in both DL and UL is neglected since in many practical networks, they can be connected through  backhaul links with sufficiently large capacities (i.e., high-speed optical ones). For example,  advanced optical fiber is considered as  the most  viable solution to meet the extremely low latency requirements of backhaul links between FSs and CS, i.e., down to 150 microseconds  \cite{DotschBell2014}. In addition, FSs and CS  are often equipped with much higher computational power than UEs to execute tasks \cite{ChenTWC2018,DuTCOM2018}. Therefore, the latency of execution time  of  the  resource allocation algorithm  (S1),  aggregate local models  at FSs  (S4-2) and global training update at CS (S5) is  ignored.}

{\color{black}\textit{1) Communication model:} We assume that BS $i\in\mathcal{I}$ (installed FS $i$) is equipped with $K_i$ antennas to serve $J_i$ single-antenna UEs via a shared wireless medium. Let us denote by $\mathbf{h}_{ij}^\dl(g)\in\mathbb{C}^{K_i\times 1}$ and $\mathbf{h}_{ij}^\ul(g)\in\mathbb{C}^{K_i\times 1}$ the channel (column) vectors between BS $i$ and UE $(i,j)$ in downlink (DL) and uplink (UL), respectively. The channel vector $\mathbf{h}_{ij}^x(g)$ with $x\in\{\dl,\ul\}$ is modeled as $\mathbf{h}_{ij}^x(g)=\sqrt{\varphi_{ij}(g)}\bar{\mathbf{h}}_{ij}^x(g)$, which accounts for both the effects of large-scale fading $\varphi_{ij}(g)$ (e.g.,  path loss and shadowing) with 
a low degree of mobility and  small-scale fading $\bar{\mathbf{h}}_{ij}^x(g)\sim\mathcal{CN}(0,\mathbf{I}_{K_i})$ and remains unchanged during round $g$, but
changes independently from one round to another. Let $W^\dl$, $W^\ul$, and $N_0$ be the DL and UL system bandwidths (Hz), and noise power spectral density (dBm/Hz) at receivers, respectively. The communication delay of UL can be predominant over DL since BSs are typically equipped with much higher power budget than UEs as well as they have  high bandwidth used for data broadcasting. Therefore, we allocate the DL bandwidth equally to BS $i$ as $W^\dl_i = W^\dl/I, \forall i\in\mathcal{I}$. 
Adopting frequency-division multiple access (FDMA) for UL wireless links from UEs to BSs, we denote the UL bandwidth allocated to UE $(i,j)$ at round $g$ by $\beta_{ij}(g)W^\ul$, satisfying $\sum_{i\in\mathcal{I}}\sum_{j\in\mathcal{J}_i}\beta_{ij}(g) \leq 1, \forall g$. Under FDMA, the optimal receiver (i.e., maximum ratio combining, $\mathbf{h}_k^H/\|\mathbf{h}_k\|$) is used for the UL transmission. Let $\mathsf{S}_{\dl}$ and $\mathsf{S}_{\ul}$ denote the data size (in bits) of the updated parameters in  DL and UL, respectively. In this paper, we have $\mathsf{S}_{\dl}  < \mathsf{S}_{\ul}$ since UEs are required to additionally send the local loss value to CS to design a stopping criteria.

 Each BS $i\in\mathcal{I}$ broadcasts the latest global model to all $J_i$ UEs using physical-layer multicasting, where the broadcasting transmission rate of each BS is simply determined by its slowest user \cite{KimTWC2011}. The DL achievable rate (a lower bound on the ergodic rate) in bits/s of UE $(i,j)$ to download the latest  global model from BS $i$ at round $g$ can be computed as\footnote{A more sophisticated beamformer is beyond the scope of this paper.}:
\begin{equation}
r_{ij}^\dl(g) = W^\dl_i\log\bigl(1 + \underset{j\in\mathcal{J}_i}{\min}\,\SNR_{ij}^\dl(g) \bigr) 
\end{equation}
where the signal-to-noise ratio (SNR) is  $\SNR_{ij}^\dl(g)=\frac{P_{i}^{\max}\mathbb{E}\{\|\mathbf{h}^\dl_{ij}(g)\|^2\}}{W^{\dl}N_0} = \frac{P_{i}^{\max}K_i\varphi_{ij}(g)}{W^{\dl}N_0}$, and $P_{i}^{\max}$ denotes the maximum transmit power at BS $i$. Here, we consider a  worst-case SNR against the noise power, i.e., $W_i^{\dl}N_0 \leq W^{\dl}N_0$. 
%
%
The DL communication delay of UE $(i,j)$ at round $g$ is 
\begin{equation}
t^{\dl,co}_{ij}(g) = \frac{\mathsf{S}_{\dl}}{r_{ij}^\dl(g)}
\end{equation}
which is the same for all UE $(i,j)$ associated with BS $i$. Similarly, the UL communication delay for UE $(i,j)$ to transmit the trained parameters  to BS $i$ using FDMA is  
\begin{equation}t^{\ul,co}_{ij}(g)= \frac{\mathsf{S}_{\ul}}{r_{ij}^\ul(g)},\ \text{with}\ r_{ij}^\ul(g) = \beta_{ij}(g)W^{\ul}\log\bigl(1 + \SNR_{ij}^\ul(g)\bigr) \end{equation}
where  $\ds\SNR_{ij}^\ul(g) = \frac{p_{ij}(g)K_i\varphi_{ij}(g)}{W^{\ul}N_0}$, and $p_{ij}(g)$ is the transmit power coefficient of UE $(i,j)$ during the local model uploading phase, subject to the power constraint $p_{ij}(g) \leq P_{ij}^{\max}$. 
}

\textit{2) Computation and energy consumption models at UEs:}
Denote by $c_{ij}$ the number of CPU cycles required
for executing 1 data bit of UE $(i,j)$, which  assumes to be known  \textit{a prior} by an offline measurement\cite{MiettinenUSENIX10}. Let $\mathsf{f}_{ij}(g)$  be the CPU clock speed of UE $(i,j)$,  which can be chosen in the range $[\mathsf{f}_{ij}^{\min},\mathsf{f}_{ij}^{\max} ]$. Then, the computation delay for  local training updates at UE $(i,j)$  over $L$  local iterations at round $g$ can be expressed as
\begin{equation}\label{eq:timecomp}t_{ij}^{cp}(g)= L\frac{c_{ij}\mathsf{S}_B}{\mathsf{f}_{ij}(g)}
\end{equation}
where $\mathsf{S}_{B}$  denotes the  mini-batch  size.

The total energy consumed by UE $(i,j)$ at round $g$  can be formulated as:
\begin{equation}\label{eq:Ecomp}
E_{ij}(g) = \underbrace{p_{ij}(g)t^{\ul,co}_{ij}(g)}_{E_{ij}^{co}(g)}  +  \underbrace{L\frac{\theta_{ij}}{2}c_{ij}\mathsf{S}_{B}\mathsf{f}^2_{ij}(g)}_{E_{ij}^{cp}(g)} 
\end{equation}
where $E_{ij}^{co}(g)$ is the energy consumption required to transmit the trained local model via the UL, and $E_{ij}^{cp}(g)$ is the energy consumed for local executions; The constant $\theta_{ij}/2$ represents the average switched capacitance and the average activity factor of UE $(i,j)$ \cite{BurdVLSI1996,GerardsJSAC15}.

\subsection{Network-Aware Optimization Problem}
We assume that UEs  communicate asynchronously with BSs \cite{NguyenIoTFL2020,ZYang2019}. {\color{black}The delay of  one global round (say, round $g$)  of  $\FedFog$  is}
\begin{equation}
    T(g) = \underset{\forall i,j}{\max}\bigr\{t^{\dl,co}_{ij}(g) + t_{ij}^{cp}(g) + t^{\ul,co}_{ij}(g)\bigl\}.
\end{equation}
The completion time for implementing $\FedFog$ over $G$ global rounds is thus $T_{{\Sigma }}=\sum_{g\in\mathcal{G}} T(g)$. The simplest way to compute $T$ is to set a sufficiently large value of $G$ \cite{MChenFL2019,ZYang2019,NguyenIoTFL2020}, which ensures the convergence of $\FedFog$. However, such a solution may require redundant transmissions of training models between UEs and CS, resulting in  an extra cost. Therefore, it is necessary  to design a stopping criteria to output an optimal value $G^* \leq G$, which helps to achieve a lower communication cost  $T^{*}_{\Sigma }=\sum_{g=0}^{G^*-1} T(g)$. Inspired by \cite{MahmoudiICC20}, we solve \eqref{eq:globalmini} by $\FedFog$, taking into account the iterative costs. \cite{MahmoudiICC20}. 

\textbf{Cost function and performance measure:}  The general cost function should capture both the global loss function and the completion time, which are two prime objectives  in  FL algorithms, to provide a more precise measurement on the system performance. We first introduce a cost function over $G$ global rounds based on the multi-objective optimization method \cite{Marler2004}:
\begin{equation}\label{eq:utilityfunction}
    C(G) \triangleq \alpha\frac{F(\mathbf{w}^G)}{F_0} + (1-\alpha)\frac{\sum_{g\in\mathcal{G}} T(g)}{T_0}
\end{equation}
where $\alpha\in[0,\ 1]$ is  the priority parameter. In addition,    $F_0 > 0$ and $T_0 >0$ denote the references of the loss value and the completion time, respectively, which are used to deal with the different dimensions of the two quantities.  We can see that minimizing both  the global  loss  function  and  the  completion  time are conflicting. In particular, the lower the  completion  time, the higher the global loss, resulting in  low accuracy of the learning model. Therefore, the priority parameter $\alpha$ is imposed to create a trade-off between the two objective functions. The higher the value of $\alpha$, the higher the completion time that the $\FedFog$ is willing to spend to achieve a better  accuracy of the learning model.
{\color{black} 
\begin{remark}
We note that minimizing the cost function $C(G)$ requires complete information about the network across $G$ global rounds, which is obviously unaffordable in  hierarchical FL-supported wireless fog-cloud networks. In particular, to compute the cost function $C(G)$ in \eqref{eq:utilityfunction} over $G$ global rounds in an offline manner,  we need to have the sequences $\{F(\mathbf{w}^g)\}_{\forall g}$ and $\{ T(g)\}_{\forall  g}$ in advance, which is unrealistic since  the future values  at round $g+1$ (i.e., $F(\mathbf{w}^{g+1})$ and $T(g+1)$) are not revealed  at the beginning of  round $g$. This calls for an alternating procedure, solely based on the network information in each global round. The training time will be accumulated after each global round.
\end{remark}}

Based on the above discussions, we consider the following minimization problem of   joint  learning  and  communication at round $g$:
\begin{subequations} \label{eq:OP1}
	\begin{IEEEeqnarray}{cl}
		\underset{\mathbf{w}^g,\mathbf{p}(g), \mathsf{\mathbf{f}}(g), \boldsymbol{\beta}(g)}{\mathrm{minimize}} &\quad   C(g) \triangleq \alpha\frac{F(\mathbf{w}^g)}{F_0} + (1-\alpha)\frac{\sum_{g'=0}^{g} T(g')}{T_0}\label{eq:OP1a} \\
		\st & \quad  E_{ij}(g)    \leq  \mathcal{E}^{\max},\ \forall i\in\mathcal{I}, j\in\mathcal{J}_i,                        \label{eq:OP1b}\\
		     & \quad \mathsf{SNR}_{ij}^{\ul}(g) \geq \SNR^{\min},\ \forall  i\in\mathcal{I}, j\in\mathcal{J}_i,                        \label{eq:OP1c}\\
				& \quad p_{ij}(g) \leq P_{ij}^{\max},\ \forall i\in\mathcal{I}, j\in\mathcal{J}_i,                        \label{eq:OP1d}\\
			    & \quad \mathsf{f}_{ij}^{\min}\leq \mathsf{f}_{ij}(g) \leq \mathsf{f}_{ij}^{\max},\ \forall i\in\mathcal{I}, j\in\mathcal{J}_i,   \label{eq:OP1e}\\
				& \quad \sum_{i\in\mathcal{I}}\sum_{j\in\mathcal{J}_i}\beta_{ij}(g) \leq 1\label{eq:OP1f}
	\end{IEEEeqnarray}
\end{subequations}
where $\mathbf{p}(g)\triangleq\{p_{ij}(g)\}_{\forall i,j}$, $\mathsf{\mathbf{f}}(g)\triangleq\{\mathsf{f}_{ij}(g)\}_{\forall i,j}$ and ${\boldsymbol{\beta}}(g)\triangleq\{\beta_{ij}(g)\}_{\forall i,j}$. Constraints \eqref{eq:OP1b} and \eqref{eq:OP1c} indicate the maximum energy consumption requirement $\mathcal{E}_{\max}$ and the minimum SNR requirement $\SNR^{\min}$ for performing one round of $\FedFog$, respectively.  As discussed previously,  \eqref{eq:OP1d}, \eqref{eq:OP1e} and \eqref{eq:OP1f} are the transmit power, CPU-frequency and bandwidth constraints for  UE $(i,j)$, respectively. 

{\color{black}We can see that it is not possible to minimize the two quantities in \eqref{eq:OP1a} simultaneously since they are optimized on different time slots. Intuitively, problem \eqref{eq:OP1} can be decomposed into two sub-problems at round $g$ as follows:
\begin{equation}\label{eq:OP1FL}
    \min_{\mathbf{w}^g\in\mathbb{R}^q} F(\mathbf{w}|\mathcal{D})
\end{equation}
which is the FL problem solved by Algorithm \ref{alg:FedFog}, solely based on the UEs' local datasets, and 
\begin{equation}\label{eq:OP1RA}
    \underset{\mathbf{p}(g), \mathsf{\mathbf{f}}(g), \boldsymbol{\beta}(g)}{\mathrm{minimize}}\   C(g)|\ \st\ \eqref{eq:OP1b}-\eqref{eq:OP1f}
\end{equation}
which is the resource allocation sub-problem of joint computation and communication resources solved at CS for a given $F(\mathbf{w}^g)$ obtained in the last global round. We note that minimizing the cost function $C(g)$ in \eqref{eq:OP1RA} is equivalent to minimizing the delay of one global round, i.e. $T(g)$, since  the total delay of the previous round is already revealed at round $g$.
}

\subsection{Proposed Path-Following Algorithm to Solve \eqref{eq:OP1RA}}
 In what follows, we treat the loss function $F(\mathbf{w}^g)$ as a constant and rewrite \eqref{eq:OP1a} equivalently as
\begin{equation}
    C(g)= (1-\alpha)\frac{T(g)}{T_0} + \bar{C}(g) 
\end{equation}
where $\bar{C}(g) \triangleq \alpha\frac{F(\mathbf{w}^g)}{F_0} + (1-\alpha)\frac{\sum_{g'=0}^{g-1} T(g')}{T_0}$ is also a constant at round $g$ since
$\sum_{g'=0}^{g-1} T(g')$ is already computed in the previous rounds.


Problem \eqref{eq:OP1RA} is nonconvex due to the non-concavity of \eqref{eq:OP1a} and non-convexity of \eqref{eq:OP1b}. By introducing new variables $t(g)$ and $\boldsymbol{\tau}(g)\triangleq\{\tau_{ij}(g)\}_{\forall i,j}$, problem \eqref{eq:OP1RA} is rewritten  as
\begin{subequations} \label{eq:OP2}
	\begin{IEEEeqnarray}{cl}
		\underset{\substack{\mathbf{p}(g), \mathsf{\mathbf{f}}(g), \boldsymbol{\beta}(g) \\ t(g), \boldsymbol{\tau}(g)  }}{\mathrm{minimize}} &\quad   C(g) \triangleq (1-\alpha)\frac{t(g)}{T_0} + \bar{C}(g)\label{eq:OP2a} \\
		\st & \quad t^{\dl,co}_{ij}(g) + L\frac{c_{ij}\mathsf{S}_B}{\mathsf{f}_{ij}(g)} + \frac{\mathsf{S}_{\ul}}{\tau_{ij}(g)} \leq t(g),\ \forall  i\in\mathcal{I}, j\in\mathcal{J}_i,                        \label{eq:OP2b}\\
		&\quad r_{ij}^{\ul}(g) \geq \tau_{ij}(g),\ \forall  i\in\mathcal{I}, j\in\mathcal{J}_i,                        \label{eq:OP2c}\\
				& \quad  \mathsf{S}_{\ul}\frac{p_{ij}(g) }{\tau_{ij}(g)}  +  L\frac{\theta_{ij}}{2}c_{ij}\mathsf{S}_{B}\mathsf{f}_{ij}^2(g)   \leq  \mathcal{E}^{\max},\ \forall i\in\mathcal{I}, j\in\mathcal{J}_i,                        \label{eq:OP2d}\\
		         & \quad p_{ij}(g) \geq \SNR^{\min}\frac{W^{\ul}N_0}{K_i\varphi_{ij}(g)},\ \forall  i\in\mathcal{I}, j\in\mathcal{J}_i,                        \label{eq:OP2e}\\
				& \quad \eqref{eq:OP1d}, \eqref{eq:OP1e}, \eqref{eq:OP1f}\label{eq:OP2f}
	\end{IEEEeqnarray}
\end{subequations}
where \eqref{eq:OP2d} and \eqref{eq:OP2e} are transformed  from \eqref{eq:OP1b} and \eqref{eq:OP1c}, respectively.  The equivalence between \eqref{eq:OP1RA} and \eqref{eq:OP2} is due to the fact that constraints \eqref{eq:OP2b} and \eqref{eq:OP2c} must hold with equality at optimum for at least some of the slowest UEs. In problem \eqref{eq:OP2}, the nonconvex parts include \eqref{eq:OP2c} and \eqref{eq:OP2d}, which can be convexified by IA framework \cite{Marks:78}.

Let us treat \eqref{eq:OP2c} first. We  make the variable change $\ds\tilde{\beta}_{ij}(g)=\frac{1}{\beta_{ij}(g)} \geq 1, \forall i,j$ to  equivalently rewrite \eqref{eq:OP2c} as
\begin{subnumcases}{\eqref{eq:OP2c}\\ \Leftrightarrow}
\frac{1}{\tilde{\beta}_{ij}(g)}\log\bigl(1 + \frac{1}{\omega_{ij}(g)}\bigr) \geq \frac{\tau_{ij}(g)}{W^{\ul}}\label{eq:OP2c1}\\
\frac{p_{ij}(g)K_i\varphi_{ij}(g)}{W^{\ul}N_0} \geq \frac{1}{\omega_{ij}(g)}\label{eq:OP2c2}
\end{subnumcases}
where ${\tilde{\boldsymbol{\beta}}}(g)\triangleq\{\tilde{\beta}_{ij}(g)\}_{\forall i,j}$ and ${\boldsymbol{\omega}}(g)\triangleq\{\omega_{ij}(g)\}_{\forall i,j}$ are newly introduced variables. Constraint \eqref{eq:OP2c2} is convex and can be cast into a second-order cone (SOC) one. In \eqref{eq:OP2c1}, the function $\frac{1}{\tilde{\beta}_{ij}(g)}\log\bigl(1 + \frac{1}{\omega_{ij}(g)}\bigr)$ is convex which can be verified by checking the Hessian matrix. 
Applying the inequality \cite[Appendix A]{Dinh:JSAC:18}, we iteratively convexify constraint \eqref{eq:OP2c1} at iteration $\kappa +1$ as
\begin{IEEEeqnarray}{rCl}\label{eq:OPsyn4bConvex}
\mathcal{R}_{ij}^{\ul,(\kappa)}\bigr(\tilde{\beta}_{ij}(g),\omega_{ij}(g)\bigl) \triangleq a_{ij}^{(\kappa)} - b_{ij}^{(\kappa)}\omega_{ij}(g) - c_{ij}^{(\kappa)}\tilde{\beta}_{ij}(g) \geq \frac{\tau_{ij}(g)}{W^{\ul}},\ \forall  i\in\mathcal{I}, j\in\mathcal{J}_i
\end{IEEEeqnarray}
where $a_{ij}^{(\kappa)} \triangleq 2\frac{1}{\tilde{\beta}_{ij}^{(\kappa)}(g)}\log\bigl(1 + \frac{1}{\omega_{ij}^{(\kappa)}(g)}\bigr) + \frac{1}{\tilde{\beta}_{ij}^{(\kappa)}(g)(\omega_{ij}^{(\kappa)}(g)+1)}$, $b_{ij}^{(\kappa)} \triangleq \frac{1}{\tilde{\beta}_{ij}^{(\kappa)}(g)\omega_{ij}^{(\kappa)}(g)(\omega_{ij}^{(\kappa)}(g)+1)}$ and $c_{ij}^{(\kappa)}\triangleq  \frac{1}{(\tilde{\beta}_{ij}^{(\kappa)}(g))^2}\log\bigl(1 + \frac{1}{\omega_{ij}^{(\kappa)}(g)}\bigr)$ are positive constants. Here $\tilde{\beta}_{ij}^{(\kappa)}(g)$ and $\omega_{ij}^{(\kappa)}(g)$  are the feasible points of $\tilde{\beta}_{ij}(g)$ and $\omega_{ij}(g)$ obtained at iteration $\kappa$, respectively. It is clear that the function $\mathcal{R}_{ij}^{\ul,(\kappa)}\bigr(\tilde{\beta}_{ij}(g),\omega_{ij}(g)\bigl)$ is concave lower bound of $\frac{1}{\tilde{\beta}_{ij}(g)}\log\bigl(1 + \frac{1}{\omega_{ij}(g)}\bigr)$.
Next, applying \cite[Eq. (B.1)]{Dinh:JSAC:18} to  $\frac{p_{ij}(g) }{\tau_{ij}(g)}$ in \eqref{eq:OP2d} yields
\begin{equation}\label{eq:OP2dconvex}
    \frac{\mathsf{S}_{\ul}}{2}\Bigl(\frac{1}{\tau^{(\kappa)}_{ij}(g)p^{(\kappa)}_{ij}(g) }p^2_{ij}(g)  + \frac{p^{(\kappa)}_{ij}(g)}{2\tau_{ij}(g) - \tau^{(\kappa)}_{ij}(g)} \Bigr) +  L\frac{\theta_{ij}}{2}c_{ij}\mathsf{S}_{B}\mathsf{f}_{ij}^2(g)   \leq  \mathcal{E}^{\max},\ \forall  i\in\mathcal{I}, j\in\mathcal{J}_i
\end{equation}
which is the convex constraint.

Bearing all the above in mind, we solve the following inner convex approximate  program at iteration $\kappa +1$:
\begin{subequations} \label{eq:OP3}
	\begin{IEEEeqnarray}{cl}
	\quad&	\underset{\substack{\mathbf{p}(g), \mathsf{\mathbf{f}}(g), \tilde{\boldsymbol{\beta}}(g) \\ t(g), \boldsymbol{\tau}(g), \boldsymbol{\omega}(g)  }}{\mathrm{minimize}} \quad   C(g) \triangleq (1-\alpha)\frac{t(g)}{T_0} + \bar{C}(g)\label{eq:OP3a} \\
		&\st  \quad t^{\dl,co}_{ij}(g) + L\frac{c_{ij}\mathsf{S}_B}{\mathsf{f}_{ij}(g)} + \frac{\mathsf{S}_{\ul}}{\tau_{ij}(g)} \leq t(g),\ \forall  i\in\mathcal{I}, j\in\mathcal{J}_i,                        \label{eq:OP3b}\\
		&\qquad 0.5\bigl(p_{ij}(g) + \omega_{ij}(g)\bigr)  \geq \Bigl\|\sqrt{\frac{W^{\ul}N_0}{K_i\varphi_{ij}(g)}};\quad 0.5(p_{ij}(g) - \omega_{ij}(g))\Bigr\|,\ \forall  i\in\mathcal{I}, j\in\mathcal{J}_i, \label{eq:OP3c}\\
		&\qquad\sum_{i\in\mathcal{I}}\sum_{j\in\mathcal{J}_i}\frac{1}{\tilde{\beta}_{ij}(g)} \leq 1, \label{eq:OP3d}\\
	   	& \qquad \eqref{eq:OP1d}, \eqref{eq:OP1e},  \eqref{eq:OP2e}, \eqref{eq:OPsyn4bConvex}, \eqref{eq:OP2dconvex} \label{eq:OP3e}
	\end{IEEEeqnarray}
\end{subequations}
{\color{black}where the SOC constraint \eqref{eq:OP3c} is derived from \eqref{eq:OP2c2}}. We successively solve \eqref{eq:OP3} and update the optimization variables $(\mathbf{p}^{(\kappa)}(g),  \tilde{\boldsymbol{\beta}}^{(\kappa)}(g), \boldsymbol{\tau}^{(\kappa)}(g), \boldsymbol{\omega}^{(\kappa)}(g))$ until convergence.
The proposed  path-following procedure  to solve \eqref{eq:OP1RA} is summarized in Algorithm \ref{alg_2}. The initial feasible  values for   $(\mathbf{p}^{(0)}(g),  \tilde{\boldsymbol{\beta}}^{(0)}(g), \boldsymbol{\tau}^{(0)}(g), \boldsymbol{\omega}^{(0)}(g))$ are required for starting the IA procedure. We first randomly generate $p^{(0)}_{ij}(g) \in \bigr[ \SNR^{\min}\frac{W^{\ul}N_0}{K_i\varphi_{ij}(g)}, P^{\max}_{ij}\bigl], \forall i,j   $ and then set $\tilde{\beta}_{ij}^{(0)}(g) = J,\ \tau_{ij}^{(0)}(g) =  \frac{1}{J}W^{\ul}\log\bigl(1 + \frac{p^{(0)}_{ij}(g)K_i\varphi_{ij}(g)}{W^{\ul}N_0}\bigr),\  \omega_{ij}^{(0)}(g) = \frac{W^{\ul}N_0}{p^{(0)}_{ij}(g)K_i\varphi_{ij}(g)},  \forall i,j$.

\begin{algorithm}[t]
\begin{algorithmic}[1]
\fontsize{9}{9}\selectfont
\protect\caption{Proposed  Path-Following Procedure for Solving  \eqref{eq:OP1RA}}
\label{alg_2}
\global\long\def\algorithmicrequire{\textbf{Initialization:}}
\REQUIRE  Set $\kappa:=0$ and  choose initial feasible values for   $(\mathbf{p}^{(0)}(g),  \tilde{\boldsymbol{\beta}}^{(0)}(g), \boldsymbol{\tau}^{(0)}(g), \boldsymbol{\omega}^{(0)}(g))$ to constraints
 in \eqref{eq:OP3}
\REPEAT
\STATE Solve  \eqref{eq:OP3} to obtain the optimal solutions $\bigl(\mathbf{p}^{*}(g), \mathsf{\mathbf{f}}^{*}(g), \tilde{\boldsymbol{\beta}}^{*}(g), t^{*}(g), \boldsymbol{\tau}^{*}(g), \boldsymbol{\omega}^{*}(g)\bigr)$

\STATE Update\ \ $\bigl(\mathbf{p}^{(\kappa+1)}(g),  \tilde{\boldsymbol{\beta}}^{(\kappa+1)}(g), \boldsymbol{\tau}^{(\kappa+1)}(g), \boldsymbol{\omega}^{(\kappa+1)}(g)\bigr) := \bigl(\mathbf{p}^{*}(g),  \tilde{\boldsymbol{\beta}}^{*}(g), \boldsymbol{\tau}^{*}(g), \boldsymbol{\omega}^{*}(g)\bigr)$
\STATE Set $\kappa:=\kappa+1$
\UNTIL Convergence\\
\STATE{\textbf{Output:}} The optimal solutions $\bigl(\mathbf{p}^{*}(g), \mathsf{\mathbf{f}}^{*}(g), \boldsymbol{\beta}^{*}(g)\bigr)$  where $\ds \beta^{*}_{ij}(g)=1/\tilde{\beta}^{*}_{ij}(g), \forall i,j$
\end{algorithmic} \end{algorithm}

\textit{Convergence and complexity analysis:} The   path-following Algorithm \ref{alg_2} is based on the IA  framework \cite{Marks:78},  where  all approximate functions in \eqref{eq:OP3} are satisfied IA properties in \cite{Beck:JGO:10}. In particular, Algorithm \ref{alg_2} produces better solutions after each iteration, which converge to at least a local optimal solution when $\kappa \rightarrow\infty$, satisfying the Karush-Kuhn-Tucker (KKT) conditions \cite[Theorem 1]{Marks:78}. Problem \eqref{eq:OP3} includes $7J+1$ linear and conic constraints and $5J+1$ scalar decision variables. By a general interior-point method \cite[Chapter 6]{Ben:2001}, the worst-case of per-iteration complexity of Algorithm \ref{alg_2} is $\mathcal{O}\bigl(\sqrt{7J}(5J)^3 \bigr)$.

\subsection{Proposed Network-Aware Optimization Algorithms}\label{sec:FedFogOptiD}
{\color{black}We note that \eqref{eq:OP3} is a discrete convex program of \eqref{eq:OP1RA} in each  global round $g\in\mathcal{G}$. Towards a practical application, we use the theoretical results above to develop the network-aware optimization algorithms in distributed environments due to the causal setting.
\begin{assumption}\label{assp:4} In addition to Assumption \ref{assp:1},
  we further assume that the local loss function $F_{ij}(\mathbf{w})$, $\forall i\in\mathcal{I}$ and $j\in\mathcal{J}_i$ is $\lambda$-strongly convex and non-increasing.
\end{assumption}
\begin{proposition}\label{pro:1} Let Assumption \ref{assp:4} hold.
  Since the cost function $C(g)$ is a discrete convex function, there always exists $G^* > 0$ as a minimizer of the  problem: $  G^* := \argmin_{g\in\mathcal{G}} C(g),$ where $C(G^*-1) \geq C(G^*)$ and $C(G^*) \leq C(G^*+1)$.
  \end{proposition}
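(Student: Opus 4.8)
The plan is to split the claim into two parts: first verify that $C(g)$ is discrete convex in the round index $g$, and then read off from this convexity both the existence of a minimizer and the two one-step optimality inequalities. Throughout I would work with the forward difference $\Delta C(g)\triangleq C(g+1)-C(g)$, since discrete convexity is precisely the statement that $\Delta C(g)$ is non-decreasing in $g$, i.e. $C(g+1)-2C(g)+C(g-1)\ge 0$.

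For the convexity step I would use the decomposition $C(g)=\alpha F(\mathbf{w}^g)/F_0+(1-\alpha)\big(\sum_{g'=0}^{g}T(g')\big)/T_0$, which gives
$$\Delta C(g)=\frac{\alpha}{F_0}\big(F(\mathbf{w}^{g+1})-F(\mathbf{w}^g)\big)+\frac{1-\alpha}{T_0}\,T(g+1).$$
The time summand contributes $\tfrac{1-\alpha}{T_0}T(g+1)$, whose second difference is $\tfrac{1-\alpha}{T_0}\big(T(g+1)-T(g)\big)$; this is non-negative because the per-round delay is non-decreasing (it is affine when the active user set is fixed, and strictly increasing under the flexible-aggregation schedule, where slow UEs are only ever added). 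For the loss summand I would invoke Assumption 4 together with Theorem 1: $F(\mathbf{w}^g)-F(\mathbf{w}^*)$ is non-negative, non-increasing along the $\FedFog$ trajectory, and driven to $0$, so its increments $F(\mathbf{w}^{g+1})-F(\mathbf{w}^g)\le 0$ shrink in magnitude toward $0$ and are therefore non-decreasing, i.e. the loss term is discrete convex. A non-negatively weighted sum of two discrete-convex sequences is discrete convex, so $C(g)$ is discrete convex.

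Given convexity, the existence-and-optimality step is short. Because $\Delta C$ is non-decreasing it changes sign at most once, and it must eventually be positive: by smoothness $F(\mathbf{w}^{g+1})-F(\mathbf{w}^g)\to 0$ while $\tfrac{1-\alpha}{T_0}T(g+1)$ stays bounded below by a strictly positive constant (every round incurs a positive delay). Setting $G^*\triangleq\min\{g\in\mathcal{G}:\Delta C(g)\ge 0\}$ then yields a finite, interior minimizer: $\Delta C(G^*)\ge 0$ gives $C(G^*)\le C(G^*+1)$, and $\Delta C(G^*-1)<0$ gives $C(G^*-1)\ge C(G^*)$, which are exactly the asserted inequalities, while monotonicity of $\Delta C$ makes this local minimizer global. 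The strict first-step gain (a substantial loss drop outweighing one weighted round of delay) forces $G^*>0$.

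The main obstacle I anticipate is rigorously justifying discrete convexity of the \emph{loss} term. Theorem 1 only supplies an upper bound $\max\{\psi^2\bar{Q}^0,(16^2/\lambda^2)G\Theta\}/(G+\psi)^2$ on $\mathbb{E}\{\|\mathbf{w}^g-\mathbf{w}^*\|^2\}$, and that bound is not itself convex in $g$ over the whole range, so convexity cannot simply be transferred from the bound to $F(\mathbf{w}^g)$. The honest route is to control the \emph{actual} expected increments directly: use strong convexity and $\mu$-smoothness to sandwich $F(\mathbf{w}^g)-F(\mathbf{w}^*)$ between multiples of $\|\mathbf{w}^g-\mathbf{w}^*\|^2$, and the diminishing step size $\eta_g=\tfrac{16}{\lambda(g+1+\psi)}$ to show these increments decay monotonically. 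Because the mini-batch gradients are stochastic, this argument is cleanest read as a statement about the expected loss, and that is where I expect the reasoning to be most delicate.
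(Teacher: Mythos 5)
Your decomposition of the forward difference $\Delta C(g)=\frac{\alpha}{F_0}\bigl(F(\mathbf{w}^{g+1})-F(\mathbf{w}^g)\bigr)+\frac{1-\alpha}{T_0}T(g+1)$ is the right object to study, and your existence argument (take $G^*$ as the first sign change of $\Delta C$, which convexity then promotes to a global minimizer) is sound \emph{given} discrete convexity. The genuine gaps are in the two monotonicity claims that carry your convexity proof. First, for the time term you need $T(g+1)\geq T(g)$, and you assert that the per-round delay is non-decreasing; nothing in the model supports this --- the fading realizations change independently from round to round and the resource allocation is re-optimized every round, so $T(g)$ is a (roughly stationary) random sequence, not a monotone one. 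The paper only ever uses the weaker fact $T(g)\geq 0$, i.e.\ the \emph{cumulative} time is non-decreasing, which does not give convexity of that term. Second, for the loss term your inference ``non-increasing, tends to zero, therefore the increments are non-decreasing'' is a non-sequitur: the staircase sequence $1,1,\tfrac{1}{2},\tfrac{1}{2},\tfrac{1}{4},\tfrac{1}{4},\dots$ is non-increasing and vanishes, yet its increments $0,-\tfrac{1}{2},0,-\tfrac{1}{4},\dots$ oscillate. You flag this weakness yourself, but the repair you propose cannot close it: sandwiching $F(\mathbf{w}^g)-F(\mathbf{w}^*)$ between $\tfrac{\lambda}{2}\|\mathbf{w}^g-\mathbf{w}^*\|^2$ and $\tfrac{\mu}{2}\|\mathbf{w}^g-\mathbf{w}^*\|^2$ is loose by the condition number $\mu/\lambda$, so even monotone decay of $\mathbb{E}\{\|\mathbf{w}^g-\mathbf{w}^*\|^2\}$ (which Theorem \ref{theo:1} does not provide either --- it gives only an upper bound) would not imply monotone increments of the expected loss; and the paper itself concedes that plain monotonicity of $F(\mathbf{w}^g)$ can already fail due to mini-batch noise and non-i.i.d.\ data.

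You should also know that the paper does not attempt the proof you attempted. It folds discrete convexity into the premise (the proposition literally reads ``Since the cost function $C(g)$ is a discrete convex function \ldots''), and its supporting text establishes only the two weak monotonicity facts --- $F(\mathbf{w}^g)$ non-increasing under Assumption \ref{assp:4} and $\sum_{g'\leq g}T(g')$ non-decreasing --- from which the validity of the stopping rule (terminate at the first $g$ with $C(g)-C(g-1)>0$) is argued informally; stochastic violations are then handled heuristically by waiting $\bar{k}$ consecutive rounds in Algorithm \ref{alg:networkawarealg}. Note also that the bare existence of a minimizer satisfying $C(G^*-1)\geq C(G^*)$ and $C(G^*)\leq C(G^*+1)$ is trivial for any function on a finite set; the only substantive content is that the \emph{first} local minimum is global, which is exactly what convexity buys. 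So you set out to prove strictly more than the paper establishes, and the places where your argument breaks are precisely the places the paper sidesteps by assumption.
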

\noindent From Assumption \ref{assp:4}, we can show that  the global loss function $F(\mathbf{w}^g)$ is non-increasing  while the completion time function $\sum_{g\in\mathcal{G}} T(g)$ is non-decreasing over time. 
  It implies that we can stop $\FedFog$ at round $G^*$ once the stopping condition $C(G^*) - C(G^*-1) > 0$ is met, without incurring in extra costs. The optimal solution $G^*$ can be found by tracking the sign of two consecutive values of the cost function $C(g)$. However, the non-increasing sequence of the global loss function may not hold true in all global rounds due to non-i.i.d.  data and stochastic noise  of  the  random  sampling  of  mini-batchs. In this case, a few more rounds are needed to avoid an improper early convergence of $\FedFog$. This phenomenon will be empirically justified by numerical results.}

\subsubsection{Full User Aggregation}
  
\begin{algorithm}[!htbp]
	\begin{algorithmic}[1]
		\fontsize{9}{9}\selectfont
		\protect\caption{Proposed $\FedFog$-based Network-Aware Optimization Algorithm  with Full User Aggregation}
		\label{alg:networkawarealg}
		\STATE \textbf{Input:}  $L$, $G$, $I$, $J_i$,  $\mathcal{D}_{ij},  \bar{k}, B, \mathcal{E}^{\max}, \epsilon, \mathsf{SNR}^{\min}, P^{\max}_{ij}, \mathsf{f}^{\min}_{ij},$ and  $\mathsf{f}^{\max}_{ij},$    $\forall i,j$
		\STATE \textbf{Initial parameters at CS:} Initialize  $\mathbf{w}^{0}$, $\eta^{0}$, and set $G^*=G, g=0$
		\WHILE{ $g\leq G-1$}
		     \STATE Run Algorithm \ref{alg_2}  and then broadcast the optimal solutions to UEs using dedicated control channel //(S1)
		     
			 \STATE CS broadcasts $\mathbf{w}^g$ to all FSs //(S2-1)

	     \FOR{$i\in\mathcal{I}$ \textit{in parallel}}
	         \STATE  FS $i$ broadcasts $\mathbf{w}^g$ to $J_i$ UEs //(S2-2)
	         \FOR{$j\in\mathcal{J}_i$ \textit{in parallel}}
	         
	            \STATE  Overwrite $\mathbf{w}^g_{ij,0}:=\mathbf{w}^g$
	             
	             \FOR{$\ell=0,1,\cdots,L-1$}

			      \STATE UE $(i,j)$ randomly samples a new mini-batch $\mathcal{B}_{ij,\ell}^g$ with size $B$; and computes $\nabla F_{ij}(\mathbf{w}^g_{ij,\ell}|\mathcal{B}_{ij,\ell}^g)$ and  $F_{ij}(\mathbf{w}^g)$ //(S3)
			      
			      \ENDFOR
			      \STATE UE $(i,j)$ sends $\Delta\mathbf{w}_{ij}^g \triangleq \sum_{\ell\in\mathcal{L}}\nabla F_{ij}(\mathbf{w}^g_{ij,\ell}|\mathcal{B}_{ij,\ell}^g)$ and $F_{ij}(\mathbf{w}^g)$ to FS $i$ //(S4-1)
			\ENDFOR
			   \STATE  FS $i$ calculates $\mathbf{w}_i^{g} :=   \sum_{j\in\mathcal{J}_i}\Delta\mathbf{w}_{ij}^g$ and $F_{i}(\mathbf{w}^g):=\sum_{j\in\mathcal{J}_i}F_{ij}(\mathbf{w}^g)$, and  then forwards them  to CS //(S4-2 \& S4-3)
			\ENDFOR
		   \STATE CS performs global training update $\mathbf{w}^{g+1} := \mathbf{w}^{g} -  \eta^g\frac{\sum_{i\in\mathcal{I}}\Delta\mathbf{w}^{g}_{i} }{J}$; and calculates the cost function $C(g)=\alpha\frac{\sum_{i\in\mathcal{I}}F_{i}(\mathbf{w}^g)}{JF_0} + (1-\alpha)\frac{\sum_{g'=0}^{g} T(g')}{T_0}$
//(S5)
        \IF{$C(g)-C(g-1)\geq \epsilon$} 
          \IF{($k\geq \bar{k}$ \&\& $g\geq \bar{G}$)}
            \STATE Set $G^*=g-\bar{k}$; Break and go to step 28
           \ENDIF
           \STATE Set $k:=k+1$
        \ELSE
        \STATE  $k\leftarrow 0$
        \ENDIF
        \STATE Set $g:=g+1$ 
		\ENDWHILE
		
\STATE \textbf{Output:}	$\mathbf{w}^*, G^*, F(\mathbf{w}^{G^*})$ and $T^{*}_{\Sigma }=\sum_{g=0}^{G^*+\bar{k}+1}T(g)$
		\end{algorithmic} 
\end{algorithm}

The complete algorithm with the full user aggregation is summarized in Algorithm \ref{alg:networkawarealg}, where $\epsilon$ in Step 18 is a small positive constant. {\color{black}In Step 19, the condition $g\geq \bar{G}$ is added to guarantee a comparable accuracy of the learning model, where $\bar{G}$ is the required minimum  number of global rounds. The actual value of $\bar{G}$ may depend on the specific ML applications, FL algorithms and  datasets.} The large variance of the global loss value, which is due to non-i.i.d. data and  stochastic noise of the random sampling of mini-batchs, may lead to an improper early stop in Step 18. To tackle this issue, CS may wait for some more global rounds to ensure the convergence of $\FedFog$. If the condition $C(g)-C(g-1)\geq \epsilon$ is met for $\bar{k}>0$ consecutive rounds, we terminate Algorithm \ref{alg:networkawarealg}. We can see that to calculate the last cost value, an additional round of global and local updates is carried out at the end. As a result, the effective completion time for implementing $\FedFog$ in Algorithm \ref{alg:networkawarealg} is given as $T^{*}_{\Sigma }= \sum_{g=0}^{G^*+\bar{k}+1}T(g)$.

\subsubsection{Flexible User Aggregation} 
We can see that in Algorithm \ref{alg:networkawarealg}, CS needs to wait for the slowest UEs (i.e., due to low computing capability, low battery level and unfavorable links) to perform the global training update in each round, which may result in  higher training delay (so-called ``straggler effect''). As shown in \cite{LiICLR2020}, each UE is only required to activate sometime but still guarantees the convergence of $\FedAvg$. Thus, our next endeavor is to propose a flexible user aggregation  to  reduce completion time. The key idea is to train  strong UEs  first to obtain a certain accuracy level, and then more UEs will be allowed to join the training process until convergence.

To achieve the above goal, we relax problem \eqref{eq:OP2} as
\begin{subequations} \label{eq:OP2relax}
	\begin{IEEEeqnarray}{cl}
		\underset{\substack{\mathbf{p}(g), \mathsf{\mathbf{f}}(g), \boldsymbol{\beta}(g) \\ \mathbf{t}(g), \boldsymbol{\tau}(g)  }}{\mathrm{minimize}} &\quad   \hat{C}(g) \triangleq (1-\alpha)\frac{\sum_{i\in\mathcal{I}}\sum_{j\in\mathcal{J}_i}t_{ij}(g)}{JT_0} + \bar{C}(g)\label{eq:OP2relaxa} \\
		\st & \quad t^{\dl,co}_{ij}(g) + L\frac{c_{ij}\mathsf{S}_B}{\mathsf{f}_{ij}(g)} + \frac{\mathsf{S}_{\ul}}{\tau_{ij}(g)} \leq t_{ij}(g),\ \forall  i\in\mathcal{I}, j\in\mathcal{J}_i,                        \label{eq:OP2relaxb}\\
		& \quad \eqref{eq:OP1d}, \eqref{eq:OP1e}, \eqref{eq:OP1f}, \eqref{eq:OP2c}, \eqref{eq:OP2d}, \eqref{eq:OP2e}\label{eq:OP2relaxc}
	\end{IEEEeqnarray}
\end{subequations}
where $t_{ij}(g)$ is considered as a soft-latency of UE $(i,j)$ and $\mathbf{t}(g)\triangleq\{t_{ij}(g)\}_{\forall i,j}$. For the objective  \eqref{eq:OP2relaxa}, CS will favor  UEs with better conditions by allocating more resources to them, and thus achieving lower latency than other UEs. This problem can be directly solved by Algorithm \ref{alg_2}. Let $\mathcal{S}(g)$ be the set of  $S(g)=|\mathcal{S}(g)|$ UEs selected  at round $g$. Given the optimal solution $\{t_{ij}^*(0)\}_{\forall i,j}$ obtained from solving \eqref{eq:OP2relax} at the first round, CS  determines a time threshold $\mathcal{T}(0) := \mathcal{T}_{\min}$ to allow the first $S(0)=J_{\min}$ responded UEs (i.e., $J_{\min}$ UEs with the lowest delay) to participate in global updates, given as
\begin{equation}\label{eq:Tmin}
    \mathcal{T}(0):=\mathcal{T}_{\min} = \underset{(i,j)\in\mathcal{S}(0)}{\max}\{t_{ij}(0)\}
\end{equation}
where $J_{\min}\in(0,\ J]$ should be large enough to guarantee the quality of learning. CS then synchronizes $\mathcal{T}_{\min}$ to all FSs, and any UE $(i,j)$ with  higher latency (i.e., $t_{ij}(0) > \mathcal{T}_{\min}, \forall i,j $) will be ignored from the local aggregations at FSs. When the  certain accuracy level is obtained at round $g$, i.e.,
\begin{equation}\label{eq:ACClevel}
    \Bigl\|\frac{1}{S(g)}\sum_{(i,j)\in\mathcal{S}(g)}\Delta\mathbf{w}_{ij}^g \Bigr\| < \xi
\end{equation}
we increase the time threshold $\mathcal{T}(g)$  by $\Delta \mathcal{T}$  to allow  weaker UEs to join the global update, i.e., $\mathcal{S}(g):=\mathcal{S}(g-1)\cup\{\text{UE}\ (i,j)|t_{ij}(g)\leq \mathcal{T}(g)\}$, where $\xi$ is a small positive constant. This procedure is repeated untill all UEs are joined the training process. We summarize the complete procedure of flexible user aggregation  in Algorithm \ref{alg:networkawarealgFUA}.

\begin{algorithm}[!htbp]
	\begin{algorithmic}[1]
		\fontsize{9}{9}\selectfont
		\protect\caption{Proposed $\FedFog$-based Network-Aware Optimization Algorithm  with Flexible User Aggregation}
		\label{alg:networkawarealgFUA}
		\STATE \textbf{Input:}  $L$, $G$, $I$, $J_i$,  $\mathcal{D}_{ij}, J_{\min}, \bar{k}, B, \epsilon, \xi, \Delta\mathcal{T}, \mathcal{E}^{\max}, \mathsf{SNR}^{\min}, P^{\max}_{ij}, \mathsf{f}^{\min}_{ij},$ and  $\mathsf{f}^{\max}_{ij}$    $\forall i,j$
		\STATE \textbf{Initial parameters at CS:} Initialize  $\mathbf{w}^{0}$, $\eta^{0}$,   and set $\hat{G}^*=G, g=0$
		\WHILE{ $g\leq G-1$}
		     \STATE Run Algorithm \ref{alg_2}  and then broadcast the optimal solutions to UEs using dedicated control channel //(S1)
		     \IF{$g=0$}
		     \STATE Calculate $\mathcal{T}(0):=\mathcal{T}_{\min}$ and $\mathcal{S}(0)=\{\text{UE}\ (i,j)|t_{ij}(0)\leq \mathcal{T}_{\min}, \forall i,j\}$ in \eqref{eq:Tmin}; Break and go to step 10
		     \ELSIF{the condition \eqref{eq:ACClevel} is met}
		     \STATE Update $\mathcal{T}(g) :=\mathcal{T}(g-1)+\Delta\mathcal{T}$ and   $\mathcal{S}(g):=\mathcal{S}(g-1)\cup\{\text{UE}\ (i,j)|t_{ij}(g)\leq \mathcal{T}(g)\}$
		     \ENDIF
		     
			 \STATE CS broadcasts $\mathbf{w}^g$ to all FSs //(S2-1)

	     \FOR{$i\in\mathcal{I}$ \textit{in parallel}}
	         \STATE  FS $i$ broadcasts $\mathbf{w}^g$ to $J_i$ UEs //(S2-2)
	         \FOR{$j\in\mathcal{J}_i$ \textit{in parallel}}
	         
	            \STATE  Overwrite $\mathbf{w}^g_{ij,0}:=\mathbf{w}^g$
	             
	             \FOR{$\ell=0,1,\cdots,L-1$}

			      \STATE UE $(i,j)$ randomly samples a new mini-batch $\mathcal{B}_{ij,\ell}^g$ with size $B$; and computes  $\nabla F_{ij}(\mathbf{w}^g_{ij,\ell}|\mathcal{B}_{ij,\ell}^g)$ and  $F_{ij}(\mathbf{w}^g)$ //(S3)
			      
			      \ENDFOR
			      \STATE UE $(i,j)$ sends $\Delta\mathbf{w}_{ij}^g \triangleq \sum_{\ell\in\mathcal{L}}\nabla F_{ij}(\mathbf{w}^g_{ij,\ell}|\mathcal{B}_{ij,\ell}^g)$ and $F_{ij}(\mathbf{w}^g)$ to FS $i$ //(S4-1)
			\ENDFOR
			   \STATE  FS $i$ calculates $\mathbf{w}_i^{g} :=   \sum_{j\in\mathcal{J}_i(g)}\Delta\mathbf{w}_{ij}^g$  and $F_{i}(\mathbf{w}^g):=\sum_{j\in\mathcal{J}_i(g)}F_{ij}(\mathbf{w}^g)$ where  $\mathcal{J}_i(g)$ is the subset of UEs with $t_{ij}(g) \leq \mathcal{T}(g)$, and  then forwards them  to CS //(S4-2 \& S4-3)
			\ENDFOR
		   \STATE CS performs global training update $\mathbf{w}^{g+1} := \mathbf{w}^{g} -  \eta^g\frac{\sum_{i\in\mathcal{I}}\Delta\mathbf{w}^{g}_{i} }{S(g)}$; and calculates the cost function $\hat{C}(g)=\alpha\frac{\sum_{i\in\mathcal{I}}F_{i}(\mathbf{w}^g)}{S(g)F_0} + (1-\alpha)\frac{\sum_{g'=0}^{g} \mathcal{T}(g')}{T_0}$
//(S5)
        \IF{($\hat{C}(g)-\hat{C}(g-1)\geq \epsilon$ \&\& $S(g)=J$)} 
           \IF{($k\geq \bar{k}$ \&\& $g\geq \bar{G}$)}
            \STATE Set $\hat{G}^*=g-\bar{k}$; Break and go to step 33
           \ENDIF
           \STATE Set $k:=k+1$
        \ELSE 
         \STATE $k\leftarrow 0$
        \ENDIF
        \STATE Set $g:=g+1$  
		\ENDWHILE
		
\STATE \textbf{Output:}	$\mathbf{w}^*, \hat{G}^*, F(\mathbf{w}^{\hat{G}^*})$ and $\hat{T}^{*}_{\Sigma }= \sum_{g=0}^{\hat{G}^*+\bar{k}+1}\mathcal{T}(g)$
		\end{algorithmic} 
\end{algorithm}

\section{Numerical Results}\label{sec:Numericalresults} 
In this section, we numerically evaluate our proposal algorithms in several scenarios. We first present the simulation setup in Section \ref{sec:Numericalresults}-A and validate the performance of $\FedFog$  in Section \ref{sec:Numericalresults}-B. The performance comparison of Algorithms \ref{alg:networkawarealg} and \ref{alg:networkawarealgFUA} over a wireless fog-cloud network will be provided in Section \ref{sec:Numericalresults}-C.

\noindent\subsection{Simulation Setup}
 \noindent\textbf{ML Model and Data Samples:} We consider an image  classification task using a multinomial logistic regression with a convex loss function. The regularization parameter is fixed to $10^{-4}$. We evaluate $\FedFog$ by training neural networks on MNIST and CIFAR-10 datasets.
 \begin{itemize}
     \item MNIST \cite{LecunMNIST98} contains 70K images of hand-written digits 0-9 with 60K training samples and 10K testing samples. We train a fully-connected Neural Network (FCNN) with a single hidden layer using ReLU activation and a softmax layer at the end. There are $(784+1)\times 10=7,850$ optimized parameters, where the input and output sizes of the NN model are $28\times 28=784$ and 10, respectively. The initial learning rate is set to $\eta^0=0.001$, which is decayed after every global round as $\eta^g=\frac{\eta^0}{1.01^g}$.
      \item CIFAR-10 \cite{CIFAR10}  consists of 60K  colour images in 10 different classes (e.g., airplanes, cars, birds, etc.) with 50K training images and 10K testing images, where each image in CIFAR-10 is  32$\times$32 colour image. We train a convolutional NN (CNN) which has  two $3\times 3$ convolution layers followed by $2\times 2$ maxPooling, one fully-connected layer (128 units) using ReLU activation and  a softmax  at the output layer. The  learning rate is set to  $\eta^g=\frac{\eta^0}{1.005^g}$ with $\eta^0=0.001$.
  \end{itemize}
  
 
\noindent\textbf{Data Distribution:}
Due to  the limited number of samples on datasets, we consider 100 UEs concurrently participating in the training process. There are 5 BSs (or FSs), each has 20 UEs. We consider non-i.i.d. distributed data across the network, where each UE has the same number of data samples but  contains only one of the ten classes. We generate an initial global model as $\mathbf{w}^0=\boldsymbol{0}$. 

\noindent\textbf{Simulation Parameters and Benchmark Schemes over Wireless Fog-Cloud Systems:}
\begin{figure}[!ht]
	\centering
	\includegraphics[width=0.45\columnwidth,trim={0cm 0.0cm 0cm 0.0cm}]{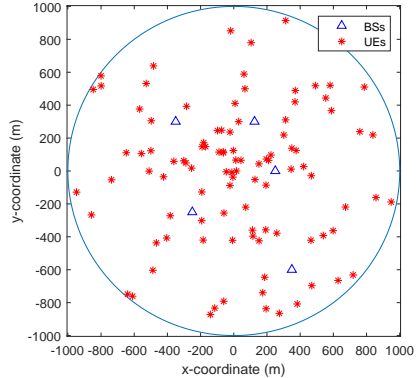}
		\vspace{-2pt}
	\caption{A system topology with $I=5$ FSs, $J=100$ UEs and $J_i=20$ UEs, $\forall i$.}
	\label{fig:Layout}
\end{figure}
\begin{table}[t]
	\centering  
	\captionof{table}{Simulation Parameters}
	\label{tab:Simulationparameter}
	\vspace{-5pt}
 	\scalebox{0.82}{
		\begin{tabular}{l|l}
			\hline
			Parameter & Value \\
			\hline\hline
			System bandwidth, $W=W^\dl=W^\ul$ & 10 MHz \\
			Noise power spectral density, $N_0$ & -174 dBm/Hz \\
			SNR threshold, $\SNR^{\min}$	&  1 dB\\
			Number of antennas at BS $i$, $K_i,\forall i$ & 8\\
		    Power budget at FSs, $P_{i}^{\max}
		    ,\forall i$ & 40 dBm\\ 
		     Effective capacitance coefficient, $\theta_{ij}/2,\forall i,j$ &	$10^{-28}$\\
		     Priority parameter, $\alpha$ & 0.7\\
		     Minimum  number of global rounds for MNIST, $\bar{G}$ & 250\\
		     Minimum  number of global rounds for CIFAR-10, $\bar{G}$ & 600\\
		    Energy   consumption requirement, $\mathcal{E}^{\max}$, for MNIST &   0.01 Joule \\
		     Energy   consumption requirement, $\mathcal{E}^{\max}$, for CIFAR-10 &   1 Joule \\
		    Reference loss and completion time, $(F_0, T_0)$, for MNIST &  (0.1,100)\\
		    Reference loss and completion time, $(F_0, T_0)$, for CIFAR-10 &  (1,1000)\\
		    Threshold for stopping condition, $\bar{k}$ & 5\\
			\hline		  				
		\end{tabular}
	}
\end{table}

We consider a system topology shown in Fig. \ref{fig:Layout}, where $5$ BSs and 100 UEs  are located within a circle of
1-km radius. The locations of BSs are fixed during the simulation. 
The large-scale fading (in dB) is generated as $\varphi_{ij}(g)=-103.8-20.9\log(d_{ij}(g))$, where $d_{ij}(g)$ (in km) is the distance between BS $i$ and UE ($i,j$) at round $g$ \cite{NguyenIoTFL2020}.
By the IEEE 754-2008 standard, we use 32-bit float type to store model weights and  the local loss value. To illustrate the heterogeneity of UEs, $P^{\max}_{ij}$ is uniformly distributed in $[10,\ 23]$ dBm, $c_{ij}$ is uniformly distributed in $[10,\ 20]$ cycles/bits, $\mathsf{f}_{ij}^{\max}$ is uniformly distributed in $[10^9,\ 3.10^9]$ cycles/s and  $\mathsf{f}_{ij}^{\min} = 10^6$ cycles/s.
The other parameters are specified in Table \ref{tab:Simulationparameter}, following  \cite{MaoJSAC16,NguyenIoTFL2020,DinhFL2019,VuCellfreeML2019,MChenFL2019}. We set $ \mathcal{E}^{\max}_{\text{CIFAR-10}} > \mathcal{E}^{\max}_{\text{MNIST}}$  since the batch size of CIFAR-10 is much larger than that of MNIST. In most cases,
Algorithm \ref{alg_2} converges in about 5 iterations. The results are averaged over 100 simulation trials.

For comparison purpose, we consider the following three schemes:
\begin{itemize}
 \item ``Equal Bandwidth (EB):'' Each UE $(i,j)$ at round $g$ is allocated the fixed portion of bandwidth as $\beta_{ij}(g)=1/J, \forall i,j$ in uplink.

\item ``Fixed Resource Allocation (FRA):'' Since the communication delay is often dominant  computation delay, we assume that UE $(i,j)$ uses its maximum transmit power (i.e., $p_{ij}(g)=P^{\max}_{ij}, \forall i,j,g$), and the frequency $\mathsf{f}_{ij}(g)$ is then computed by \eqref{eq:OP1b} and \eqref{eq:OP1e}.

\item ``Sampling scheme \cite{NguyenIoTFL2020,LiICLR2020}:''  At each global round, only a subset $J(g)$ is selected at random to participate in the training process. This scheme allows more bandwidth to be allocated to UEs in the uplink links.
\end{itemize}

\subsection{Effect of Hyperparameters on $\FedFog$ (Algorithm \ref{alg:FedFog})}
 \begin{figure}[!ht]%
\centering
\subfigure{%
\label{fig:5-a}%
\includegraphics[height=2.0in]{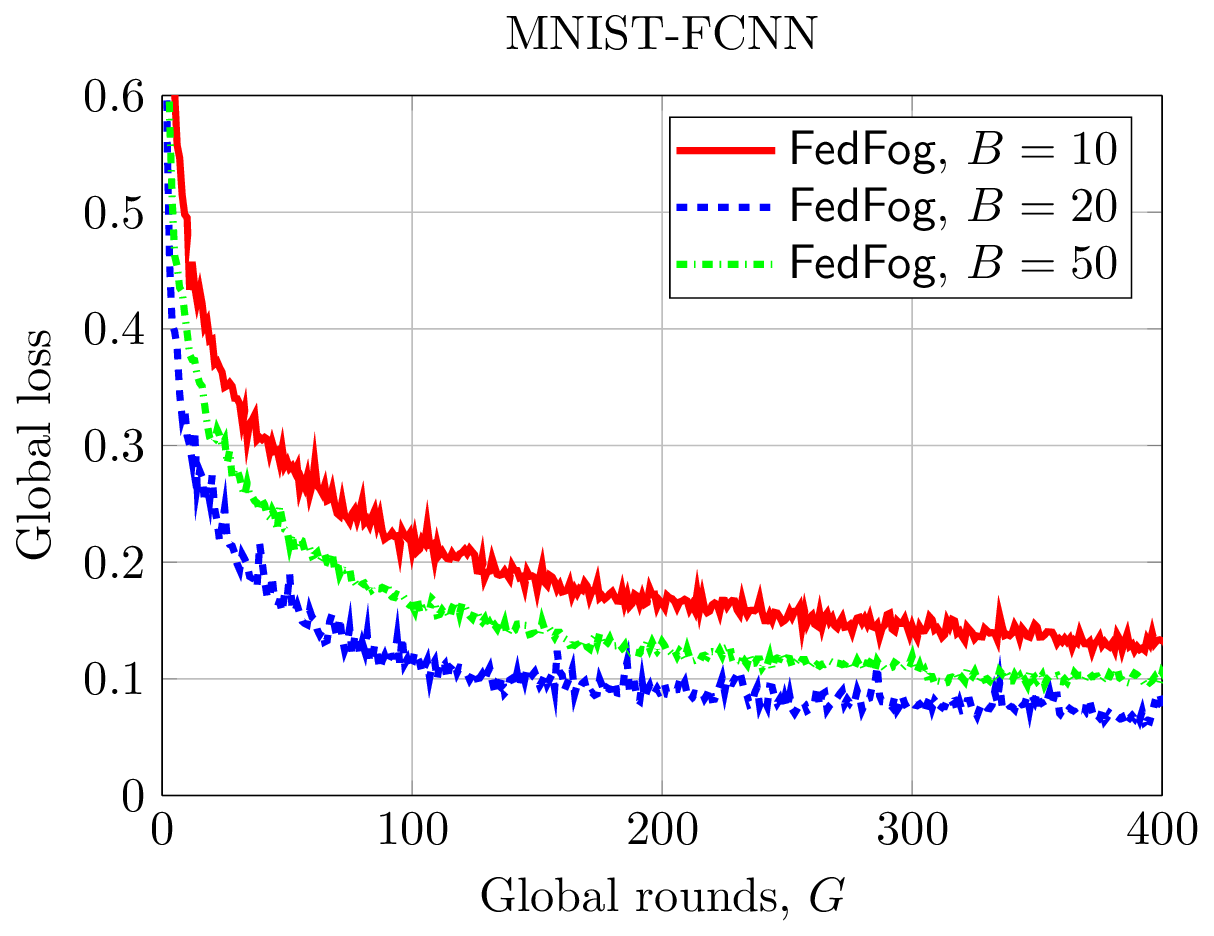}}%
\hspace{4pt}%
\subfigure{%
\label{fig:5-b}%
\includegraphics[height=2.0in]{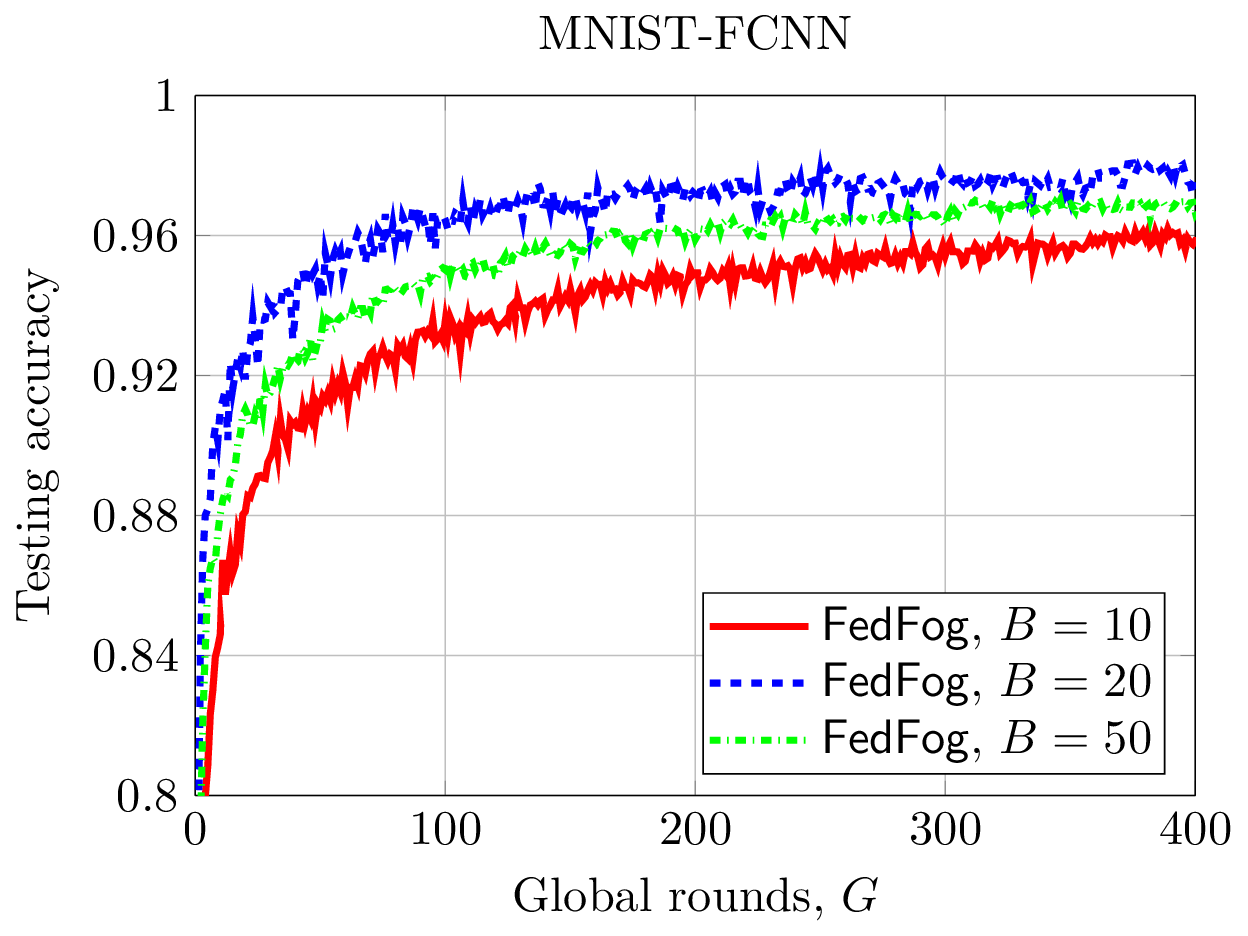}} \\
\subfigure{%
\label{fig:5-c}%
\includegraphics[height=2.0in]{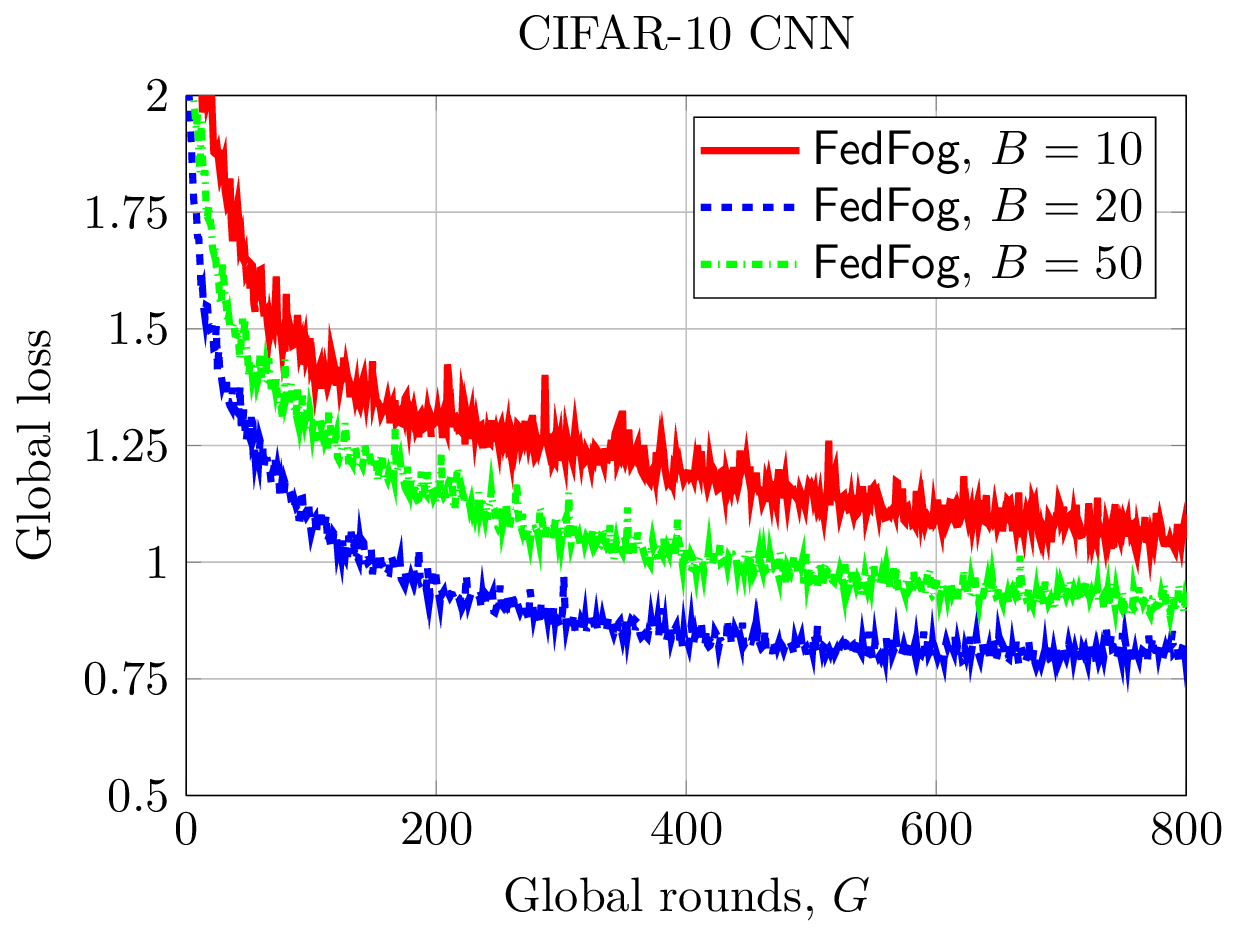}}%
\hspace{4pt}%
\subfigure{%
\label{fig:5-d}%
\includegraphics[height=2.0in]{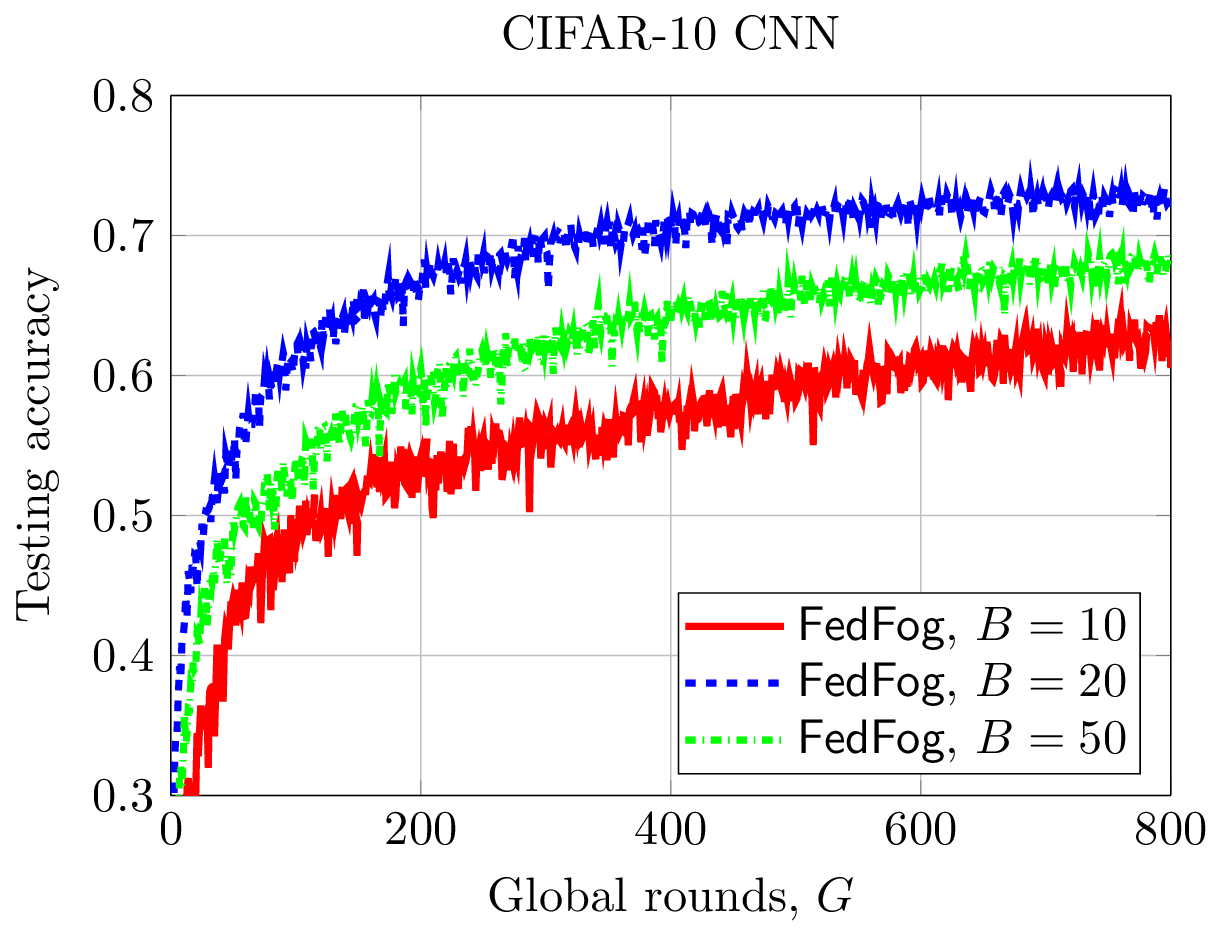}}%
\caption[]{Effect of mini-batch size $B$  on the convergence of $\FedFog$, with $L=10$.}
\label{fig:Alg1Batchsize}%
\end{figure}

In Fig. \ref{fig:Alg1Batchsize}, we investigate the effect of mini-batch size $B\in\{10,20,50\}$ on the performance of $\FedFog$ for both MNIST and CIFAR-10 datasets. It can be observed that increasing the size of the mini-batch results in a better convergence rate of $\FedFog$ since more data are trained in each iteration. {\color{black}However, a very large mini-batch size (e.g., $B=50$) slows down the convergence rate of $\FedFog$ as it requires more local iterations for the local model training to obtain the same accuracy of the learning model with the medium mini-batch size (e.g., $B=20$) in each round}. In addition,  large mini-batch sizes will consume more power and require higher computation at local UEs.

 \begin{figure}[!ht]%
\centering
\subfigure{%
\label{fig:6-a}%
\includegraphics[height=2.0in]{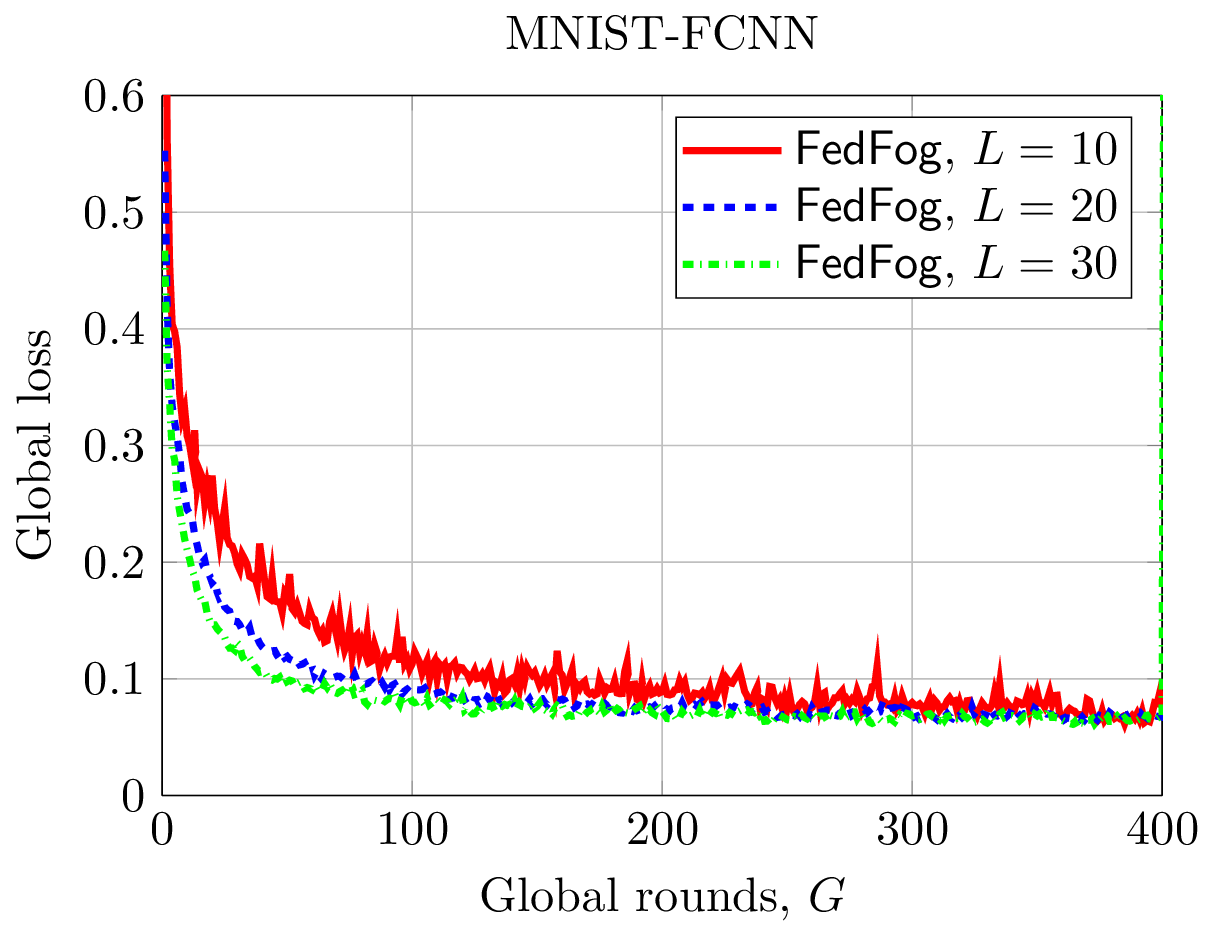}}%
\hspace{4pt}%
\subfigure{%
\label{fig:6-b}%
\includegraphics[height=2.0in]{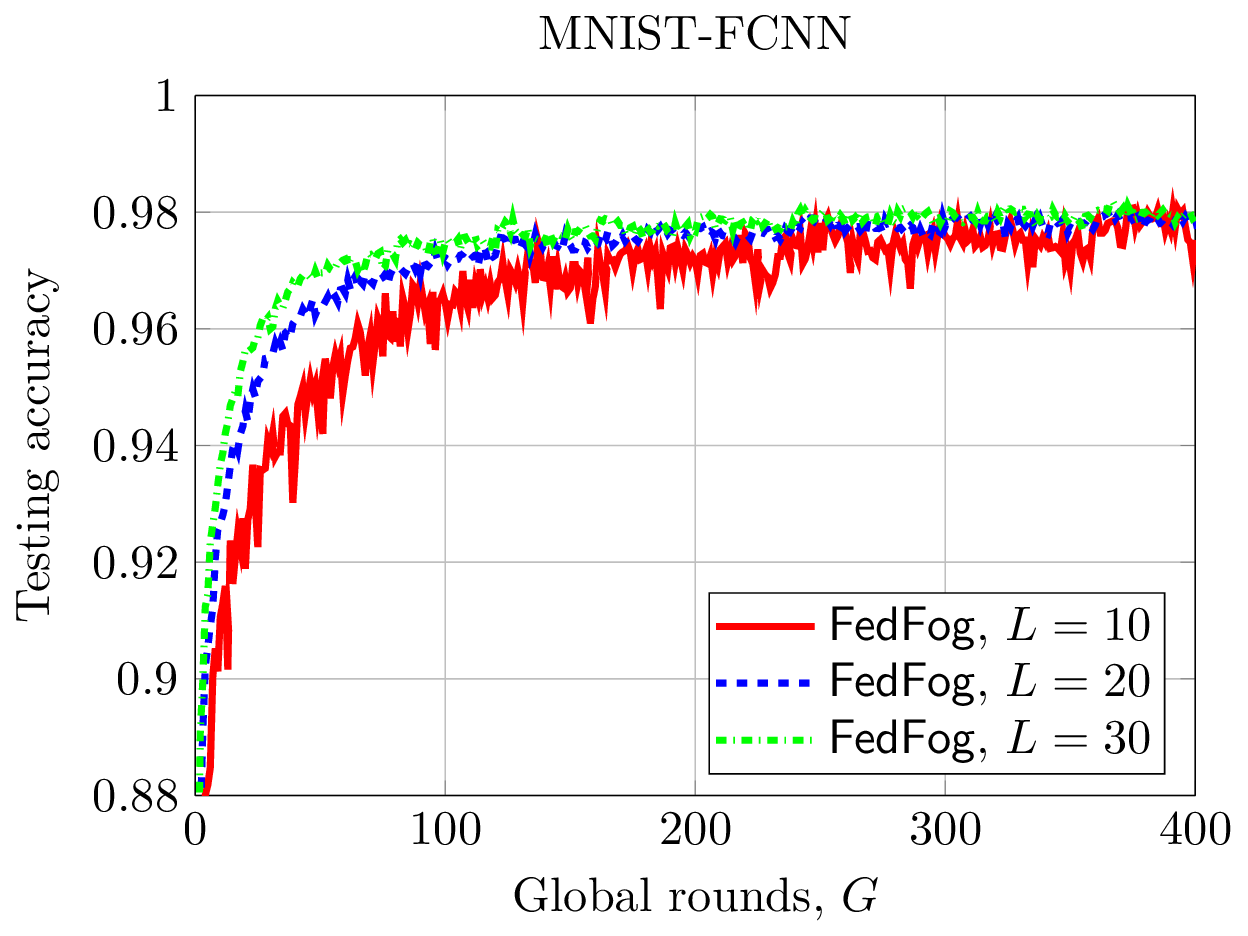}} \\
\subfigure{%
\label{fig:6-c}%
\includegraphics[height=2.0in]{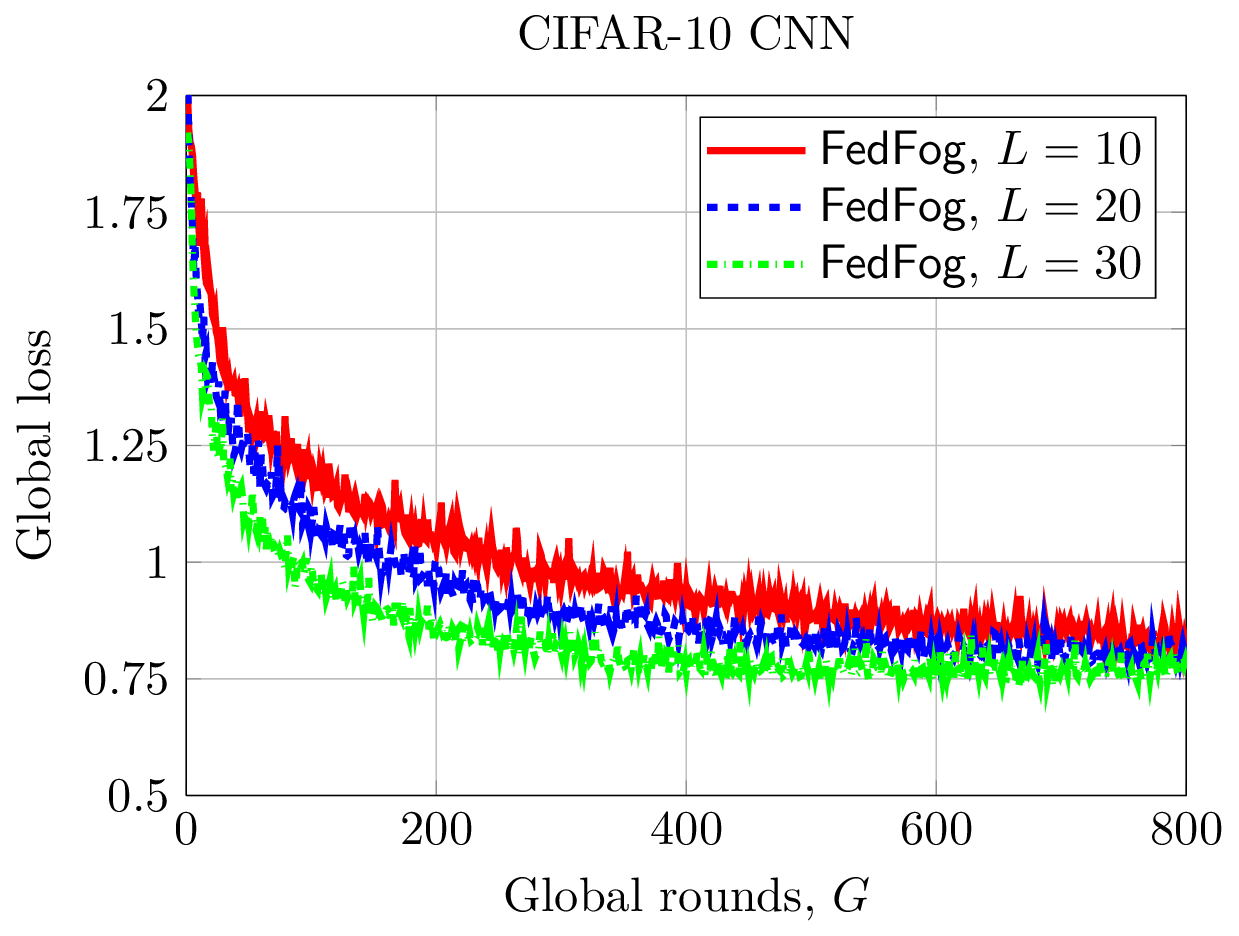}}%
\hspace{4pt}%
\subfigure{%
\label{fig:6-d}%
\includegraphics[height=2.0in]{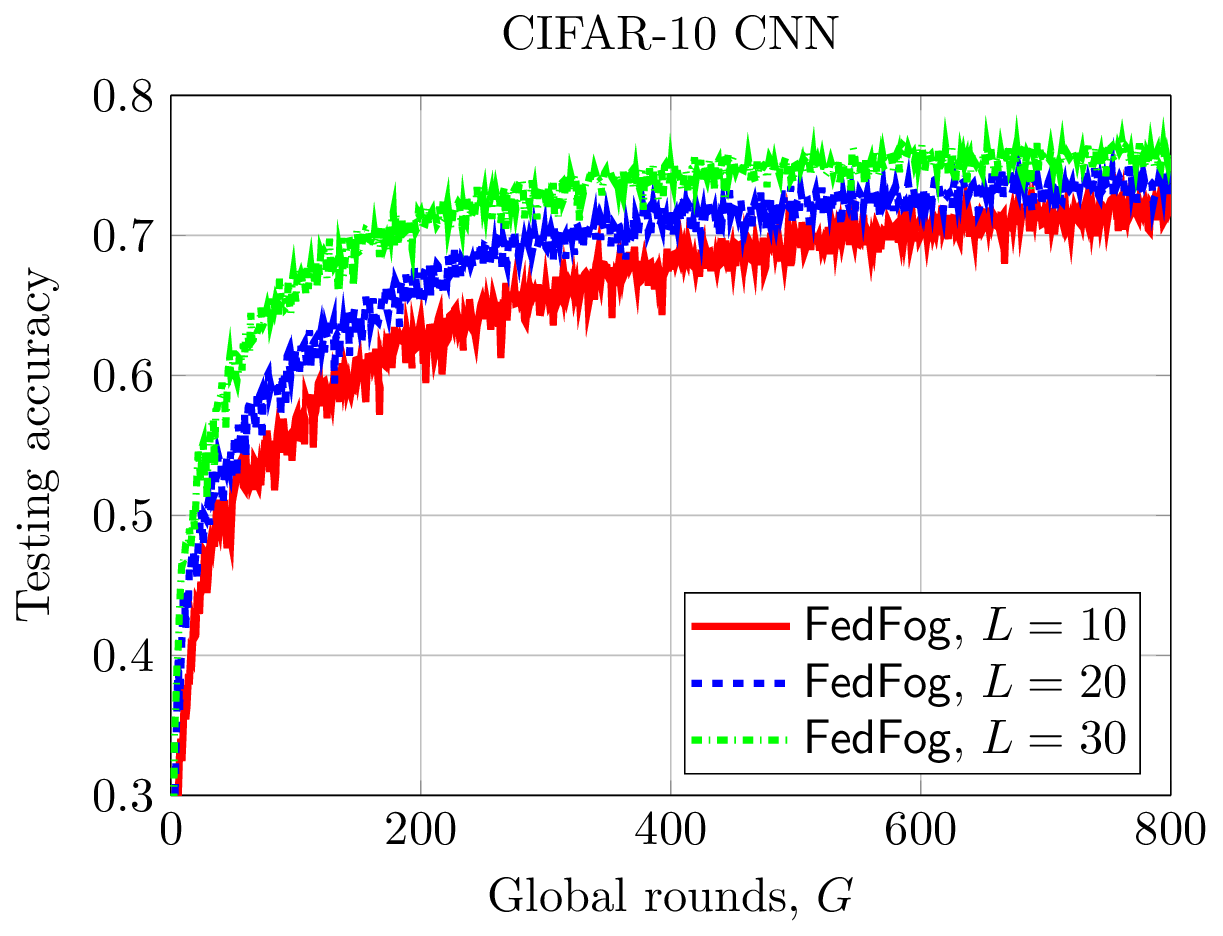}}%
\caption[]{Effect of different numbers of local iterations $L$  on the convergence of $\FedFog$, with $B=20$.}%
\label{fig:Alg1BatchLocalIter}%
\end{figure}

In wireless networks, the communication delay can dominate  computation delay, and therefore  UEs tends to  perform more local updates before sending them to CS, resulting less global model updates.
In Fig. \ref{fig:Alg1BatchLocalIter}, we show the convergence rate of $\FedFog$ with different values of local iterations $L$ and $B=20$. In all settings, the larger $L$ has a positive impact of the convergence speed of $\FedFog$; however, a very large number of local iterations also lead to high computation latency  and divergent convergence. Hence, it is beneficial to choose the appropriate values of $L$ and $B$, which not only boosts the convergence speed but also balances trade-off between  computations and communications. In the following simulations, we set 
$B=20$ and $L=20$.

\subsection{Numerical Results for $\FedFog$ over Wireless Fog-Cloud  Systems (Algorithms \ref{alg:networkawarealg} and  \ref{alg:networkawarealgFUA})}

 \begin{figure}[!ht]%
\centering
\subfigure{%
\label{fig:7-a}%
\includegraphics[height=2.0in]{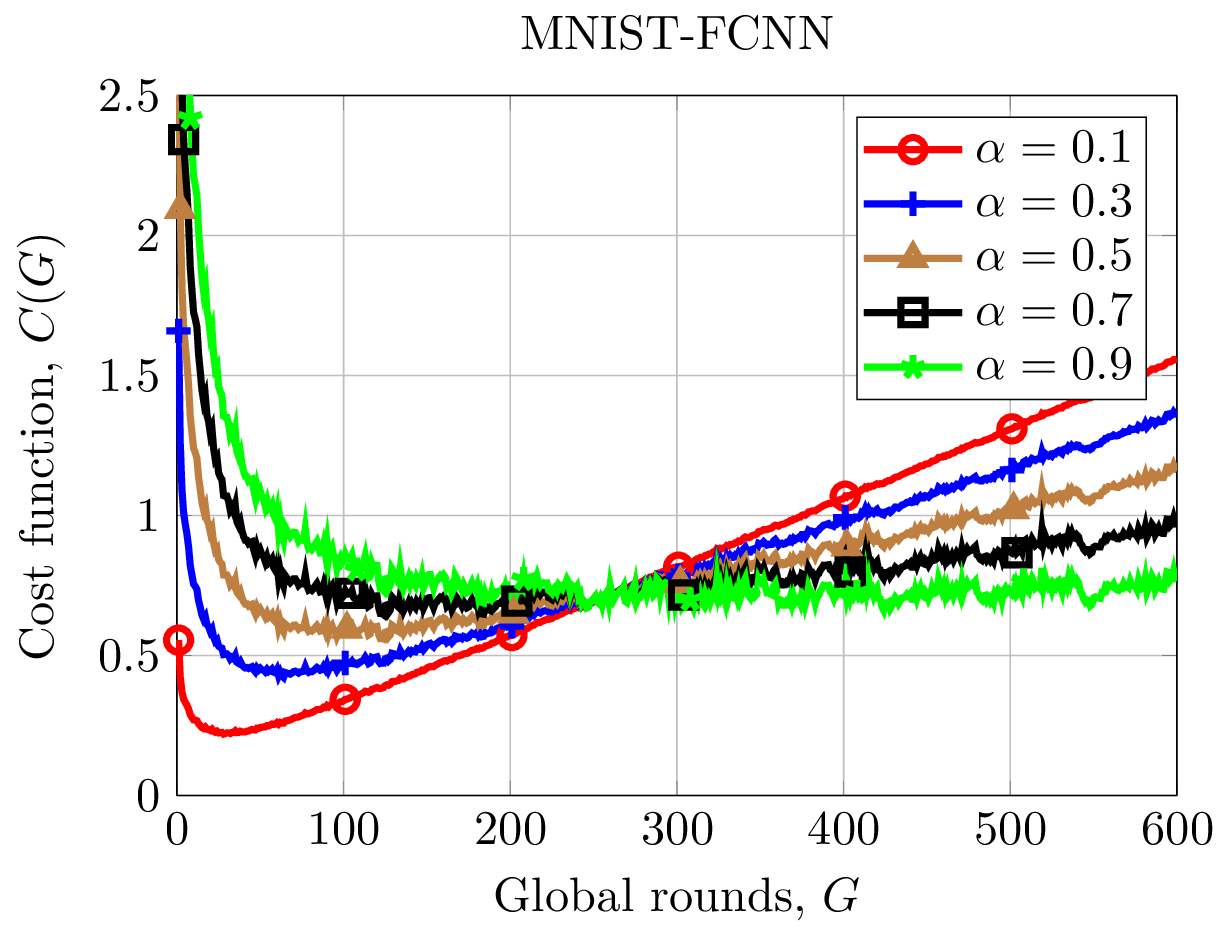}}%
\hspace{4pt}%
\subfigure{%
\label{fig:7-b}%
\includegraphics[height=2.0in]{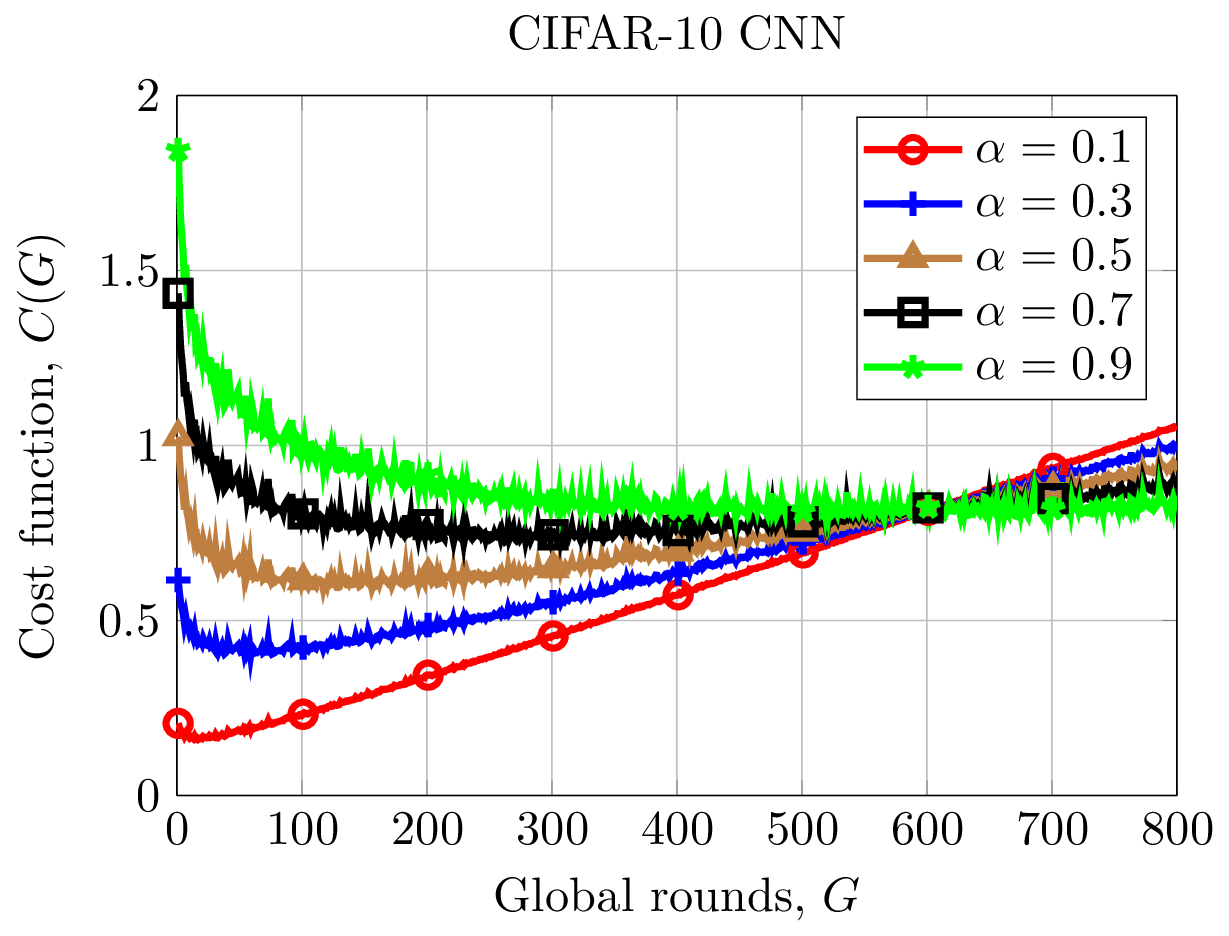}} 
\caption[]{Average $C(G)$ performances of Algorithm \ref{alg:networkawarealg} with different values of $\alpha$.}%
\label{fig:Alg3vsalpha}%
\end{figure}

Fig. \ref{fig:Alg3vsalpha} depicts the average $C(G)$ performances of Algorithm \ref{alg:networkawarealg}  with different values of the priority parameter $\alpha$.
As can be seen from this figure that, with small value of $\alpha$, Algorithm \ref{alg:networkawarealg} obtains a minimum cost function at small value of $G$, which may lead to an improper early termination. {\color{black}The reason is that, when $\alpha$ is small, the completion time, which is an increasing function of $G$,  takes more effect on the cost function than the loss function}. As expected, a larger value of $\alpha$ provides a better  balance between the accuracy of the learning model and the running cost. Therefore, we set $\alpha=0.7$ in the following results.

 \begin{figure}[!ht]%
\centering
\subfigure{%
\label{fig:8-a}%
\includegraphics[height=2.0in]{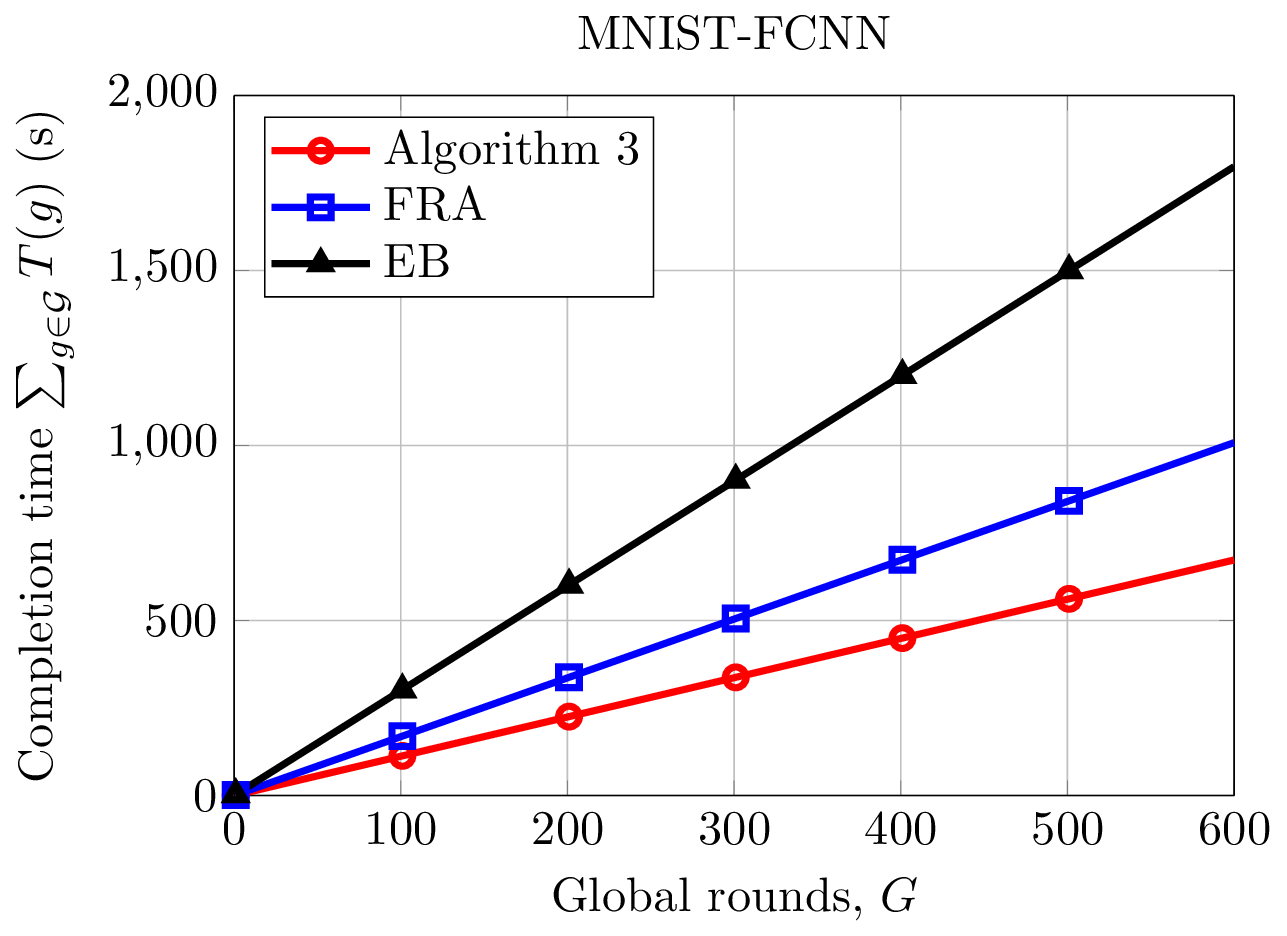}}%
\hspace{4pt}%
\subfigure{%
\label{fig:8-b}%
\includegraphics[height=2.0in]{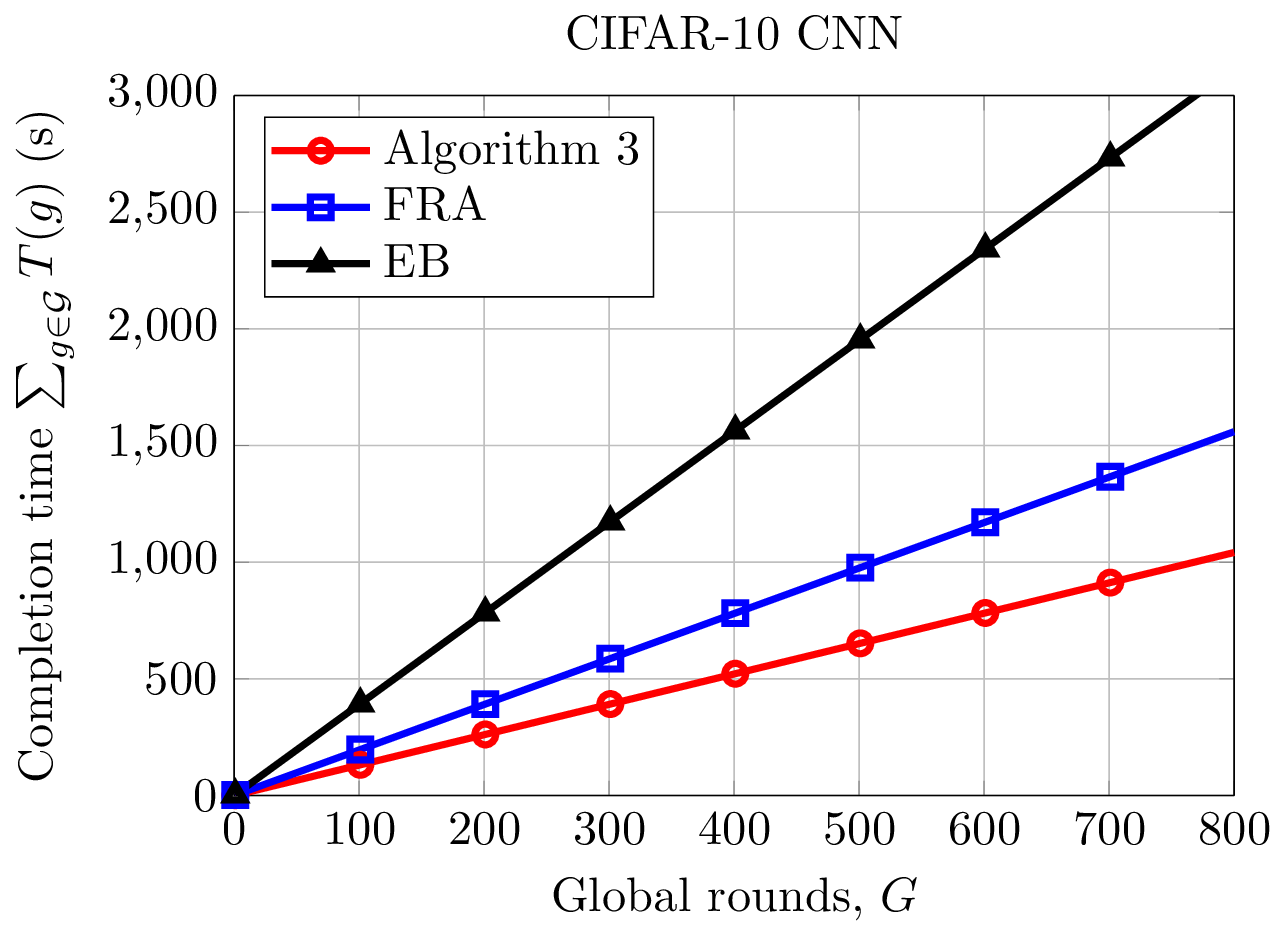}} 
\caption[]{Completion time versus $G$ for different schemes.}%
\label{fig:Alg3CompletionTimevsG}%
\end{figure}

In Fig. \ref{fig:Alg3CompletionTimevsG}, we show the performance comparison in terms of completion times among the considered schemes versus the number of global rounds. Clearly, Algorithm \ref{alg:networkawarealg} outperforms the baseline schemes in all ranges of $G$, which is even  deeper when $G$ is large. The EB, which fairly allocates the fixed bandwidth to UEs (i.e., $\beta_{ij}(g)=1/J,\, \forall i,j,g$), provides the worst performance as the  bandwidth allocated to each UE  has a great impact on both  UL and DL transmission latency, leading to the serious straggler effects. These observations demonstrate the effectiveness of the proposed Algorithm \ref{alg:networkawarealg} by jointly optimizing the transmit power, CPU-frequency and bandwidth. We can also see that the completion time of CIFAR-10 CNN is much higher than that of MNIST-FCNN since the latter has larger sizes (in bits) of data and model training than  the former. 

 \begin{figure}[!ht]%
\centering
\subfigure{%
\label{fig:9-a}%
\includegraphics[height=2.0in]{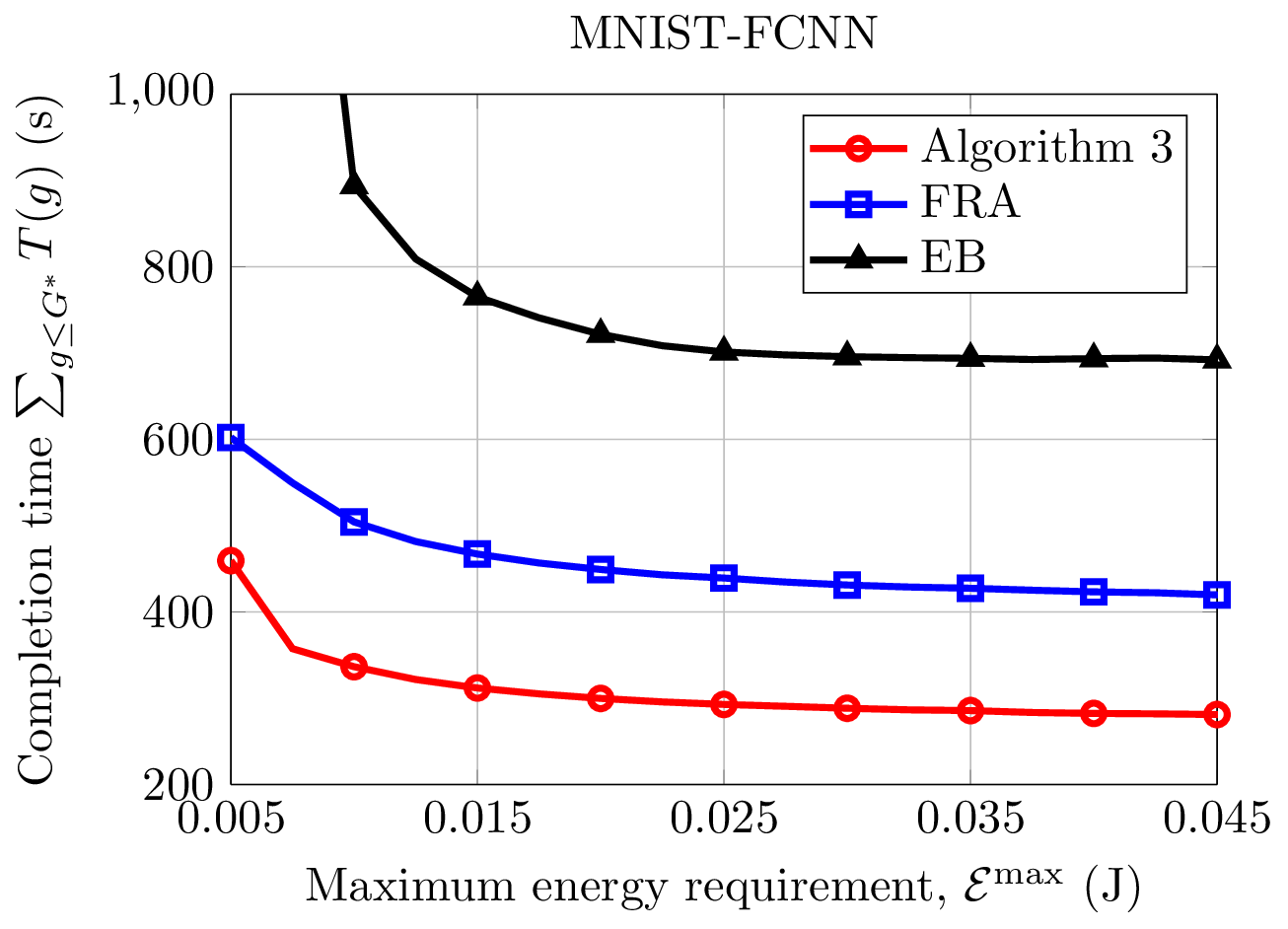}}%
\hspace{4pt}%
\subfigure{%
\label{fig:9-b}%
\includegraphics[height=2.0in]{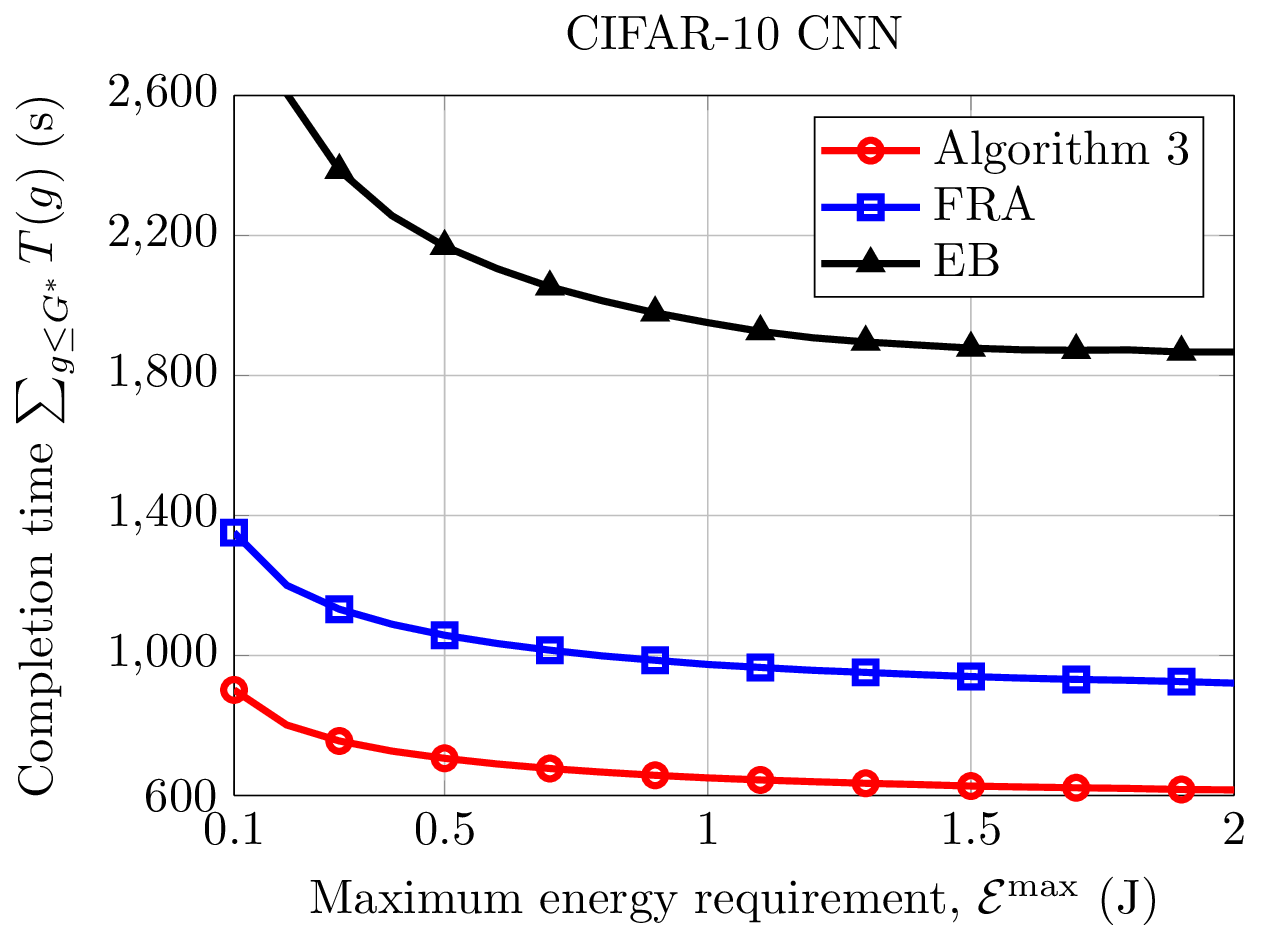}} 
\caption[]{Trade-off between completion time and  maximum  energy  consumption  requirement.}%
\label{fig:Alg3CompletionTimevsEmax}%
\end{figure}

 The impact of the maximum  energy  consumption  requirement, $\mathcal{E}^{\max}$, on the completion time is plotted in Fig. \ref{fig:Alg3CompletionTimevsEmax}, where we set $\bar{k}=5$ for the stopping condition.  Increasing the threshold $\mathcal{E}^{\max}$ results in lower completion times for all schemes. This phenomenon is not surprising because
 with a larger value of $\mathcal{E}^{\max}$, more power and CPU-frequency of UEs can be used for the global training update and local model training subject to constraint \eqref{eq:OP1b}. Again, Algorithm \ref{alg:networkawarealg} still offers the best performance out of the schemes considered.

\begin{figure}[!ht]%
\centering
\subfigure[Algorithm \ref{alg:networkawarealgFUA}: varying $\Delta\mathcal{T}$]{%
\label{fig:10-a}%
\includegraphics[height=2.0in]{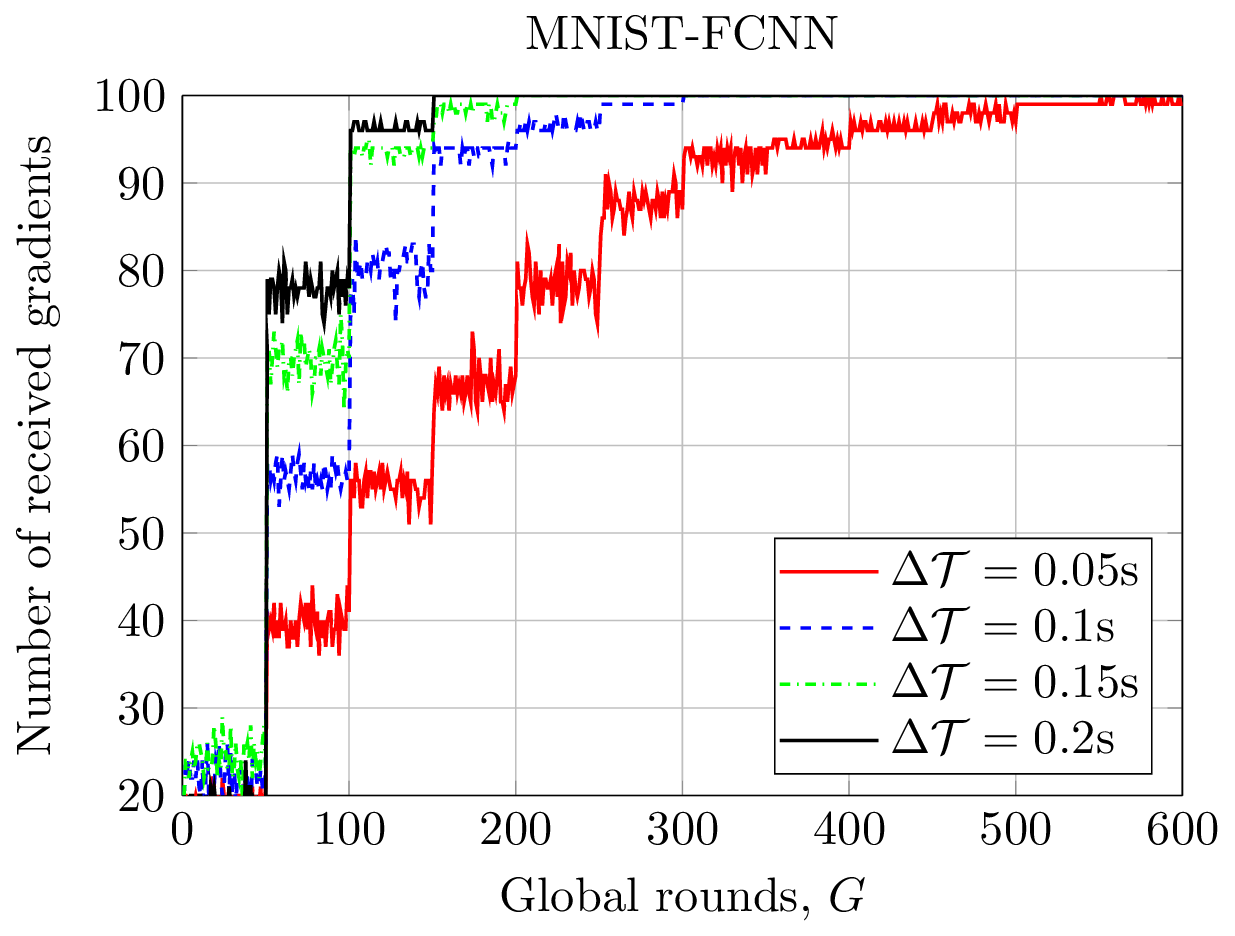}
}%
\hspace{4pt}%
\subfigure[Performance comparison between Algorithm \ref{alg:networkawarealgFUA} and baseline schemes with $\Delta\mathcal{T}=0.15$s]{%
\label{fig:10-b}%
\includegraphics[height=2.0in]{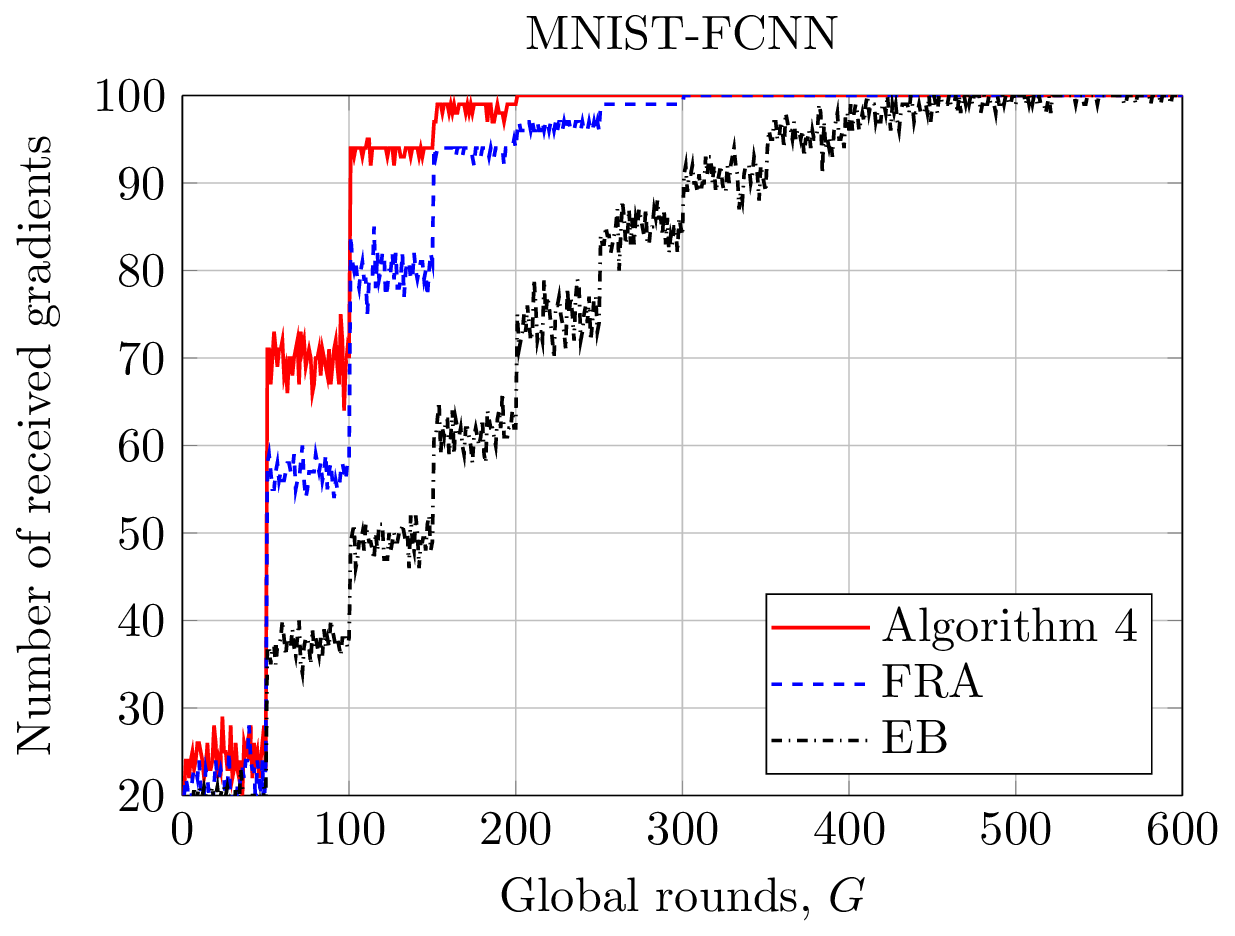}} 
\caption[]{Number of received gradients for flexible user aggregation-based schemes, with $J_{\min}=20$.}%
\label{fig:Alg4vsNoGradient}%
\end{figure}

\begin{figure}[!ht]%
\centering
\subfigure{%
\label{fig:11-a}%
\includegraphics[height=2.0in]{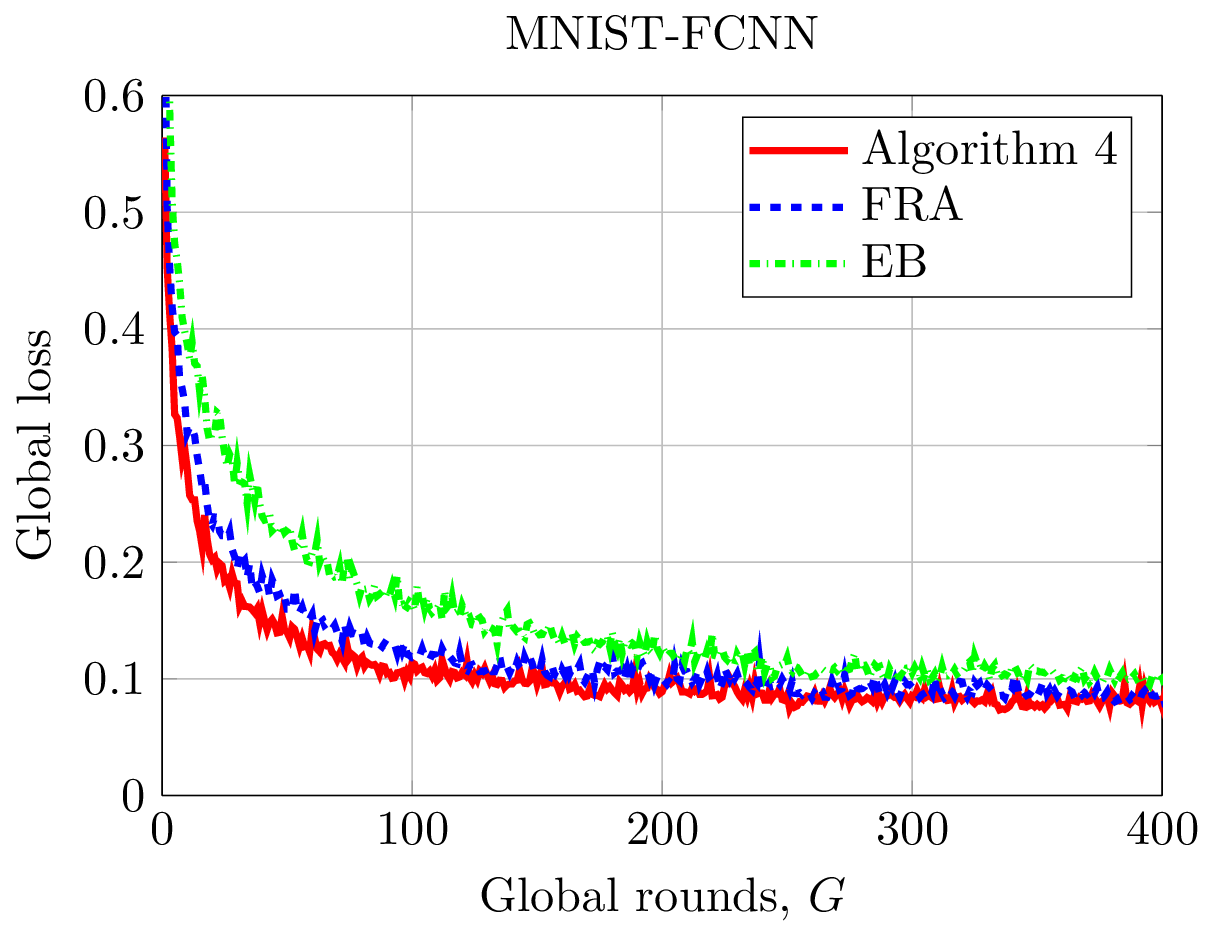}
}%
\hspace{4pt}%
\subfigure{%
\label{fig:11-b}%
\includegraphics[height=2.0in]{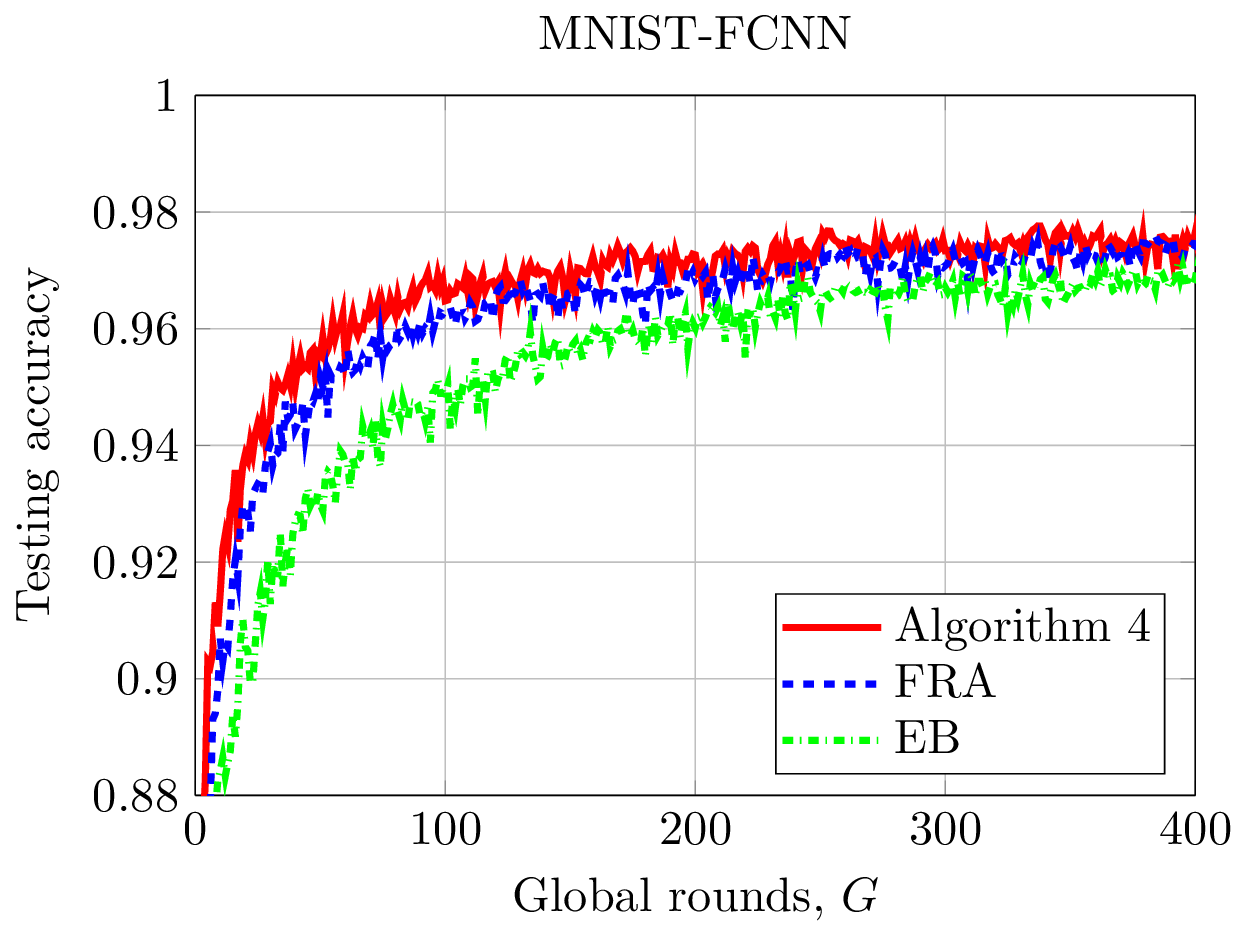}} 
\caption[]{Performance comparison between Algorithm \ref{alg:networkawarealgFUA} and baseline schemes, with $J_{\min}=20$ and $\Delta\mathcal{T}=0.15$s.}
\label{fig:Alg4vsLossAccF11}%
\end{figure}

Fig. \ref{fig:Alg4vsNoGradient} characterizes the number of received gradients for flexible user aggregation-based schemes. We note that it is often not beneficial to completely satisfy the condition \eqref{eq:ACClevel}. Therefore, weaker UEs are allowed to participate in  global updates earlier after a fixed number of global rounds, say $\Delta G$. Here we set $\Delta G=50$, which is numerically shown to  significantly accelerate the convergence rate of $\FedFog$.
 As can be seen in Fig. \ref{fig:Alg4vsNoGradient}(a) that increasing $\Delta\mathcal{T}$ results in higher number of received gradients. However, a large $\Delta\mathcal{T}$ (i.e., $\Delta\mathcal{T}=0.2$s) will not only bring less benefit in terms of the number of received gradients, but also lead to a higher training time. In addition, the results in Fig. \ref{fig:Alg4vsNoGradient}(b) show that Algorithm \ref{alg:networkawarealgFUA} can boost the number of received gradients compared to the baseline schemes with the same completion time. This results in better model training, as demonstrated in Fig. \ref{fig:Alg4vsLossAccF11}. 

\begin{figure}[!ht]%
\centering
\subfigure{%
\label{fig:12-a}%
\includegraphics[height=2.0in]{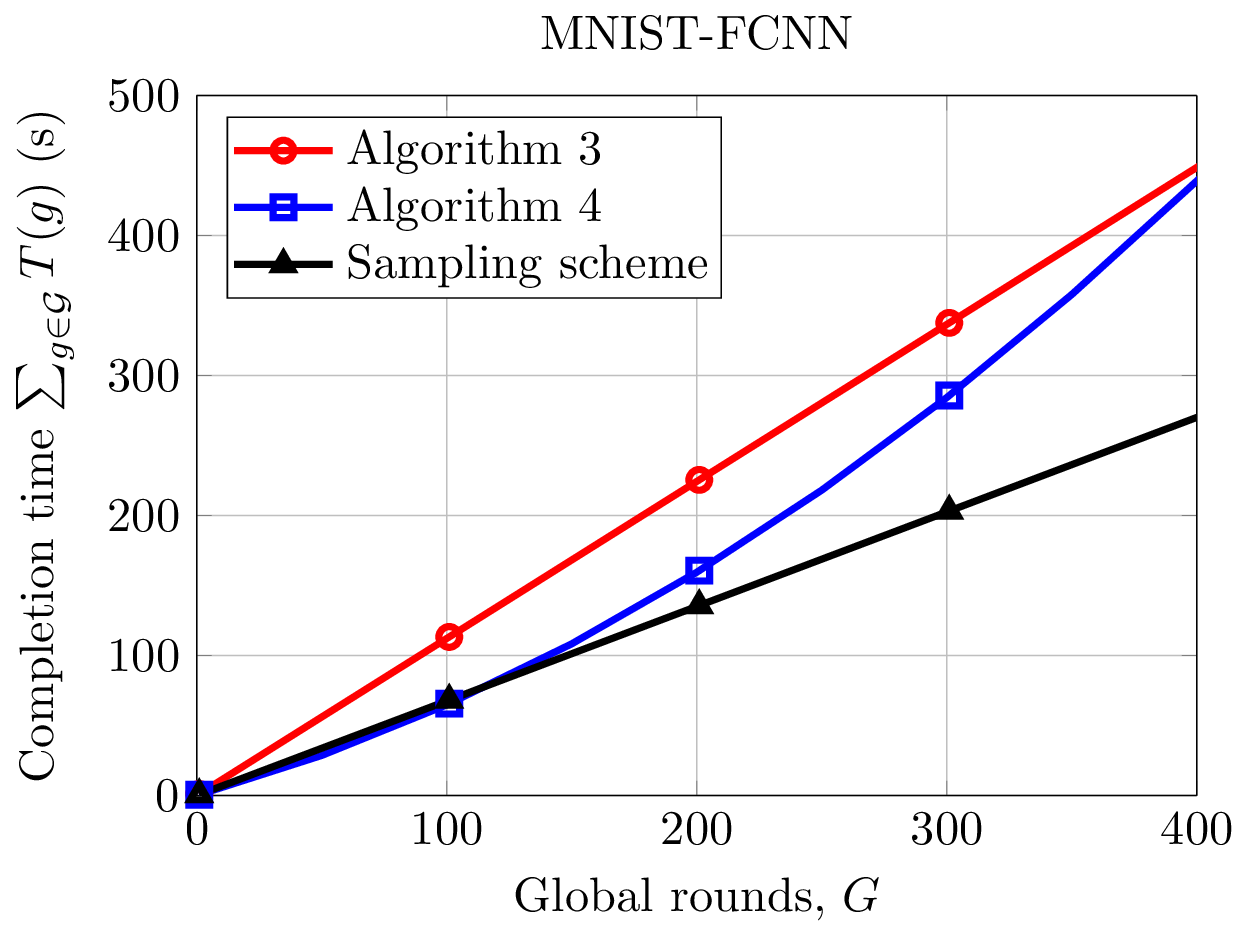}
}%
\hspace{4pt}%
\subfigure{%
\label{fig:12-b}%
\includegraphics[height=2.0in]{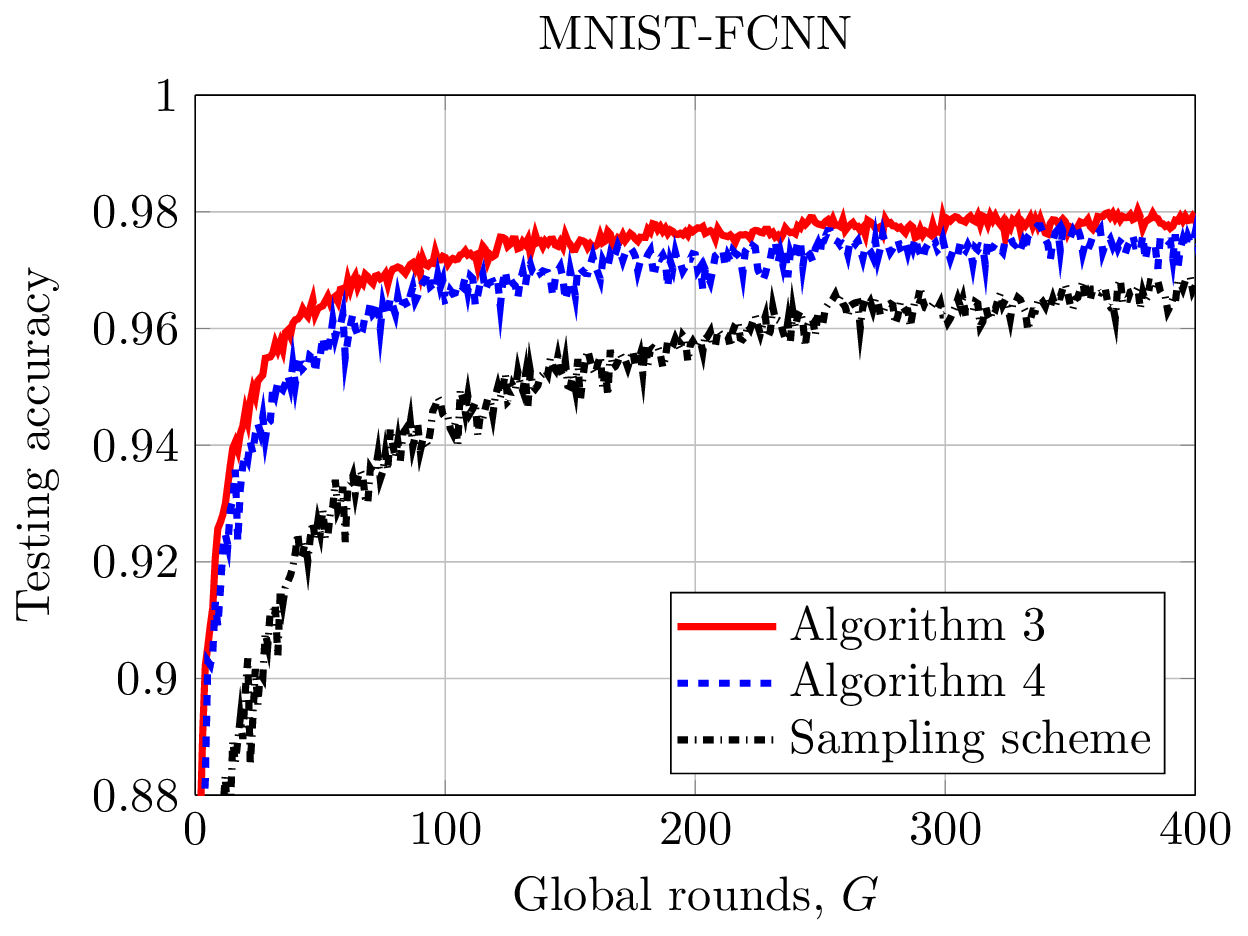}} 
\caption[]{Performance comparison between different schemes, with $J_{\min}=20, J(g)=10,\forall g$ and $\Delta\mathcal{T}=0.15$s.}
\label{fig:Alg4vsLossAccF12}%
\end{figure}

Lastly, we compare the performance between the proposed algorithms and the sampling scheme in Fig. \ref{fig:Alg4vsLossAccF12}, with $J(g) = 10, \forall g$ for the latter. As expected, Algorithm \ref{alg:networkawarealgFUA} requires much less completion time in Fig. \ref{fig:Alg4vsLossAccF12}(a) while still achieving the comparable accuracy of the learning model in Fig.  \ref{fig:Alg4vsLossAccF12}(b), compared to Algorithm \ref{alg:networkawarealg}. On the other hand, the sampling scheme has the lowest completion time due to a small number of UEs participated in the training process, but exhibiting a much slower convergence speed than Algorithms \ref{alg:networkawarealg} and \ref{alg:networkawarealgFUA}. In other words, Algorithm \ref{alg:networkawarealgFUA} offers a good balance  between the quality of learning model and the communication cost.

\section{Conclusion}\label{sec:Conclusion}
In this paper, we proposed the network-aware FL algorithm for wireless fog-cloud systems, a novel methodology for optimizing the distribution of ML tasks across users while tackling inherent issues in fog-cloud scenarios. We characterized the overall running cost of implementing the proposed $\FedFog$ algorithm in discrete time intervals, taking into account the effects of both computation and communication. The cost function of the formulated problem captures  the global loss and overall cost in terms of the training time,  which is used to design the iteration-stopping criteria to  produce  a  desirable  number of  global  rounds. The proposed scheme can avoid the redundant cost with negligible negative impact on the convergence rate and accuracy of the learning model. We also developed a flexible user aggregation strategy to mitigate the straggler effect, resulting in less completion time of $\FedFog$ over wireless fog-cloud systems. Numerical results with popular ML tasks were provided to validate the effectiveness of the network-aware FL algorithm compared to existing baseline approaches. {\color{black}For future works, it would be interesting to develop and implement prototypes to validate the efficiency of $\FedFog$ in real-time environments.}

\appendices
\renewcommand{\thesectiondis}[2]{\Alph{section}:}
\section{Proof of Lemmas \ref{lema:1}-\ref{lema:3}} \label{app: lemma123}
\renewcommand{\theequation}{\ref{app: lemma123}.\arabic{equation}}\setcounter{equation}{0}

\subsection{Proof of Lemma \ref{lema:1}}
 Since the variances of  the stochastic gradients of any two UEs are independent, it follows that
\begin{align}
     &\frac{1}{J^2}\mathbb{E}\Bigr\{\bigr\|\sum_{i\in\mathcal{I}}\sum_{j\in\mathcal{J}_i} \bigl(\nabla F_{ij}(\mathbf{w}^g_{ij,\ell})-\nabla  \bar{F}_{ij}(\mathbf{w}^g_{ij,\ell})\bigl) \bigl\|^2\Bigr\} \nonumber\\
     &=     \frac{1}{J^2}\sum_{i\in\mathcal{I}}\sum_{j\in\mathcal{J}_i}\mathbb{E}\Bigr\{\bigr\| \bigl(\nabla F_{ij}(\mathbf{w}^g_{ij,\ell})-\nabla  \bar{F}_{ij}(\mathbf{w}^g_{ij,\ell})\bigl) \bigl\|^2 \Bigl\}\nonumber\\
         &\qquad + \frac{1}{J^4}\sum_{(i,j)\neq (i',j')}\underbrace{\mathbb{E}\Bigr\{\bigr\| \bigl(\nabla F_{ij}(\mathbf{w}^g_{ij,\ell})-\nabla  \bar{F}_{ij}(\mathbf{w}^g_{ij,\ell})\bigl)\bigl(\nabla F_{i'j'}(\mathbf{w}^g_{i'j',\ell})-\nabla  \bar{F}_{i'j'}(\mathbf{w}^g_{i'j',\ell})\bigl) \bigl\|^2\Bigl\}}_{=\, 0} \nonumber\\
   & \leq \frac{1}{J^2}\sum_{i\in\mathcal{I}}\sum_{j\in\mathcal{J}_i}\gamma_{ij}^2\ (\text{by Assumption \ref{assp:2}}).
\end{align}

\subsection{Proof of Lemma \ref{lema:2}}
 From the definition of $\bar{\mathbf{w}}^g_{\ell}$ there always exists $\ell' \leq \ell,$ that satisfies $\ell-\ell' \leq L-1$. First, it is true that $\| \bar{\mathbf{w}}^g_{\ell} - \mathbf{w}^g_{ij,\ell} \bigl\|^2 = \| (\bar{\mathbf{w}}^g_{\ell} - \bar{\mathbf{w}}^g_{\ell'}) - (\mathbf{w}^g_{ij,\ell} - \bar{\mathbf{w}}^g_{\ell'})\|^2 = \|\bar{\mathbf{w}}^g_{\ell} - \bar{\mathbf{w}}^g_{\ell'}\|^2 - 2\langle\bar{\mathbf{w}}^g_{\ell} - \bar{\mathbf{w}}^g_{\ell'},\mathbf{w}^g_{ij,\ell} - \bar{\mathbf{w}}^g_{\ell'}\rangle + \|\mathbf{w}^g_{ij,\ell} - \bar{\mathbf{w}}^g_{\ell'}\|^2$. In addition, we have $\langle\bar{\mathbf{w}}^g_{\ell} - \bar{\mathbf{w}}^g_{\ell'},\sum_{i\in\mathcal{I}}\sum_{j\in\mathcal{J}_i}\mathbf{w}^g_{ij,\ell} - \bar{\mathbf{w}}^g_{\ell'}\rangle = \|\bar{\mathbf{w}}^g_{\ell} - \bar{\mathbf{w}}^g_{\ell'}\|^2$. 
Thus, it follows that
\begin{align}
 \frac{1}{J}\mathbb{E}\Bigr\{\sum_{i\in\mathcal{I}}\sum_{j\in\mathcal{J}_i}\bigr\| \bar{\mathbf{w}}^g_{\ell} - \mathbf{w}^g_{ij,\ell} \bigl\|^2\Bigr\} &= \frac{1}{J}\sum_{i\in\mathcal{I}}\sum_{j\in\mathcal{J}_i}\mathbb{E}\{\|\mathbf{w}^g_{ij,\ell} - \bar{\mathbf{w}}^g_{\ell'}\|^2\} \bigr] -\frac{1}{J}\mathbb{E}\{\|\bar{\mathbf{w}}^g_{\ell} - \bar{\mathbf{w}}^g_{\ell'}\|^2\} \nonumber\\
 &= \frac{\eta_g^2}{J}\sum_{i\in\mathcal{I}}\sum_{j\in\mathcal{J}_i}\bigl\|\sum_{t=\ell'}^{\ell-1}\nabla F_{ij}(\mathbf{w}^g_{ij,t})\bigr\|^2 - \frac{\eta_g^2}{J}\bigl\|\sum_{i\in\mathcal{I}}\sum_{j\in\mathcal{J}_i}\sum_{t=\ell'}^{\ell-1}\nabla F_{ij}(\mathbf{w}^g_{ij,t})\bigr\|^2\nonumber\\
 &\leq \frac{\eta_g^2}{J}\sum_{i\in\mathcal{I}}\sum_{j\in\mathcal{J}_i}\bigl\|\sum_{t=\ell'}^{\ell-1}\nabla F_{ij}(\mathbf{w}^g_{ij,t})\bigr\|^2
 \leq \frac{\eta_g^2L}{J}\sum_{i\in\mathcal{I}}\sum_{j\in\mathcal{J}_i}\sum_{t=\ell'}^{\ell -1}\bigl
 \|\nabla F_{ij}(\mathbf{w}^g_{ij,t})\bigr\|^2
 \nonumber\\
 &\leq (\ell -\ell')L\eta_g^2\delta^2 \leq (L-1)L\eta_g^2\delta^2\ (\text{due to}\ \ell-\ell' \leq L-1).
 \end{align}

\subsection{Proof of Lemma \ref{lema:3}}
Given $\bar{\mathbf{w}}^g_{\ell+1}=\bar{\mathbf{w}}^g_{\ell} - \eta_g\nabla F(\bar{\mathbf{w}}^g_{\ell})$ and $\nabla \bar{F}(\bar{\mathbf{w}}^g_{\ell}) \triangleq \frac{1}{J}\sum_{i\in\mathcal{I}}\sum_{j\in\mathcal{J}_i}\nabla \bar{F}_{ij}(\mathbf{w}^g_{ij,\ell}) =  \mathbb{E}\{\nabla F(\bar{\mathbf{w}}^g_{\ell})\}$, it follows that
\begin{align}\label{eq:A6}
    \|\bar{\mathbf{w}}^g_{\ell+1} - \mathbf{w}^*\|^2 &= \|\bar{\mathbf{w}}^g_{\ell}  - \mathbf{w}^* - \eta_g\nabla \bar{F}(\bar{\mathbf{w}}^g_{\ell}) + \eta_g(\nabla \bar{F}(\bar{\mathbf{w}}^g_{\ell})-\nabla F(\bar{\mathbf{w}}^g_{\ell}))\|^2 \nonumber\\
   & = \underbrace{\|\bar{\mathbf{w}}^g_{\ell}  - \mathbf{w}^* - \eta_g\nabla \bar{F}(\bar{\mathbf{w}}^g_{\ell})\|^2}_{\triangleq A} + \underbrace{\eta_g^2\|\nabla \bar{F}(\bar{\mathbf{w}}^g_{\ell})-\nabla F(\bar{\mathbf{w}}^g_{\ell})\|^2}_{\triangleq B}
\end{align}
since $\mathbb{E}\{\langle\bar{\mathbf{w}}^g_{\ell}  - \mathbf{w}^* - \eta_g\nabla \bar{F}(\bar{\mathbf{w}}^g_{\ell}), \nabla \bar{F}(\bar{\mathbf{w}}^g_{\ell})-\nabla F(\bar{\mathbf{w}}^g_{\ell}) \rangle\} = 0$. We first focus on the expected bound of $A$ by rewriting it as:
\begin{align}\label{eq:A7}
    A = \underbrace{\|\bar{\mathbf{w}}^g_{\ell}  - \mathbf{w}^*\|^2}_{\triangleq A_1}  \underbrace{-2\eta_g\langle\bar{\mathbf{w}}^g_{\ell}  - \mathbf{w}^*, \nabla \bar{F}(\bar{\mathbf{w}}^g_{\ell})\rangle}_{\triangleq A_2}  \underbrace{+\eta_g^2\|\nabla \bar{F}(\bar{\mathbf{w}}^g_{\ell})\|^2}_{\triangleq A_3}.
\end{align}
For $A_2$ and from Assumption \ref{assp:1} on $\lambda$-strongly convex, we have
{\begin{align}\label{eq:A8}
    &A_2 = -2\eta_g\frac{1}{J}\sum_{i\in\mathcal{I}}\sum_{j\in\mathcal{J}_i}\langle\bar{\mathbf{w}}^g_{\ell}  - \mathbf{w}^*, \nabla \bar{F}_{ij}(\mathbf{w}^g_{ij,\ell})\rangle \nonumber\\
     &=  -2\eta_g\frac{1}{J}\sum_{i\in\mathcal{I}}\sum_{j\in\mathcal{J}_i}\langle\bar{\mathbf{w}}^g_{\ell}  - \mathbf{w}^g_{ij,\ell}, \nabla \bar{F}_{ij}(\mathbf{w}^g_{ij,\ell})\rangle -2\eta_g\frac{1}{J}\sum_{i\in\mathcal{I}}\sum_{j\in\mathcal{J}_i}\langle\mathbf{w}^g_{ij,\ell}  - \mathbf{w}^*, \nabla \bar{F}_{ij}(\mathbf{w}^g_{ij,\ell})\rangle \nonumber\\
   &\leq \frac{1}{J}\sum_{i\in\mathcal{I}}\sum_{j\in\mathcal{J}_i}\Bigl(\|\bar{\mathbf{w}}^g_{\ell}  - \mathbf{w}^g_{ij,\ell}\|^2 + \eta^2_g \|\nabla \bar{F}_{ij}(\mathbf{w}^g_{ij,\ell})\|^2 -2\eta_g \bigl(\bar{F}_{ij}(\mathbf{w}^g_{ij,\ell}) - \bar{F}_{ij}(\mathbf{w}^*) + \frac{\lambda}{2}\|\mathbf{w}^g_{ij,\ell} - \mathbf{w}^*\|^2\bigr)\Bigr).
\end{align}}
For $A_3$ and from Assumption \ref{assp:1} on $\mu-$smooth, we have
{\small\begin{IEEEeqnarray}{rCl}\label{eq:A9}
    A_3 = \eta_g^2\|\frac{1}{J}\sum_{i\in\mathcal{I}}\sum_{j\in\mathcal{J}_i}\nabla \bar{F}_{ij}(\mathbf{w}^g_{ij,\ell})\|^2 \leq \eta_g^2\frac{1}{J}\sum_{i\in\mathcal{I}}\sum_{j\in\mathcal{J}_i}\|\nabla \bar{F}_{ij}(\mathbf{w}^g_{ij,\ell})\|^2  
    \leq \frac{2\mu\eta_g^2}{J}\sum_{i\in\mathcal{I}}\sum_{j\in\mathcal{J}_i}\bigl(\bar{F}_{ij} (\mathbf{w}^g_{ij,\ell}) - F_{ij}^*  \bigr).\qquad\
\end{IEEEeqnarray}}
Substituting \eqref{eq:A8} and \eqref{eq:A9} into \eqref{eq:A7}, it follows that
\begin{align}\label{eq:A10}
    A \leq (1-0.5\lambda \eta_g )\|\bar{\mathbf{w}}^g_{\ell}  - \mathbf{w}^*\|^2 + (1+\lambda\eta_g)\frac{1}{J}\sum_{i\in\mathcal{I}}\sum_{j\in\mathcal{J}_i}\|\bar{\mathbf{w}}^g_{\ell}  - \mathbf{w}^g_{ij,\ell}\|^2 + A_4
\end{align}
where we use the fact that $-\lambda\eta_g\frac{1}{J}\|\mathbf{w}^g_{ij,\ell} - \mathbf{w}^*\|^2 \leq -\lambda\eta_g (0.5\|\bar{\mathbf{w}}^g_{\ell}  + \mathbf{w}^*\|^2  - \sum_{i\in\mathcal{I}}\sum_{j\in\mathcal{J}_i}\|\bar{\mathbf{w}}^g_{\ell}  - \mathbf{w}^g_{ij,\ell}\|^2)$, $\|\nabla \bar{F}_{ij}(\mathbf{w}^g_{ij,\ell})\|^2 \leq 2\mu \bigl(\bar{F}_{ij} (\mathbf{w}^g_{ij,\ell}) - F_{ij}^*  \bigr)$, and $A_4 \triangleq 4\mu\eta_g^2\frac{1}{J}\sum_{i\in\mathcal{I}}\sum_{j\in\mathcal{J}_i}\bigl(\bar{F}_{ij} (\mathbf{w}^g_{ij,\ell}) - F_{ij}^*  \bigr) - 2\eta_g\frac{1}{J}\sum_{i\in\mathcal{I}}\sum_{j\in\mathcal{J}_i} \bigl(\bar{F}_{ij}(\mathbf{w}^g_{ij,\ell}) - \bar{F}_{ij}(\mathbf{w}^*)\bigr)$.

\noindent To bound $A_4$, we define $\bar{\eta}_g=2\eta_g(1-2\mu\eta_g)$ and assume $\eta_g \leq \frac{1}{4\mu}$. Thus, we have $\bar{\eta}_g\in[\eta_g\ 2\eta_g]$ and hence
\begin{align}
    A_4 &= -2\eta_g(1-2\mu\eta_g)\frac{1}{J}\sum_{i\in\mathcal{I}}\sum_{j\in\mathcal{J}_i}\bigl(\bar{F}_{ij} (\mathbf{w}^g_{ij,\ell}) - F_{ij}^*  \bigr) + 2\eta_g\frac{1}{J}\sum_{i\in\mathcal{I}}\sum_{j\in\mathcal{J}_i} \bigl( \bar{F}_{ij}(\mathbf{w}^*) - F_{ij}^*\bigr) \nonumber\\
    &= - \bar{\eta}_g\frac{1}{J}\sum_{i\in\mathcal{I}}\sum_{j\in\mathcal{J}_i}\bigr(\bar{F}_{ij} (\mathbf{w}^g_{ij,\ell}) -\bar{F}_{ij}(\mathbf{w}^*)  \bigr) + (2
    \eta_g-\bar{\eta}_g)\frac{1}{J}\sum_{i\in\mathcal{I}}\sum_{j\in\mathcal{J}_i}\bigl( \bar{F}_{ij}(\mathbf{w}^*) - F_{ij}^*\bigr) \nonumber\\
    &\leq - \bar{\eta}_g\frac{1}{J}\sum_{i\in\mathcal{I}}\sum_{j\in\mathcal{J}_i}\bigr(\bar{F}_{ij} (\mathbf{w}^g_{ij,\ell}) -\bar{F}_{ij}(\mathbf{w}^*)  \bigr) + 4\mu \eta_g^2\frac{1}{J}\sum_{i\in\mathcal{I}}\sum_{j\in\mathcal{J}_i}\varepsilon_{ij} \ (\text{by Definition \ref{def:1}}).
\end{align}
Further, it is noted that
\begin{align}\label{eq:a12}
    \frac{1}{J}\sum_{i\in\mathcal{I}}\sum_{j\in\mathcal{J}_i}\bigr(\bar{F}_{ij} (\mathbf{w}^g_{ij,\ell}) -\bar{F}_{ij}(\mathbf{w}^*)  \bigr) = \frac{1}{J}\sum_{i\in\mathcal{I}}\sum_{j\in\mathcal{J}_i}\bigr(\bar{F}_{ij} (\mathbf{w}^g_{ij,\ell}) - \bar{F}_{ij} (\bar{\mathbf{w}}^g_{\ell}) + \bar{F}_{ij} (\bar{\mathbf{w}}^g_{\ell}) - \bar{F}_{ij}(\mathbf{w}^*)  \bigr).
\end{align}
Applying Assumption \ref{assp:1} and $\|\nabla \bar{F}_{ij}(\mathbf{w}^g_{ij,\ell})\|^2 \leq 2\mu \bigl(\bar{F}_{ij} (\mathbf{w}^g_{ij,\ell}) - F_{ij}^*  \bigr)$ to \eqref{eq:a12}, after some manipulations we can obtain
\begin{align}\label{eq:a13}
    A_4 \leq& \bar{\eta}_g(\eta_g\mu-1)\frac{1}{J}\sum_{i\in\mathcal{I}}\sum_{j\in\mathcal{J}_i}\bigl(\bar{F}_{ij} (\bar{\mathbf{w}}^g_{\ell}) - \bar{F}_{ij}(\mathbf{w}^*) \bigr) \nonumber\\
    &+ \frac{1}{J}\sum_{i\in\mathcal{I}}\sum_{j\in\mathcal{J}_i}\|\mathbf{w}^g_{ij,\ell}-\bar{\mathbf{w}}^g_{\ell}\|^2
     + 6\mu \eta_g^2\frac{1}{J}\sum_{i\in\mathcal{I}}\sum_{j\in\mathcal{J}_i}\varepsilon_{ij}.
\end{align}
Substituting \eqref{eq:A10} and \eqref{eq:a13} into \eqref{eq:A6}, we have
\begin{align}\label{eq:A14}
    \|\bar{\mathbf{w}}^g_{\ell+1} - \mathbf{w}^*\|^2  \leq
     (1-0.5\lambda \eta_g )\|\bar{\mathbf{w}}^g_{\ell}  - \mathbf{w}^*\|^2 
   + \eta_g^2\Omega^g_{\ell} + 2\eta_g\frac{1}{J}\sum_{i\in\mathcal{I}}\sum_{j\in\mathcal{J}_i}\bigl(\bar{F}_{ij}(\mathbf{w}^*)-\bar{F}_{ij} (\bar{\mathbf{w}}^g_{\ell})  \bigr)
\end{align}
due to $\eta_g \leq 1/4\mu$ and  $\bar{\eta}_g(1-\eta_g\mu) \leq 2\eta_g$,   where $\Omega^g_{\ell}\triangleq \frac{(2+\lambda/4\mu)}{\eta_g^2}\frac{1}{J}\sum_{i\in\mathcal{I}}\sum_{j\in\mathcal{J}_i}\|\bar{\mathbf{w}}^g_{\ell}  - \mathbf{w}^g_{ij,\ell}\|^2+\|\nabla \bar{F}(\bar{\mathbf{w}}^g_{\ell})-\nabla F(\bar{\mathbf{w}}^g_{\ell})\|^2 +  6\mu \frac{1}{J}\sum_{i\in\mathcal{I}}\sum_{j\in\mathcal{J}_i}\varepsilon_{ij}.$ By Lemmas \ref{lema:1} and \ref{lema:2}, the expectation of $\Omega^g_{\ell}$ is $\bar{\Omega}^g_{\ell}=\mathbb{E}\{\Omega^g_{\ell}\}= (2+\lambda/4\mu)(L-1)L\delta^2 + \frac{\sum_{i\in\mathcal{I}}\sum_{j\in\mathcal{J}_i}\gamma_{ij}^2}{J^2} + 6\mu \frac{1}{J}\sum_{i\in\mathcal{I}}\sum_{j\in\mathcal{J}_i}\varepsilon_{ij}.$

\section{Proof of Theorem \ref{theo:1}} \label{app: theo1}
\renewcommand{\theequation}{\ref{app: theo1}.\arabic{equation}}\setcounter{equation}{0}
Similar to \cite{ruan2020flexible,LiICLR2020}, let us define $Q^g_{\ell}\triangleq\|\bar{\mathbf{w}}^g_{\ell} - \mathbf{w}^*\|^2$ and $\bar{Q}^g_{\ell}=\mathbb{E}\{Q^g_{\ell}\}$. From \eqref{eq:A14}, we have
\begin{align}\label{eq:B1}
 \sum_{\ell=1}^LQ^g_{\ell} \leq
     \sum_{\ell=0}^{L-1} (1-0.5\lambda \eta_g )Q^g_{\ell} 
   + \eta_g^2\sum_{\ell=0}^{L-1}\Omega^g_{\ell} + 2\eta_g\frac{1}{J}\sum_{i\in\mathcal{I}}\sum_{j\in\mathcal{J}_i}\bigl(\bar{F}_{ij}(\mathbf{w}^*)-\bar{F}_{ij} (\bar{\mathbf{w}}^g_{\ell'})  \bigr)
\end{align}
where $\bar{\mathbf{w}}^g_{\ell'}$ is a minimizer of $\frac{1}{J}\sum_{i\in\mathcal{I}}\sum_{j\in\mathcal{J}_i}\bar{F}_{ij} (\bar{\mathbf{w}}^g_{\ell})$, leading to $\sum_{i\in\mathcal{I}}\sum_{j\in\mathcal{J}_i}\mathbb{E}\{\bar{F}_{ij}(\mathbf{w}^*)-\bar{F}_{ij} (\bar{\mathbf{w}}^g_{\ell'})\}\leq 0$. By $\Omega^g\triangleq \sum_{\ell=0}^{L-1}\Omega^g_{\ell}$, we rewrite \eqref{eq:B1}  as
\begin{align}\label{eq:B2}
 Q^{g+1} \leq Q^{g}    - 0.5 \lambda \eta_g\sum\nolimits_{\ell=0}^{L-1}Q^g_{\ell} 
   + \eta_g^2\Omega^g.
\end{align}	
In addition, $\|\bar{\mathbf{w}}^g_{\ell +1} - \mathbf{w}^*\| \leq \|\bar{\mathbf{w}}^g_{\ell} - \mathbf{w}^*\| + \eta_g\|\nabla F(\bar{\mathbf{w}}^g_{\ell})\|$ or $\|\bar{\mathbf{w}}^{g+1} - \mathbf{w}^*\| \leq \|\bar{\mathbf{w}}^g_{\ell} - \mathbf{w}^*\| + \eta_g\sum_{k=\ell}^{L-1}\|\nabla F(\bar{\mathbf{w}}^g_{k})\|$. As a result, we can show that $Q^g_{\ell} \leq 0.5Q^{g+1}-\eta_g^2(\sum_{k=0}^{L-1}\|\nabla F(\bar{\mathbf{w}}^g_{k})\|)^2$. Hence, \eqref{eq:B2} is rewritten as
\begin{align}\label{eq:B3}
 (1+0.25\lambda\eta_g L)Q^{g+1} \leq Q^{g}    + 0.5 \lambda \eta_g^3 L \bigr(\sum\nolimits_{k=0}^{L-1}\|\nabla F(\bar{\mathbf{w}}^g_{k})\|\bigl)^2
   + \eta_g^2\Omega^g.
\end{align}	
Assuming the learning rate $\eta_g\leq 4/\lambda L$ and taking the expectation of both sides of \eqref{eq:B3}, we have  
\begin{align}\label{eq:B4}
 \bar{Q}^{g+1} \leq \bigl(1-\frac{\eta_g\lambda L}{8}\bigr) \bar{Q}^{g}    + \eta_g^2\bigl(2\bar{\Phi}^g + \bar{\Omega}^g \bigr) 
\end{align}	
where $\bar{\Phi}^g\triangleq\mathbb{E}\{\bigr(\sum\nolimits_{k=0}^{L-1}\|\nabla F(\bar{\mathbf{w}}^g_{k})\|\bigl)^2\} \leq L\sum\nolimits_{k=0}^{L-1}\mathbb{E}\{\|\frac{1}{J}\sum_{i\in\mathcal{I}}\sum_{j\in\mathcal{J}_i}\nabla F_{ij}(\mathbf{w}^g_{ij,\ell})\|^2\} \leq L^2\delta^2$ by Cauchy–Schwarz inequality and Assumption \ref{assp:3}, and $\bar{\Omega}^g =  (2+\lambda/4\mu)(L-1)L\delta^2 + \frac{L\sum_{i\in\mathcal{I}}\sum_{j\in\mathcal{J}_i}\gamma_{ij}^2}{J^2} + 6\mu L \frac{1}{J}\sum_{i\in\mathcal{I}}\sum_{j\in\mathcal{J}_i}\varepsilon_{ij}$.

We  consider a diminishing learning rate $\eta_g = \frac{16}{\lambda(g+1+\psi)}$ with $\psi = \max\bigl\{\frac{64\mu}{\lambda},\ 4L \bigr\} >0$, satisfying $\eta_1 \leq \min\{\frac{1}{4\mu},\ \frac{4}{\lambda L}\}$. By defining $\Psi^g \triangleq \max\bigl\{\psi^2\mathbb{E}\{\|\mathbf{w}^0 - \mathbf{w}^*\|^2\}, \frac{16^2}{\lambda^2}g\Theta \bigr\}$ for $\Theta\triangleq 2L^2\delta^2+(2+\lambda/4\mu)(L-1)L\delta^2 + \frac{L\sum_{i\in\mathcal{I}}\sum_{j\in\mathcal{J}_i}\gamma_{ij}^2}{J^2} + 6\mu L \frac{1}{J}\sum_{i\in\mathcal{I}}\sum_{j\in\mathcal{J}_i}\varepsilon_{ij}$, we next will prove that $\bar{Q}^{g} \leq \frac{\Psi^g}{(g+\psi)^2}$, which is done by induction. It follows that
\begin{align}
   \bar{Q}^{g+1} &\leq \bigl(1-\frac{16}{\lambda(g+1+\psi)}\frac{\lambda L}{8}\bigr) \frac{\Psi^g}{(g+\psi)^2}   + \bigl(\frac{16}{\lambda(g+1+\psi)}\bigr)^2\Theta \nonumber\\
   & \leq \frac{g+\psi-L}{(g+\psi-L)(g+\psi+L)}\frac{\Psi^g}{g+1+\psi}  + \frac{1}{(g+1+\psi)^2}\frac{16^2}{\lambda^2}\Theta\ (\text{due to}\  1-L \leq 0)\nonumber\\
   &= \frac{1}{(g+1+\psi)^2}\bigl(\Psi^g+\frac{16^2}{\lambda^2}\Theta\bigr) =\frac{\Psi^{g+1}}{(g+1+\psi)^2}
\end{align}
where $\Psi^g \triangleq \max\bigl\{\psi^2\mathbb{E}\{\|\mathbf{w}^0 - \mathbf{w}^*\|^2\}, \frac{16^2}{\lambda^2}(g+1)\Theta \bigr\}$.

\begingroup
\setstretch{1.10}
\bibliographystyle{IEEEtran}
\bibliography{IEEEfull}
\endgroup

\end{document}